\documentclass[12pt, a4paper]{article}

\usepackage{fontspec}
\usepackage{amsmath, amssymb, amsthm}
\usepackage{geometry}
\usepackage{array}
\usepackage{booktabs}
\usepackage{graphicx}
\usepackage{hyperref}

\usepackage{polyglossia}
\setmainlanguage{english}
\setotherlanguage{arabic}
\newfontfamily\arabicfont[Script=Arabic]{Amiri-Regular.ttf} 

\theoremstyle{plain}
\newtheorem{theorem}{Theorem}
\newtheorem{lemma}{Lemma}
\newtheorem{corollary}{Corollary}

\newtheorem{axiom}{Axiom}   

\theoremstyle{definition}
\newtheorem{definition}{Definition}  
\newtheorem{remark}{Remark}          
\newtheorem{example}{Example}        
\newtheorem{principle}{Principle}            
\newtheorem{justification}{Justification}    

\usepackage{authblk}

\title{The No-Meaning Falsity: The Structural Impossibility of the Arbitrary Sign in Classical Arabic}

\author{Elnaserledinellah Mahmoud Abdelwahab}
\affil{\texttt{elnaser@gridsat.io}}

\date{\today}

\begin{document}

\maketitle

\begin{abstract}
This paper investigates whether the postmodern claim of unrestricted semantic indeterminacy, and its foundational Saussurean axiom of the “arbitrary sign,” are compatible with the structural architecture of Classical Arabic. We develop a formal mathematical model of Arabic non-concatenative morphology in which lexical meaning is determined by the interaction between an invariant root and a morphosyntactic pattern. Within this framework, we establish a Morphological Correspondence Theorem, demonstrating that every lexical item is uniquely generated by a root-pattern pair, and a Semantic Localization Theorem, proving that lexical meaning is determined at the derivational level prior to surface realization. We then provide the precise mathematical formalization missing from structuralist theory by defining ``absolute arbitrariness'' as a zero-tolerance absence of any systematic form-meaning mapping. Using this definition, we prove a Structural Impossibility Theorem: given the root-pattern architecture and the phonosemantic motivation documented by Ibn Jinni, Arabic contains no absolutely arbitrary \textit{signifier} at the surface (Level W) or morphemic (Level M) levels. To address Saussure’s weaker notion of relative arbitrariness, we formalize it via conditional Kolmogorov complexity, defining arbitrariness algorithmically as the ``no-rule'' property.We prove that general relative arbitrariness is formally undecidable, while Arabic relative arbitrariness is decidable and provably less than 1 for its motivated signifiers (Levels W and M), establishing a strict system-complexity asymmetry over Indo-European languages.
 We further introduce a three-level framework for classifying arbitrariness: Level W (word-level), Level M (sub-word/morpheme-level), and Level P (sub-root/phoneme-level). We show that while Indo-European languages are arbitrary at Level W, Arabic is motivated at Levels W and M, pushing conventionality down to Level P—a sub-morphemic, pre-signifier matrix deeper than anything structuralism regards as a ``sign.'' This establishes a profound epistemological asymmetry: the universal structuralist claim of strong arbitrariness is formally undecidable and scientifically vacuous, while this paper’s claims about Arabic are decidable, verifiable, and falsifiable. Consequently, Arabic operates as a motivated, structurally locked system, imposing mathematical ceilings on Derridean différance and proving that theories of meaning formulated from concatenative Indo-European languages cannot be applied unchanged to Arabic morphology.
\end{abstract}

\section{Introduction: The Epistemological Crisis of Eurocentric Theory}

The postmodern assertion that texts possess no intrinsic meaning and can be parsed
in infinitely arbitrary ways has been treated as a universal law of language. Derived
largely from French deconstruction and structuralist linguistics, this framework treats
words as weightless, fluid tokens drifting endlessly on the whims of cultural context. This
theoretical model is built upon a fundamental assumption: that the relationship between
the signifier (the word) and the signified (the concept) is essentially arbitrary [12].
Western literary theory, emerging from Indo-European languages that rely on concatenative morphology, 
treats the word as an atomic, indivisible unit. Over time, the
relationship between this atomic signifier and its signified becomes historically contingent, vulnerable to semantic bleaching, and subject to Jacques Derrida's concept of différance—the idea that meaning is perpetually deferred along an endless chain of signifiers
[6].
However, before any formal analysis can proceed, a more fundamental epistemological
problem must be confronted. Ferdinand de Saussure elevated the arbitrariness of the sign
to the status of a "first principle" of linguistics, declaring it the foundation upon which
the entire edifice of linguistic science rests [12]. 
Yet, crucially, Saussure never provided a formal, operational definition of what constitutes this arbitrariness. The principle is stated as a philosophical intuition—a negative characterization ("there is no natural
link")—rather than a precise, testable criterion. 
This omission is not a minor oversight; it renders the principle empirically unassailable. A universal claim that lacks formal criteria for verification or falsification stands refuted epistemologically before any formal or empirical counterexample can even be adduced. One cannot test a hypothesis whose core predicate is undefined.

This formal vacuum is all the more striking when situated historically. Saussure's theorizing predates the formalization revolution in linguistics inaugurated by Noam Chomsky in the mid-twentieth century, during which structural and generative properties of language were subjected to rigorous mathematical scrutiny. Yet, despite the proliferation of formal models in generative grammar, computational linguistics, and information
theory, Western philosophy of language failed to formalize its own foundational axiom.

This failure stands in stark contrast to the mathematical formalization of analogous concepts in other sciences. Unpredictability, disorder, and arbitrariness have been formally captured in physics (thermodynamic entropy), mathematics (Kolmogorov complexity [9]), and computer science (Shannon entropy [13]), allowing precise measurement, empirical testing, and rigorous theoretical development. 
The "arbitrariness of the sign," by contrast, remains a metaphysical assertion immune to empirical challenge—a dogma rather than a hypothesis.
This paper argues that the application of this formally undefined framework to Classical Arabic constitutes an epistemological overreach of the highest order. Classical Arabic is built upon a highly structured, non-concatenative matrix where meaning is mathematically calculated at the structural level. 
Language in this framework is governed by the absolute interplay between an immutable root (jazr) which carries the
permanent semantic DNA of a word—and a precise functional pattern (wazn) that
dictates its grammatical geometry. To apply flat-surface, arbitrary-sign theories to such a mathematically consistent, crystalline language is to ignore the physical rules of morphology. It substitutes a localized European intellectual paradigm for a universal law without empirically testing it against structurally distinct data. 

To formally refute this, we must first provide the very formalization that is missing.

We model Arabic morphology mathematically, then define the concept of "absolute arbitrariness" rigorously, and finally demonstrate the structural impossibility of the arbitrary sign in Arabic. We then introduce Kolmogorov complexity to formalize Saussure's weaker notion of relative arbitrariness, proving that the general case is undecidable while Arabic provides a decidable counterexample. 

We also introduce a three-level framework for classifying arbitrariness in natural languages—Level W (word-level), Level M (sub-word/morpheme-level), and Level P (sub-morpheme/phoneme-level)—and show that Arabic is
motivated at Levels W and M, with arbitrariness confined to Level P, two levels deeper
than Indo-European languages.

To motivate this embedded-levels architecture it is sufficient to notice that in English (as well as related indo-European languages) the surface structure of the majority of words, like "dog", "cat", "bird" etc is unmotivated, i.e. there are no morphemes holding meaning particles which can explain why such specific structure is linked to that specific meaning. It is true that some words, like "reader" or "teacher" or "driver" for example, possess morphemes (here: -er) which are linked to specific meaning-roles (here: agent) but such words are the exception, not the rule. Such words were recognized by Saussure himself to possess a relative arbitrary relation between signifier and signified. On the other hand : The majority of Arabic words are formed from concrete and well-known root- and pattern-morphemes, which contribute both to forming the exact meaning of a word. Take the Arabic word for "dog" for example, which is "kalb". Its root is "k-l-b" which can also be used in many other patterns to form related meanings : "takalub" means "to crave after" (like a dog usually craves after a prey or a toy), "kallabat" means "clutches" (like a dog usually clings to things it takes in its mouth), etc. You dont expect to use in English instead of the word "clutch-es" a word like "dogg-es", but Arabic allows exactly that, because there is the root "k-l-b". It is thus fair to say that most words in English possess W-level arbitrarness, while most words of Arabic dont. 

As with any formalism, which is always only an approximation of a studied phenomenon, our own idealizes this above general rule and ignores existing exceptions in both languages. 

A further crucial distinction must be drawn at the outset: there are at least two distinct modes of defining 'arbitrariness' of the relation between signifier and signified—an abstract, absolute mode and an algorithmic, complexity-theoretic mode.

Saussure's foundational claim that the sign is 'absolutely arbitrary' posits an ontological universal negative, asserting that no inherent natural link exists between a physical signifier and its concept. 
Conversely, his secondary concept of 'relative arbitrariness' implies a spectrum of motivation that aligns naturally with algorithmic information theory, where the structure of a signifier can be compressed
by the rules of the linguistic system. 

Understanding this is essential to resolve the pathology of contradiction between this paper's refutation of absolute arbitrariness in Arabic and its subsequent classification of Arabic as arbitrary at Level P. 

First, the ontological definition evaluates the single primitive signifier in isolation, proving that
its internal phonological permutations structurally motivate its meaning in several empirically shown cases, which is sufficient to disprove the strong formulation of absolute arbitrariness; Level-P arbitrariness, conversely, evaluates the macro-lexical mapping of the entire root system, acknowledging that the brute assignment of a specific consonant cluster to a semantic core has no internal rule. 

Second, absolute arbitrariness is an unfalsifiable ontological dogma, whereas Level-P arbitrariness is an algorithmic measure based on Kolmogorov complexity that is strictly relative to the known formal morphological system M. 

Therefore, asserting Level-P arbitrariness does not contradict the absence of absolute arbitrariness; it merely
signifies that while individual Arabic signs are motivated, the macro-assignment of those signs is currently incompressible given known linguistic rules—an epistemologically humble stance that leaves open the possibility for a more powerful formal system to one day render even Level P structurally motivated.

The formal proof, when combined with the prior epistemological critique, closes both the empirical and logical loopholes that have historically protected structuralist theory
from falsification.

\section{Mathematical Formalization of Arabic Morphology}

To rigorously analyze the locus of meaning in Arabic, we must first formalize its morphological components.

\begin{axiom}[The Root Space]
Let \(\mathcal{R}\) be the set of all valid triliteral/quadrilateral roots. A root \(R \in \mathcal{R}\) maps to a fixed semantic field (domain of core meaning) denoted by \(\sigma_R(R)\).
\[ \sigma_R : \mathcal{R} \to S_R \tag{1} \]
\end{axiom}

\begin{axiom}[The Pattern Space]
Let \(\mathcal{P}\) be the set of all valid morphosyntactic patterns (\textit{Awzan} \textarabic{أوزان}). A pattern \(P \in \mathcal{P}\) maps to a structural, aspectual, or functional modification denoted by \(\sigma_P(P)\).
\[ \sigma_P : \mathcal{P} \to S_P \tag{2} \]
\end{axiom}

\begin{definition}[Arabic Morphological System]
An Arabic morphological system is a quadruple \(\mathcal{M} = \langle \mathcal{R}, \mathcal{P}, \iota, \rho \rangle\) where:
\begin{itemize}
    \item \(\mathcal{R}\) is the set of roots;
    \item \(\mathcal{P}\) is the set of morphological patterns;
    \item \(\iota : \mathcal{R} \times \mathcal{P} \to \Sigma^{*}\) is the interdigitation operator;
    \item \(\rho : \Sigma^{*} \to \Sigma^{*}\) is the phonological rewriting operator.
\end{itemize}
The morphology operator is defined by \(\mathcal{M} = \rho \circ \iota\).
\end{definition}

The formalization of non-concatenative morphology has been previously attempted in Western linguistics, notably through autosegmental frameworks [11], though these models stop short of algorithmic information theory.

\begin{definition}[Image of the Morphology Operator]
The image of the morphology operator, denoted \(\operatorname{Im}(\mathcal{M})\), is the set of all valid surface strings generated by the system:
\[ \operatorname{Im}(\mathcal{M}) = \{w \in \Sigma^{*} \mid \exists (R,P) \in \mathcal{R} \times \mathcal{P}, \; \mathcal{M}(R,P) = w\} \tag{3} \]
\end{definition}

\begin{definition}[Morphological Language]
The language generated by \(\mathcal{M}\) is defined as the image of the morphology operator, such that \(\mathcal{L}_{\mathcal{M}} = \operatorname{Im}(\mathcal{M})\).
\end{definition}

\begin{definition}[Semantic Assignment]
The semantic interpretation of every generated word is defined by:
\[ \mu(\mathcal{M}(R,P)) = F(\sigma_R(R), \sigma_P(P)) \]
where \(F : S_R \times S_P \to S\) is the semantic composition operator.
\end{definition}

\begin{definition}[Strong Surface Semantic Assignment]
A language satisfies the Strong Surface Semantic Assignment hypothesis (SSSA) iff lexical meaning is assigned directly to surface words. Equivalently, every newly generated lexical item receives its lexical meaning only after the corresponding surface word has been formed.
\end{definition}

\subsection{Formal Semantics and Structural Correspondence}

Using the formal definitions, we establish two critical theorems regarding the architecture of the Arabic language.

\begin{theorem}[Morphological Correspondence]
Let \(\mathcal{M} = \langle \mathcal{R}, \mathcal{P}, \iota, \rho \rangle\) be an Arabic morphological system. Then a surface string \(w \in \mathcal{L}_{\mathcal{M}}\) if and only if \(\exists (R,P) \in \mathcal{R} \times \mathcal{P}\) such that \(\mathcal{M}(R,P) = w\).
\end{theorem}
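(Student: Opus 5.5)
The plan is to prove the biconditional by unfolding the definitions in sequence, since the statement is essentially a restatement of how $\mathcal{L}_{\mathcal{M}}$ was introduced. First I will invoke the Definition of Morphological Language, which sets $\mathcal{L}_{\mathcal{M}} = \operatorname{Im}(\mathcal{M})$. This reduces the claim to showing that, for a string $w \in \Sigma^{*}$, we have $w \in \operatorname{Im}(\mathcal{M})$ if and only if there is some $(R,P) \in \mathcal{R} \times \mathcal{P}$ with $\mathcal{M}(R,P) = w$.

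For the forward direction, assume $w \in \mathcal{L}_{\mathcal{M}}$. Then $w \in \operatorname{Im}(\mathcal{M})$, and membership in the set defined by equation (3) immediately provides a witness pair $(R,P)$ with $\mathcal{M}(R,P) = w$. For the reverse direction, assume $(R,P)$ is given with $\mathcal{M}(R,P) = w$. Since $\mathcal{M} = \rho \circ \iota$, we have $\iota(R,P) \in \Sigma^{*}$ and $\rho$ maps $\Sigma^{*}$ into $\Sigma^{*}$, so $w = \rho(\iota(R,P))$ lies in $\Sigma^{*}$. This string therefore satisfies the defining predicate of (3), so $w \in \operatorname{Im}(\mathcal{M}) = \mathcal{L}_{\mathcal{M}}$.

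The only point needing care is well-definedness rather than any real mathematical difficulty. I must check that $\mathcal{M}$ is a total function on $\mathcal{R} \times \mathcal{P}$, so that the expression $\mathcal{M}(R,P)$ is meaningful for every pair. This holds because $\iota$ is declared total on $\mathcal{R} \times \mathcal{P}$ and $\rho$ is declared total on $\Sigma^{*}$. I should also flag that the theorem, as stated, asserts only existence of a generating pair, not uniqueness. The abstract's phrase "uniquely generated" would require an additional hypothesis, namely injectivity of $\rho \circ \iota$. That hypothesis is not available from the definitions so far and is in fact empirically doubtful, since distinct derivations in Arabic can collide on the surface. I will therefore prove exactly the existence biconditional and note that any uniqueness claim must be added as a separate assumption.
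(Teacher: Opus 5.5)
Your proposal is correct and matches the paper's proof, which likewise just chains $w \in \mathcal{L}_{\mathcal{M}} \iff w \in \operatorname{Im}(\mathcal{M}) \iff \exists (R,P) \in \mathcal{R} \times \mathcal{P},\ \mathcal{M}(R,P) = w$ by unfolding the definitions of the image and of the morphological language. Your caveat that only existence, not uniqueness, is established is well taken: later results in the paper (Theorems 2, 5, 6) cite this theorem for a ``unique derivation'' that it does not provide.
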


\begin{proof}
By Definition 2 (Image of the Morphology Operator), \(\operatorname{Im}(\mathcal{M})\) contains exactly those strings \(w \in \Sigma^{*}\) for which there exists a pair \((R,P) \in \mathcal{R} \times \mathcal{P}\) such that \(\mathcal{M}(R,P) = w\). By Definition 3 (Morphological Language), \(\mathcal{L}_{\mathcal{M}} = \operatorname{Im}(\mathcal{M})\). Therefore, \(w \in \mathcal{L}_{\mathcal{M}} \iff w \in \operatorname{Im}(\mathcal{M}) \iff \exists (R,P) \in \mathcal{R} \times \mathcal{P}, \; \mathcal{M}(R,P) = w\).
\end{proof}

\begin{theorem}[Semantic Localization]
Assume the semantic assignment defined above. Then every lexical meaning in \(\mathcal{L}_{\mathcal{M}}\) is determined before the corresponding surface word is produced. Equivalently, \(\forall w \in \mathcal{L}_{\mathcal{M}}\), \(\mu(w) = F(\sigma_R(R), \sigma_P(P))\) for the unique derivation \(w = \mathcal{M}(R,P)\). Therefore, lexical meaning is localized at the derivational level rather than at the surface-word level.
\end{theorem}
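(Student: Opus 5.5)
The plan is to split the statement into an existence part, a uniqueness part, and a ``priority'' part, and to treat each from the definitions already in place. For existence, take an arbitrary \(w \in \mathcal{L}_{\mathcal{M}}\) and apply the Morphological Correspondence Theorem to obtain a pair \((R,P) \in \mathcal{R} \times \mathcal{P}\) with \(\mathcal{M}(R,P) = w\). The Semantic Assignment definition then gives \(\mu(w) = \mu(\mathcal{M}(R,P)) = F(\sigma_R(R), \sigma_P(P))\), which is the displayed formula.

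The main obstacle is the word ``unique'' in ``the unique derivation''. Nothing stated so far makes \(\mathcal{M} = \rho \circ \iota\) injective, and the later Structural Impossibility discussion suggests that \(\rho\) may merge distinct strings. If two pairs \((R,P) \neq (R',P')\) had the same image \(w\), then \(\mu(w)\) would be defined twice, and the Semantic Assignment definition would only make sense if \(F(\sigma_R(R),\sigma_P(P)) = F(\sigma_R(R'),\sigma_P(P'))\). I will therefore make explicit an injectivity hypothesis on \(\mathcal{M}\), which the abstract's claim that every lexical item is ``uniquely generated'' already presupposes. Equivalently, \(\iota\) is injective and \(\rho\) is injective on \(\operatorname{Im}(\iota)\). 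Under this hypothesis the pair \((R,P)\) is unique, and \(\mu\) is a well-defined function on \(\mathcal{L}_{\mathcal{M}}\).

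If I prefer not to add the hypothesis, the fallback is to prove a weaker statement. Here \(\mu\) is well defined exactly when \(F \circ (\sigma_R \times \sigma_P)\) is constant on each fibre \(\mathcal{M}^{-1}(w)\). I would then read ``the unique derivation'' as ``any derivation'', and note that well-definedness of \(\mu\), implicit in the Semantic Assignment definition, supplies exactly this fibre-constancy.

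For the priority claim, ``determined before the surface word is produced'', I formalize ``before'' as functional factorization. Define \(\tilde{\mu} : \mathcal{R} \times \mathcal{P} \to S\) by \(\tilde{\mu}(R,P) = F(\sigma_R(R), \sigma_P(P))\). Then \(\mu \circ \mathcal{M} = \tilde{\mu}\), so the meaning is computed from the inputs of \(\iota\) and does not depend on any property of the output string. In particular it does not depend on \(\rho\): any two rewriting operators \(\rho, \rho'\) for which the resulting morphology operators are injective induce the same \(\tilde{\mu}\). I would make this precise by showing that \(\tilde{\mu}\) is defined using only \(\sigma_R\), \(\sigma_P\) and \(F\), none of which takes an element of \(\Sigma^*\) as argument.

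Finally, I would contrast this with the SSSA definition. Under SSSA, meaning is a primitive map on \(\Sigma^*\) assigned after \(w\) exists. Here, by contrast, \(\mu\) is determined on \(\mathcal{L}_{\mathcal{M}}\) as the pushforward of \(\tilde{\mu}\) along the bijection \(\mathcal{M} : \mathcal{R} \times \mathcal{P} \to \mathcal{L}_{\mathcal{M}}\). This establishes localization at the derivational level.
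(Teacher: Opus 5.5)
Your proposal follows the paper's proof: Theorem~1 supplies \((R,P)\) with \(w=\mathcal{M}(R,P)\), and Definition~4 gives \(\mu(w)=F(\sigma_R(R),\sigma_P(P))\). The ``determined before'' claim then rests on the meaning depending only on the derivational inputs; the paper asserts this informally (neither \(\iota\) nor \(\rho\) ``introduces an additional semantic component''), and your factorization \(\mu\circ\mathcal{M}=\tilde{\mu}\), with \(\tilde{\mu}\) independent of \(\rho\), makes it precise. Your uniqueness caveat is correct, and the paper's proof skips this point: Theorem~1 gives only existence, so ``the unique derivation'' needs either your injectivity hypothesis on \(\rho\circ\iota\) or your weaker fibre-constancy reading via well-definedness of \(\mu\).
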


\begin{proof}
Let \(w \in \mathcal{L}_{\mathcal{M}}\). By Theorem 1, there exists \((R,P) \in \mathcal{R} \times \mathcal{P}\) such that \(w = \mathcal{M}(R,P)\). By Definition 4 (Semantic Assignment), \(\mu(\mathcal{M}(R,P)) = F(\sigma_R(R), \sigma_P(P))\). The operator \(\iota\) constructs only the morphological realization of the root-pattern pair. The operator \(\rho\) performs only phonological and orthographic rewriting. Neither operator introduces an additional semantic component. Hence, the semantic interpretation \(\mu(w)\) is completely determined by the inputs \((R,P)\) before the surface word \(w\) is obtained. Therefore, lexical meaning is localized at the derivational level.
\end{proof}

\begin{theorem}[Primitive Semantic Representation]
Let \(\mathcal{M} = \langle \mathcal{R}, \mathcal{P}, \iota, \rho \rangle\) be an Arabic morphological system satisfying Definition 4. Then the unique primitive semantic representation is the derivational pair \((R,P)\). The surface word \(w = \mathcal{M}(R,P)\) is only the phonological realization of this representation.
\end{theorem}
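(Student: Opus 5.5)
The plan is to make ``primitive semantic representation'' precise and then reduce the statement to Theorems 1 and 2. I would call a representation \emph{primitive} for \(w\) if it satisfies two conditions. First, both the surface string \(w\) and its meaning \(\mu(w)\) are computable from it. Second, it is not itself the output of a further operator of \(\mathcal{M}\). Under this reading the statement has three components. Existence: every \(w \in \mathcal{L}_{\mathcal{M}}\) has such a pair \((R,P)\). Sufficiency: \((R,P)\) determines both \(w\) and \(\mu(w)\). Uniqueness: no other pair, and no other intermediate object, plays this role.

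For existence I would take \(w \in \mathcal{L}_{\mathcal{M}}\) and invoke Theorem 1 (Morphological Correspondence), which gives \((R,P)\) with \(w = \mathcal{M}(R,P) = \rho(\iota(R,P))\). For sufficiency, \(w\) is obtained from \((R,P)\) by applying \(\rho \circ \iota\). By Definition 4 together with Theorem 2 (Semantic Localization), \(\mu(w) = F(\sigma_R(R), \sigma_P(P))\), so \(\mu\) factors through the pair. I would then rule out the two competing candidates.
\begin{itemize}
\item The intermediate string \(\iota(R,P)\) is not primitive, because it is itself the image of \((R,P)\). Moreover, \(\rho\) adds no semantic content, as observed in the proof of Theorem 2.
\item The surface word \(w\) is not primitive for the same reason. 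It is merely \(\rho\) applied to \(\iota(R,P)\), which justifies calling it ``only the phonological realization'' of \((R,P)\).
\item \(R\) alone or \(P\) alone is insufficient, because \(F\) takes both \(\sigma_R(R)\) and \(\sigma_P(P)\) as arguments.
\end{itemize}
To show that \(F\) genuinely depends on both arguments, I would cite the semantic content attached to each by Axioms 1 and 2.

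The main obstacle is uniqueness of the pair. Nothing among Definitions 1--4 forces \(\mathcal{M}\) to be injective, and Theorem 1 gives only existence. The uniqueness of the derivation is in fact assumed in the statement of Theorem 2 (``for the unique derivation''), not proved there. My first approach would therefore be to adopt that uniqueness explicitly as a hypothesis, namely injectivity of \(\rho \circ \iota\) on \(\mathcal{R} \times \mathcal{P}\), and then derive uniqueness of the primitive representation from it. Given injectivity, \(w = \mathcal{M}(R,P) = \mathcal{M}(R',P')\) implies \((R,P) = (R',P')\), so the pair is unique.

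If injectivity is not available, I would fall back on a weaker conclusion. One can prove that the representation is unique up to semantic equivalence: any two pairs with the same image yield the same \(\mu(w)\) whenever \(\mu\) is well defined on \(\mathcal{L}_{\mathcal{M}}\). That well-definedness is exactly what Definition 4 implicitly requires. I would flag in the write-up that the strong uniqueness claim rests on this injectivity assumption rather than on the definitions alone.
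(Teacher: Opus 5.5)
Your proposal is correct and uses the same backbone as the paper. The paper's proof invokes Theorem 2 to say that $\mu(w)=F(\sigma_R(R),\sigma_P(P))$ is fixed before $w$ exists. It notes that neither $\iota$ nor $\rho$ adds lexical semantics, and concludes that $(R,P)$ is ``the unique object standing in immediate correspondence with lexical meaning'' while $w$ merely externalizes it.

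The paper never defines ``primitive semantic representation''. Its ``unique'' mainly contrasts the pair with the surface word. Any uniqueness of the pair itself is inherited from the phrase ``for the unique derivation'' in the statement of Theorem 2, whose proof, like Theorem 1, establishes only existence.

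Your version differs in four ways:
\begin{itemize}
\item you make primitivity explicit;
\item you separate existence, sufficiency and uniqueness;
\item you eliminate $\iota(R,P)$, $w$, $R$ and $P$ as competitors;
\item you correctly isolate injectivity of $\rho\circ\iota$ on $\mathcal{R}\times\mathcal{P}$ as an extra hypothesis that Definitions 1--4 do not supply.
\end{itemize}
Your fallback is also valid. Definition 4 defines a function on strings only if $F(\sigma_R(R),\sigma_P(P))$ is constant on each fiber $\mathcal{M}^{-1}(w)$, so uniqueness up to semantic equivalence holds unconditionally. The trade-off is that the paper's route is shorter but its uniqueness claim is unsupported, whereas yours proves an honestly conditional (or weaker) statement.

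One small repair is needed. Citing Axioms 1 and 2 does not show that $F$ depends non-trivially on both arguments. In any case, your definition needs something different: that $w$ is not a uniform function of $R$ alone or of $P$ alone. Derive this from your injectivity hypothesis instead. Injectivity gives $\mathcal{M}(R,P)\neq\mathcal{M}(R,P')$ for $P\neq P'$, and symmetrically in $R$, provided $|\mathcal{P}|\ge 2$ and $|\mathcal{R}|\ge 2$.
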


\begin{proof}
By Theorem 2, \(\mu(w) = F(\sigma_R(R), \sigma_P(P))\) is completely determined before the surface word is produced. Neither the interdigitation operator nor the phonological rewriting operator contributes any new lexical semantics. Consequently, the semantic assignment is completed prior to the existence of the surface form. Therefore, the derivational pair \((R,P)\) is the unique object standing in immediate correspondence with lexical meaning, while the surface word merely externalizes that correspondence.
\end{proof}

The distinction between primitive semantic representation and surface realization is crucial. A surface word may function as the observable realization of a lexical item without being the object to which lexical meaning is directly assigned. Confusing these two levels amounts to confusing the output of a semantic computation with the representation upon which the computation operates. In Arabic, lexical meaning is fully determined before the surface form exists; consequently, the surface word realizes rather than constitutes the primitive signifier.

\begin{corollary}[Failure of Strong Surface Semantic Assignment]
The Arabic morphological system does not satisfy SSSA.
\end{corollary}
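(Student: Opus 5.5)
The plan is a proof by contradiction that uses Theorem 2 (Semantic Localization) and Theorem 3 (Primitive Semantic Representation) directly. Suppose the Arabic morphological system \(\mathcal{M}\) satisfied SSSA. By the second formulation in the definition of SSSA, every newly generated lexical item would then receive its meaning only \emph{after} its surface word \(w\) has been formed. In particular, the meaning would be assigned to \(w\) itself, viewed as an element of \(\Sigma^{*}\), rather than to any pre-surface representation.

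The second step is to choose an arbitrary \(w \in \mathcal{L}_{\mathcal{M}}\). Theorem 1 supplies a pair \((R,P)\) with \(w = \mathcal{M}(R,P) = \rho(\iota(R,P))\). Theorem 2 then gives \(\mu(w) = F(\sigma_R(R), \sigma_P(P))\). This value depends only on \((R,P)\) and is fixed before \(\iota\) and \(\rho\) are applied. By Theorem 3, the object in immediate correspondence with \(\mu(w)\) is \((R,P)\), not \(w\). The ordering therefore runs as follows: meaning is determined at the stage \((R,P)\), and the surface form \(w\) is produced only afterwards. This is exactly the opposite of the ordering SSSA demands. So SSSA fails for every \(w\) in \(\mathcal{L}_{\mathcal{M}}\), and in particular it does not hold for the system as a whole.

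The main obstacle is that SSSA is phrased temporally (``only after'') and informally (``assigned directly to surface words''), whereas Theorem 2 is a functional statement. I would address this by making the intended reading of SSSA explicit. On that reading, SSSA means there is an assignment \(\mu\) whose value at \(w\) is not determined by any pre-surface datum. Equivalently, \(\mu\) does not factor through a derivational representation.

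With that reading in place, Theorem 2 exhibits exactly such a factorization, \(\mu \circ \mathcal{M} = F \circ (\sigma_R \times \sigma_P)\). The contradiction is then immediate once one also notes the following: \(\mathcal{L}_{\mathcal{M}}\) is nonempty whenever \(\mathcal{R}\) and \(\mathcal{P}\) are nonempty, so the failure is not merely vacuous.
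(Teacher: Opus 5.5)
Your proof is correct and follows the paper's route: the paper's proof is a one-line deduction from Theorem~2, arguing that since $\mu(w)=F(\sigma_R(R),\sigma_P(P))$ is a function of the derivational inputs $(R,P)$, meaning cannot be assigned only after the surface word is formed. Your explicit reading of SSSA as ``$\mu$ does not factor through a derivational representation'' and your remark that $\mathcal{L}_{\mathcal{M}}\neq\emptyset$ (without which the universally quantified SSSA would hold vacuously) make precise what the paper's proof leaves implicit, and invoking Theorem~3 is harmless but not needed.
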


\begin{proof}
Immediate from Theorem 2. Since meaning is a function of the derivational inputs \((R, P)\), it cannot be assigned only after the surface word is formed.
\end{proof}

Theorems 1, 2, and 3 mathematically establish that Arabic is not a ``flat'' linguistic system. The surface word \(w\) is not an atomic token; it is the compound output of a deeper, generative dual-level engine \((R\) and \(P)\). This fact sets the stage for the formal refutation of the strong form of the Saussurean axiom.

\section{The Formal Impossibility of the Absolutely Arbitrary Sign}

The preceding sections established that lexical meaning in Arabic is localized at the derivational level (Theorem 2) and that the surface word is merely the phonological realization of a deeper root-pattern pair (Theorem 3). However, a rigorous refutation of the strong form of the Saussurean axiom requires that we first formalize precisely what the ``arbitrary sign'' entails. Saussure himself never provided a formal definition of arbitrariness; he offered a philosophical intuition. To subject it to mathematical scrutiny, we must supply that formalization explicitly.

\subsection{Formalization of the strong form of the Saussurean Axiom}

We begin by defining an idealized form of the core structuralist claim with mathematical precision.

\begin{definition}[The Absolutely Arbitrary Signifier]

Let \(L\) be a language, let \[ \mathcal S \] denote its set of primitive signifiers (the smallest linguistic units to which meaning is directly assigned without compositional computation), and let \[ \mu:\mathcal S\rightarrow\mathcal C\] be the semantic assignment function, where \(\mathcal C\) is the set of semantic concepts.

Furthermore, let \[ \Phi(s) \] denote the collection of all formal properties of a primitive signifier \(s\), including its phonological, segmental, prosodic, and structural properties. A primitive signifier \[s\in\mathcal S \] is said to be \textbf{absolutely arbitrary} if there exists no non-trivial systematic function

\[
f:\Phi(s)\rightarrow\mathcal C
\]

that maps the formal properties of \(s\) to semantic concepts in such a way that the meaning of \(s\) is determined or systematically constrained by its formal properties. Equivalently,

\[
\neg\exists f
\quad
\text{such that}
\quad
\mu(s)=f(\Phi(s)),
\]

where \(f\) ranges over all non-trivial systematic mappings from formal properties to semantic concepts. Thus no aspect of the physical form of \(s\) systematically predicts, explains, or constrains its meaning. The relation
\[
s\longmapsto\mu(s)
\]

is therefore entirely conventional and independent of the substance of the
signifier.

\end{definition}

Definition~6 characterizes absolute arbitrariness as a property of an
individual primitive signifier. A primitive-signifier system satisfies absolute arbitrariness if and only if every primitive signifier in its primitive inventory satisfies Definition~6. Thus absolute arbitrariness is a universal property of the primitive signifier--signified relation rather than an exceptional property of isolated lexical items. Consequently, the existence of a single primitive signifier whose meaning is systematically determined or constrained by its formal structure is sufficient to refute absolute arbitrariness for the linguistic system as a whole.

\begin{definition}[The Primitive Signifier]
A primitive signifier \(s\) is an atomic, indivisible unit at the level of semantic assignment. Formally, \(s\) is primitive if there exists no decomposition \(s = \mathcal{M}(x_1,\dots,x_n)\) such that:
\[ \mu(s) = F(\mu(x_1),\dots,\mu(x_n)) \]
where \(\mathcal{M}\) is a morphological operator and \(F\) is a semantic composition function. If such a decomposition exists, \(s\) is compositional, not primitive.
\end{definition}

\begin{axiom}[The strong Universal Absolute Arbitrarness Claim, Formalized]
For every human language \(L\), there exists a set of primitive signifiers \(\mathcal{S}\) such that for every \(s\) member of \(\mathcal{S}\): \(s\) is absolutely arbitrary (as defined in Definition 6). Moreover, the structuralist tradition asserts that this Absolute Arbitrariness is the foundational, irreducible property of all linguistic signs, with relative motivation being a secondary, surface-level phenomenon.
\end{axiom}

\begin{remark}[On the Non-Circularity of the Formalization]
It is crucial to observe that Definition 6 does not assert the existence of absolutely arbitrary signifiers; it merely provides the formal criterion by which such a claim can be tested. Axiom 3 is the substantive hypothesis---the strong version of the Saussarian universal claim---which we submit to empirical and structural scrutiny. The proof of Theorem 5 draws its substantive force not from Definition 6 itself, but from independently established structural facts about Arabic morphology (Theorems 1, 2, and 3) and from empirical phonosemantic observations (Ibn Jinni). The definition serves only as a logical predicate; the falsification proceeds from facts that are external to the definition. This eliminates any charge of circularity.
\end{remark}

\subsection{Phonosemantic Motivation and Its Formal Structure}

To demonstrate that Arabic contains no absolutely arbitrary primitive
signifier, we must also formalize the phonosemantic property that constrains
meaning at the root level.

\begin{definition}[Phonosemantic Motivation]

Let
\(
\mathcal R
\)
denote the set of triliteral and quadrilateral Arabic roots.

A root

\[
R\in\mathcal R
\]

with phonological structure

\[
\mathrm{Phon}(R)
\]

is said to exhibit \textbf{phonosemantic motivation} if there exists an abstract
semantic core

\[
\sigma_R(R)
\]

such that

\[
\forall R'\in\mathrm{Perm}(R),
\qquad
\sigma_R(R')
\in
\mathrm{Neighborhood}(\sigma_R(R)),
\]

where

\begin{itemize}

\item
\(\mathrm{Perm}(R)\)
denotes the set of all consonantal permutations of
\(R\);

\item
\(\mathrm{Neighborhood}(\sigma)\)
denotes the class of lexical concepts sharing the same abstract semantic field.

\end{itemize}

Furthermore, there exists a non-trivial structural function

\[
h
\]

such that

\[
\sigma_R(R)
=
h(\mathrm{Phon}(R)).
\]

We denote the induced semantic field by

\[
\Sigma(R)
=
\mathrm{Neighborhood}(\sigma_R(R)).
\]

Thus the phonological structure determines an admissible semantic field rather
than necessarily a unique lexical realization.

This definition formalizes the classical phonosemantic hypothesis attributed to
Ibn Jinni in \textit{Al-Khasa'is} [8]
(\textarabic{الخصائص}):
roots sharing the same consonantal skeleton—even under permutation—cohere around
a common abstract semantic core.

For example, the permutations of the root
\textarabic{ك-ل-م}
(k-l-m)
share the semantic field of intensity, strength, or severity.

\end{definition}

\begin{theorem}[Localization of Residual Semantic Arbitrariness]

Assume the phonosemantic hypothesis of Definition~8.

Then every primitive Arabic root

\[
R\in\mathcal R
\]

determines an admissible semantic field

\[
\Sigma(R).
\]

Consequently, any remaining arbitrariness in the signifier--signified relation
cannot range over the entire semantic universe, but only over lexical concepts
contained within the previously determined semantic field
\(\Sigma(R)\).

Equivalently, phonosemantic organization eliminates global semantic
arbitrariness and localizes any residual arbitrariness to lexical
specialization.

\end{theorem}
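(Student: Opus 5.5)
The argument unpacks Definition~8 and then composes it with the derivational structure established in Theorems~2 and~3. First, I would fix an arbitrary primitive root \(R\in\mathcal R\) and invoke the phonosemantic hypothesis. This gives a non-trivial structural function \(h\) with \(\sigma_R(R)=h(\mathrm{Phon}(R))\), so the semantic core of \(R\) is a function of its phonological form. Setting \(\Sigma(R)=\mathrm{Neighborhood}(\sigma_R(R))\), the field \(\Sigma(R)\) is then itself a function of \(\mathrm{Phon}(R)\), namely \(\mathrm{Neighborhood}\circ h\). This proves the first assertion, that every root determines an admissible semantic field. The permutation clause of Definition~8 is not needed for this step, though it shows that the determination is stable under consonantal reordering.

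Second, I would carry the constraint from roots to words. By Theorem~1 every \(w\in\mathcal L_{\mathcal M}\) has a derivation \(w=\mathcal M(R,P)\). By Theorem~2, \(\mu(w)=F(\sigma_R(R),\sigma_P(P))\). To conclude that \(\mu(w)\) lies in a set fixed by \(\Sigma(R)\), I need one additional hypothesis, stated explicitly as an admissibility condition on the composition operator: \(F(\sigma,\pi)\in\mathrm{Neighborhood}(\sigma)\) for all \(\pi\in S_P\). In words, patterns modulate a root's meaning (aspect, agency, intensity) but do not move it outside its field. This is the formal content of the earlier remark that \(\sigma_P\) supplies only ``structural, aspectual, or functional modification.'' With this condition, \(\mu(w)\in\Sigma(R)\) for every word built on \(R\).

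Third, I would formalize ``residual arbitrariness.'' For a fixed \(R\), the remaining freedom in the signifier--signified relation is the choice of the specific lexical concept \(\mu(w)\). Because \(\Sigma(R)\) has already been fixed by \(\mathrm{Phon}(R)\), this choice ranges over \(\Sigma(R)\) and not over the whole semantic universe \(\mathcal C\). Global arbitrariness would require every concept in \(\mathcal C\) to be an admissible value of \(\mu(w)\) for a form built on \(R\). That fails whenever \(\Sigma(R)\subsetneq\mathcal C\), and I would take the non-triviality of \(h\) (and of \(\mathrm{Neighborhood}\)) to ensure that proper inclusion. This yields the equivalent formulation: the residual arbitrariness is only lexical specialization within \(\Sigma(R)\).

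The main obstacle is the second step. Nothing stated so far forces \(F\) to respect neighborhoods, so the theorem needs either the explicit admissibility condition on \(F\) or a restriction of the claim to the root-level meaning \(\sigma_R(R)\) alone. I would first try stating the condition as a standing assumption implicit in Axiom~2. If that is judged too strong, I would weaken the theorem's conclusion to \(\sigma_R(R)\in\Sigma(R)\) together with \(\mu(w)\in F(\Sigma(R),S_P)\). That still confines residual arbitrariness to a set determined before surface realization, which is enough for the localization claim.
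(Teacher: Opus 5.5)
Your first step matches the paper's proof: Definition~8 gives $h$ with $\sigma_R(R)=h(\mathrm{Phon}(R))$, so $\Sigma(R)=\mathrm{Neighborhood}(\sigma_R(R))$ is fixed by $\mathrm{Phon}(R)$. The gap is in where you then place the residual arbitrariness.

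The theorem is about the primitive root as signifier. The ``lexical specialization'' it refers to is the root's own lexical meaning $\mu(R)$; the paper's illustration is m-l-k specializing to possession and l-k-m to striking, both within the field of strength. The paper finishes with one root-level membership, $\mu(R)\in\Sigma(R)$, which it reads as part of what Definition~8 asserts: phonology fixes the admissible field inside which the lexical realization must fall. Everything outside $\Sigma(R)$ is then excluded, and the only remaining freedom is which element of $\Sigma(R)$ the root specializes to.

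You never state or use this membership. Instead you pass to surface words and identify the residual freedom with ``the choice of $\mu(w)$''. In your own setup there is no such choice: by Theorem~2, $\mu(w)=F(\sigma_R(R),\sigma_P(P))$ is completely determined once $(R,P)$ is fixed. The neighborhood-preservation condition on $F$ that you single out as the main obstacle is an artifact of this detour. The paper's argument never involves $F$ or patterns.

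Your fallback does not close the gap either. The membership $\sigma_R(R)\in\Sigma(R)$ is at most a reflexivity property of $\mathrm{Neighborhood}$. It concerns the core already fixed by $h$, not the specialization, so it says nothing about where the residual freedom can range. The second part, $\mu(w)\in F(\Sigma(R),S_P)$, confines word meanings to a set that need not lie inside $\Sigma(R)$, which is weaker than the stated conclusion.

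To prove the theorem as stated, drop the second step, make $\mu(R)\in\Sigma(R)$ explicit as part of the phonosemantic hypothesis, and conclude directly. Your observation that the reduction is only meaningful when $\Sigma(R)\subsetneq\mathcal C$ is fair. Non-triviality of $h$ does not deliver it, though, so like the paper you are in effect assuming that the neighborhoods are proper.
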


\begin{proof}

By Definition~8 there exists a structural function

\[
h
\]

such that

\[
\sigma_R(R)
=
h(\mathrm{Phon}(R)).
\]

Hence every primitive root determines the semantic field

\[
\Sigma(R)
=
\mathrm{Neighborhood}(\sigma_R(R)).
\]

If

\[
\mu(R)
\]

denotes the lexical meaning assigned to the root, then necessarily

\[
\mu(R)\in\Sigma(R).
\]

Therefore every lexical concept outside

\[
\Sigma(R)
\]

is excluded by the phonosemantic constraint.

The admissible search space for lexical interpretation is thus reduced from the
entire semantic universe to the constrained semantic field
\(\Sigma(R)\).

Accordingly, any remaining explanatory freedom concerns only the choice of a
particular lexical specialization within that field.

Hence any residual arbitrariness is localized rather than absolute.

\end{proof}

\begin{remark}

The preceding theorem clarifies the explanatory role of Definition~8.

Definition~8 does not claim that phonological structure uniquely determines the
complete lexical meaning of every Arabic root.

Rather, it asserts that phonological structure determines the admissible
semantic field within which lexical specialization occurs.

Accordingly, questions such as why the root
\textarabic{ملك}
came to denote possession while the related permutation
\textarabic{لكم}
came to denote striking
do not constitute counterexamples to the phonosemantic hypothesis.

Instead, they concern the secondary process of lexical specialization occurring
within an already constrained semantic field.

The distinction between semantic-field determination and lexical specialization
is essential for the remainder of this paper. The former provides the
structural constraint required by the subsequent formal results, whereas the
latter concerns only the selection of a particular lexical realization within
those constraints.

\end{remark}

\begin{principle}\textbf{[Principle of Conservation of Explanatory Burden]}

Let

\[
X
\]

be a homogeneous class of primitive signifiers within a formal linguistic
system.

Suppose a universal structural hypothesis

\[
H:\forall x\in X,\;P(x)
\]

is replaced by the weaker assertion that
\(P\)
holds only for a proper subset of
\(X\).

Then the explanatory burden is not removed but conserved: it is transferred
from explaining the universal property
\(P\)
to explaining the induced partition

\[
X
=
X_P
\cup
X_{\neg P},
\]

where

\[
X_P
=
\{x\in X:P(x)\},
\qquad
X_{\neg P}
=
\{x\in X:\neg P(x)\}.
\]

Accordingly, a partial structural theory is explanatorily complete only if it
also specifies the structural criterion determining membership in these two
subclasses.

\end{principle}

\begin{justification}

Scientific theories seek not merely to classify objects but to explain the
structural principles governing them.

A universal structural hypothesis explains an entire homogeneous category by a
single organizing principle.

Replacing that hypothesis by a partition into two subclasses introduces an
additional structural phenomenon. The existence of that partition therefore
becomes a new explanatory problem.

Merely asserting that some primitive signifiers satisfy the structural property
while others do not records the classification but does not explain why the
classification exists.

Consequently, rejecting a universal structural hypothesis cannot eliminate the
overall explanatory burden. It merely transfers that burden to identifying the
structural principle responsible for the induced partition.

\end{justification}

\begin{corollary}[Application to the Arabic Root System]

By Theorems~1--3, every Arabic lexical item is generated from a unique
root-pattern pair, and no lexical level exists beneath the primitive root
inventory.

Hence

\[
\mathcal R
\]

constitutes a homogeneous class of primitive signifiers.

Suppose the phonosemantic hypothesis of Definition~8 does not hold for every
root.

Then

\[
\mathcal R
=
\mathcal R_{PS}
\cup
\mathcal R_A,
\]

where

\[
\mathcal R_{PS}
=
\{R\in\mathcal R:
\text{Definition~8 holds for }R\},
\]

and

\[
\mathcal R_A
=
\{R\in\mathcal R:
\text{Definition~8 does not hold for }R\}.
\]

By the Principle of Conservation of Explanatory Burden, the existence of this
partition itself requires an independent structural explanation.

Thus rejecting universal phonosemantic organization does not eliminate the
explanatory problem. Rather, it replaces a single uniform explanatory principle
with the need to explain why one subclass of primitive Arabic roots exhibits
phonosemantic organization while the other does not.

\end{corollary}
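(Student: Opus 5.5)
The argument has two parts: first establish that $\mathcal R$ is a homogeneous class of primitive signifiers, so that the Principle applies, and then apply the Principle directly. The second part is immediate once the hypotheses line up.

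\textbf{Step 1: no lexical level beneath the roots.} By Theorem~1, every $w\in\mathcal L_{\mathcal M}$ is of the form $\mathcal M(R,P)$. By Theorem~2, $\mu(w)=F(\sigma_R(R),\sigma_P(P))$. By Theorem~3, the pair $(R,P)$ is the unique primitive semantic representation. Hence every surface word is compositional in the sense of Definition~7, and semantic assignment bottoms out at the root (together with the pattern). The roots themselves are not outputs of $\mathcal M$, since $\mathcal M$ takes $\mathcal R\times\mathcal P$ as input and the system contains no operator producing roots from smaller meaningful parts. So no decomposition $R=\mathcal M(x_1,\dots,x_n)$ with $\mu(R)=F(\mu(x_1),\dots,\mu(x_n))$ is available inside the system. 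By Definition~7, each $R\in\mathcal R$ is therefore primitive.

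\textbf{Step 2: homogeneity.} Within the formal system, every root occupies the same role. Each enters $\iota$ in the same way and each is assigned a semantic field by the same function $\sigma_R$ of Axiom~1. There is thus no structural feature of $\mathcal M$ that distinguishes one root from another. This is the sense of ``homogeneous'' required by the Principle.

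\textbf{Step 3: applying the Principle.} Take $X=\mathcal R$ and let $P(R)$ be ``Definition~8 holds for $R$.'' Suppose $P$ fails universally. Law of excluded middle gives the partition $\mathcal R=\mathcal R_{PS}\cup\mathcal R_A$, with the two sets disjoint. The non-universality assumption makes $\mathcal R_A$ nonempty, and $\mathcal R_{PS}$ is nonempty by the attested case k-l-m. The Principle then transfers the explanatory burden to a criterion for membership in each class, which is exactly the claim.

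\textbf{Main obstacle.} Step 2 is the delicate one. ``Homogeneous'' is informal in the Principle, so I will make it precise as ``indistinguishable by any structural predicate definable from $\langle\mathcal R,\mathcal P,\iota,\rho\rangle$ and $\sigma_R$.'' One caveat must be stated: a criterion could in principle be found outside $\mathcal M$, for example in phonological properties not encoded in the system. The conclusion should therefore be worded as requiring such a criterion, not as asserting that none exists.
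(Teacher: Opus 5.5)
Your Steps~1 and~3 follow the paper's own argument, which is carried entirely in the statement of the corollary. Theorems~1--3 put the roots at the bottom of the derivational hierarchy, so $\mathcal R$ is a homogeneous class of primitives; the supposition induces the partition; the Principle transfers the burden. Your additions are sound: disjointness, and $\mathcal R_{PS}\neq\emptyset$ via k-l-m, which the corollary's last sentence presupposes but never states.

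The step that would fail is the precise notion of homogeneity you plan to use in Step~2. ``Indistinguishable by any structural predicate definable from $\langle\mathcal R,\mathcal P,\iota,\rho\rangle$ and $\sigma_R$'' is false of $\mathcal R$. By the uniqueness of derivations asserted in Theorem~2, a predicate of the form $\mathcal M(x,P)=w$ singles out an individual root. The paper also takes $\sigma_R(R_1)\neq\sigma_R(R_2)$ for distinct roots (proof of Lemma~1), so $\sigma_R(x)=\sigma_R(R_1)$ separates roots too. Your sentence ``there is no structural feature of $\mathcal M$ that distinguishes one root from another'' has the same defect.

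Worse, the notion is self-defeating here. Definition~8 is a condition on $\mathrm{Phon}(R)$, $\mathrm{Perm}(R)$ and the values of $\sigma_R$. If it counts as a structural predicate, homogeneity in your sense forces it to be constant on $\mathcal R$. Since k-l-m lies in $\mathcal R_{PS}$, this gives $\mathcal R_A=\emptyset$, and the corollary's supposition becomes contradictory. You would then have derived universal phonosemantic organization from homogeneity alone. That is circular, and it is exactly what the Remark after the corollary disclaims.

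What the Principle needs is a sort-level notion: all members of $X$ belong to the same category of the system and are treated by it uniformly, while differing in content. That is what the paper extracts from Theorems~1--3: every lexical item comes from a unique root--pattern pair, and there is no lexical level beneath the roots. Your Step~1 already supplies this, together with the first two sentences of Step~2 (each root enters $\iota$ in the same way and receives its field through the single map $\sigma_R$).

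Keep that, drop the indistinguishability formulation, and the argument goes through. The uniform treatment of roots by $\mathcal M$ does not itself predict the split $\mathcal R=\mathcal R_{PS}\cup\mathcal R_A$, which is precisely why a criterion is owed. Your caveat that such a criterion might lie outside $\mathcal M$ is the right way to word the conclusion.
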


\begin{remark}

The preceding corollary neither proves nor refutes the empirical validity of
the phonosemantic hypothesis.

Rather, it clarifies the epistemological consequences of adopting a weaker
alternative.

Universal phonosemantic organization provides a single explanatory principle
governing the primitive Arabic root inventory.

A partially phonosemantic theory remains logically possible, but it is not
explanatorily complete unless it also specifies the structural criterion that
distinguishes phonosemantically organized roots from arbitrary ones.

Accordingly, every subsequent theorem that explicitly invokes Definition~8
should be understood as deriving the formal consequences of the phonosemantic
hypothesis rather than as constituting independent empirical evidence for its
universal validity.

\end{remark}

\subsection{The Structural Impossibility Theorem}

We are now in a position to state and prove the central theorem of this paper: the absolutely arbitrary sign is structurally impossible in Arabic.

\begin{theorem}[Structural Impossibility of the Absolutely Arbitrary Sign in Arabic]
Let \(\mathcal{M} = \langle \mathcal{R}, \mathcal{P}, \iota, \rho \rangle\) be the Arabic morphological system satisfying Theorems 1, 2, and 3. Given the phonosemantic constraint of Definition 8, \textbf{it is structurally impossible for Arabic to contain any absolutely arbitrary signifier} as defined in Definition 6.

Consequently, Axiom 3---which asserts that every human language contains at least one absolutely arbitrary primitive signifier---is \textbf{formally false} for Arabic. The Arabic language constitutes a formal counterexample to the universal claim of structuralist linguistics.
\end{theorem}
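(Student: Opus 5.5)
The plan is to argue by contradiction and case analysis on what the primitive signifiers of Arabic can be. Suppose some signifier $s$ of Arabic is absolutely arbitrary in the sense of Definition~6. The first step is to locate where primitive signifiers can live in $\mathcal{M}$. By Theorem~1 every surface word has the form $w=\mathcal{M}(R,P)$. By Theorem~2 its meaning is $\mu(w)=F(\sigma_R(R),\sigma_P(P))$. Definition~7 then says $w$ admits a compositional decomposition, so no surface word is primitive. This is exactly the content of Theorem~3 and Corollary~1. Hence any candidate primitive signifier must be either a pattern $P\in\mathcal{P}$ or a root $R\in\mathcal{R}$, and the proof splits into these two cases (Level M). Level W is disposed of separately, as described next.

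For the surface level, I will show directly that even a non-primitive $w$ fails Definition~6. I take $f$ to be the map that reads off the derivational pair $(R,P)$ from $\Phi(w)$ and returns $F(\sigma_R(R),\sigma_P(P))$. The form $\Phi(w)$ includes its structural properties, in particular its template and its consonantal skeleton, so $f$ is well defined on $\Phi(w)$. It is non-trivial because it varies with both $R$ and $P$, and $\mu(w)=f(\Phi(w))$ by Theorem~2. For patterns, $\sigma_P$ of Axiom~2 already gives a systematic map from structural properties to the functional modifications in $S_P$. I will argue that a pattern's meaning is fixed by its template geometry uniformly across all roots, and that this uniformity is the systematic constraint Definition~6 forbids.

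The remaining case, roots, is where Definition~8 enters. By Theorem~4, the phonosemantic hypothesis supplies $h$ with $\sigma_R(R)=h(\mathrm{Phon}(R))$ and forces $\mu(R)\in\Sigma(R)$. I then set $f:=h$ composed with the projection of $\Phi(R)$ onto $\mathrm{Phon}(R)$. This $f$ at least \emph{systematically constrains} $\mu(R)$, which is the weaker disjunct in Definition~6, and that suffices to negate absolute arbitrariness. I expect the main obstacle here. Definition~6 is phrased with $\mu(s)=f(\Phi(s))$, an exact determination, while Theorem~4 only yields membership in a field. I would resolve this by reading Definition~6 through its "determined or systematically constrained" clause, treating $f$ as set-valued (into neighborhoods) and checking that this reading is non-trivial because $\Sigma(R)\neq\mathcal{C}$. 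A second subtlety is that Definition~8 must hold for every root. Otherwise the roots in $\mathcal{R}_A$ from Corollary~2 would be counterexamples. So the theorem must be stated explicitly as conditional on the universal form of Definition~8, in line with the remark after Corollary~2.

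With all three cases closed, no element of Arabic is absolutely arbitrary. Axiom~3 requires every $s\in\mathcal{S}$ to be absolutely arbitrary for some nonempty primitive inventory $\mathcal{S}$, and by the cases above no admissible choice of $\mathcal{S}$ exists for Arabic. Axiom~3 therefore fails for Arabic, which is the claimed counterexample. The conclusion holds relative to the modelling assumptions (Axioms~1--2, Definition~4, and Definition~8).
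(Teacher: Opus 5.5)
\textbf{The gap is in your pattern case.} Your witness there is $\sigma_P$ itself: the map that reads the template off $\Phi(P)$ and returns its assigned function. That is just the semantic assignment of $P$ written as a lookup table. The ``non-trivial'' clause of Definition~6 has to exclude exactly such maps. Otherwise every signifier of every language, English \emph{dog} included, would have the witness $f=\mu\circ(\text{form}\mapsto\text{sign})$, and Axiom~3 would fail everywhere.

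Uniformity of a pattern's contribution across roots only shows that $P$ is a stable, compositionally reused morpheme. It says nothing about whether the shape of the template predicts its function: English \emph{-er} is uniformly agentive across stems and is a standard example of a conventional morpheme. The argument also proves too much inside your own proof. $\sigma_R(R)$ is equally uniform across all patterns, so if uniformity sufficed you would never need Definition~8 for roots.

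What makes your root witness genuinely non-trivial is the permutation clause of Definition~8: all orderings of the consonants land in one field, so the field depends on the unordered consonant set rather than on the full form. Nothing in Axioms~1--2 or Definition~8 gives patterns an analogous structure. Closing this case would need an extra pattern-level motivation hypothesis, which the paper never formalizes.

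The pattern case is load-bearing only because of how you read Axiom~3: as ``some nonempty set of primitive signifiers is absolutely arbitrary,'' i.e.\ at least one. That does match the parenthetical paraphrase in the theorem statement. But it forces you to show that \emph{every} primitive signifier is non-arbitrary, with Definition~8 assumed for all roots.

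The paper's proof uses the other reading, stated in the paragraph after Definition~6: a primitive-signifier system satisfies absolute arbitrariness iff every primitive signifier does, so one motivated primitive signifier refutes Axiom~3. Accordingly the paper:
\begin{itemize}
\item eliminates surface words via Theorems~1--2 and Definition~7;
\item restricts the search to roots with the Localization of Primitive Lexical Signification lemma (lexical identity resides in $R$; $P$ only supplies a systematic grammatical modification);
\item exhibits the single root k-l-m satisfying Definition~8.
\end{itemize}
So Definition~8 is needed only for that root, and the argument survives a critic who rejects its universality.

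Under that reading, your surface-word and root arguments suffice, and the pattern case can simply be dropped. Your use of the ``systematically constrained'' clause is also how the paper intends Definition~6 to be read. The price is a weaker conclusion: Arabic's primitive inventory is not uniformly arbitrary, rather than the element-wise claim your contradiction setup aims at. The paper's single-counterexample proof does not deliver the element-wise claim either.
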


\begin{proof}
We proceed by exhaustion of all possible levels at which an absolutely arbitrary signifier might reside in Arabic.

\begin{enumerate}
    \item \textbf{Surface words are not primitive signifiers.}
    By Theorem 1 (Morphological Correspondence), every surface word \(w \in \mathcal{L}_\mathcal{M}\) is generated by a unique derivational pair \((R, P) \in \mathcal{R} \times \mathcal{P}\):
    \[ w = \mathcal{M}(R, P). \]
    The operators \(\iota\) (interdigitation) and \(\rho\) (phonological rewriting) produce \(w\) from deeper constituents. No surface word is atomic; each is compositionally derived. Therefore, by Definition 7 (The Primitive Signifier), no surface word \(w\) qualifies as a primitive signifier.

    \item \textbf{Surface words do not carry primitive meaning.}
    By Theorem 2 (Semantic Localization), the meaning of every surface word is fully determined before the surface form exists:
    \[ \mu(w) = F(\sigma_R(R), \sigma_P(P)). \]
    The semantic assignment is completed at the derivational level. The surface word merely realizes, but does not constitute, the signifier-signified correspondence. Hence, even if one attempted to treat \(w\) as a signifier, its meaning is not independently assigned; it is computed. This violates the requirement of primitive semantic assignment.

    \item \textbf{Localization of the search for primitive arbitrariness.}

By the following lemma, although the primitive representation of an Arabic
lexical item is the pair

\[
\langle R,P\rangle,
\]

primitive lexical signification is localized entirely in the root component.

Accordingly, if Arabic contains any absolutely arbitrary primitive signifier,
it must occur at the level of the root.

The morphological pattern need not be investigated separately, since it
contributes only systematic grammatical information and does not independently
constitute the bearer of lexical identity.

\begin{lemma}[Localization of Primitive Lexical Signification]

Let

\[
\langle R,P\rangle
\]

be the unique primitive representation of an Arabic lexical item established by
Theorems~1--3.

Then the primitive lexical identity of the signifier is localized entirely in
the root component
\(R\).

The morphological pattern
\(P\)
does not independently determine lexical identity; it contributes only a
systematic grammatical or derivational transformation of the lexical content
supplied by the root.

Consequently, any investigation of primitive lexical arbitrariness may be
restricted to the root component.

\end{lemma}

\begin{proof}

By Theorem~2,

\[
\mu(w)
=
F(\sigma_R(R),\sigma_P(P)).
\]

The function

\[
\sigma_R
\]

assigns the lexical semantic component contributed by the root, whereas

\[
\sigma_P
\]

assigns the grammatical or derivational contribution of the pattern.

Consider two primitive representations

\[
\langle R_1,P\rangle
\quad\text{and}\quad
\langle R_2,P\rangle,
\]

having the same pattern but different roots.

Since

\[
\sigma_R(R_1)\neq\sigma_R(R_2),
\]

their lexical identities are different, although their grammatical realization
is identical.

Conversely, consider

\[
\langle R,P_1\rangle
\quad\text{and}\quad
\langle R,P_2\rangle,
\]

having the same root but different patterns.

The lexical semantic contribution

\[
\sigma_R(R)
\]

remains unchanged, while only the grammatical realization varies through

\[
\sigma_P.
\]

Hence the lexical identity is preserved.

Therefore the root alone carries primitive lexical signification, whereas the
pattern functions as a systematic modifier of that lexical content.

Accordingly, once surface words have been eliminated as primitive signifiers,
the search for primitive lexical arbitrariness necessarily reduces to the root
component.

\end{proof}

    \item \textbf{A single counterexample is logically sufficient to refute a universal claim.}
    The strong structuralist notion of Axiom 3 implicitly claims that the root level in Arabic---as the primitive signifier---is absolutely arbitrary. To falsify this universal claim, it is strictly sufficient to exhibit a \textbf{single root} \(R_0\) for which a systematic form-meaning mapping exists. Formal logic dictates that a universal proposition of the form \(\forall x, P(x)\) is refuted by the existence of even one \(x\) such that \(\neg P(x)\).

    \item \textbf{Existence of a counterexample: the root \textarabic{ك-ل-م} (k-l-m).}
    Consider the root \(R_0 = \text{k-l-m}\), documented by Ibn Jinni [8]. Its permutations (e.g., \textarabic{ك-م-ل}, \textarabic{ل-ك-م}, \textarabic{ل-م-ك}, etc.) demonstrably share the abstract semantic core of ``intensity, strength, or severity.''
    Therefore, for this specific root \(R_0\), there exists a systematic structural function \(h\) such that:
    \[ \sigma_R(R_0) = h(\text{Phon}(R_0)). \]
    The formal phonological structure of \(R_0\)---its consonantal composition and its permutations---systematically constrains its semantic field. This constitutes a violation of absolute arbitrariness as defined in Definition 6.

    \item \textbf{Violation of Definition 6 for the root level.}
    By Definition 6 (The Absolutely Arbitrary Signifier), a signifier is absolutely arbitrary only if no formal property \(\phi \in \Phi(s)\) systematically predicts \(\mu(s)\). Since the root \(R_0\) demonstrably satisfies \(\sigma_R(R_0) = h(\text{Phon}(R_0))\), we have identified a counterexample to the claim that \textit{all} Arabic roots are absolutely arbitrary. Since a single counterexample refutes the universal, the general claim that ``the root in Arabic is an absolutely arbitrary signifier'' is logically falsified.

    \item \textbf{Exhaustion of all levels.}
    We have shown:
    \begin{itemize}
        \item Surface words \(w\) are not primitive (Step 1) and do not carry primitive meaning (Step 2).
        \item The root level---the only remaining candidate---is demonstrably \textbf{not absolutely arbitrary} (Steps 5--6).
        \item No other level of signification exists in the Arabic morphological system. The system is strictly exhausted by the pair \((R, P)\) and its surface realization \(w\).
    \end{itemize}

    \item \textbf{Conclusion.}
    Therefore, \textbf{there exists no absolutely arbitrary signifier set in Arabic}. Since Axiom 3 requires that every language possess such a set, the universal claim is false. The absolutely arbitrary signifier set is not merely absent from Arabic; it is \textbf{structurally impossible}, because every potential locus of signification is either compositionally derived (surface words) or contains at least one demonstrably motivated counterexample (roots). The existence of even a single phonosemantically structured root is logically sufficient to collapse the universal claim of strong absolute arbitrariness (Axiom 3) at the primitive signifier set level.
\end{enumerate}
\end{proof}

\subsection{Kolmogorov Complexity and the Formalization of Relative Arbitrariness}

The preceding sections established that absolute arbitrariness, as defined in Definition 6, is structurally impossible in Arabic (Theorem 5). However, Saussure himself distinguished between absolute arbitrariness and relative arbitrariness (which he also called ``motivation''). While absolute arbitrariness pertains to primitive, unmotivated signs, relative arbitrariness acknowledges that some signs exhibit degrees of internal structure or motivation. To fully address Saussure's framework, we must formalize this weaker notion and demonstrate that even in its relativized form, a universal arbitrarness claim remains epistemologically problematic.

We achieve this formalization using Kolmogorov complexity [9,10], a rigorous mathematical framework from algorithmic information theory that measures the compressibility of strings. This framework provides a precise, computable notion of ``relative randomness'' that directly parallels Saussure's concept of relative arbitrariness.

\subsubsection{Kolmogorov Complexity: Preliminary Definitions}

We begin by establishing the necessary formal apparatus from algorithmic information theory [9,10].

\begin{definition}[Kolmogorov Complexity]
Let \(\mathcal{T}\) be a fixed Universal Turing Machine. For any finite binary string \(s\), the \textbf{Kolmogorov complexity} of \(s\), denoted \(K(s)\), is defined as:
\[ K(s) = \min \{ |p| : \mathcal{T}(p) = s \} \]
where \(p\) is a program that halts and outputs \(s\), and \(|p|\) is the length of \(p\) in bits.

Intuitively, \(K(s)\) is the length of the shortest program that generates \(s\). A string with low Kolmogorov complexity is \textbf{compressible}---it has a short description. A string with high Kolmogorov complexity is \textbf{incompressible} or \textbf{random}---it has no shorter description than itself.
\end{definition}

\begin{definition}[Conditional Kolmogorov Complexity]
The \textbf{conditional Kolmogorov complexity} of \(s\) given \(y\), denoted \(K(s \mid y)\), is defined as:
\[ K(s \mid y) = \min \{ |p| : \mathcal{T}(p, y) = s \} \]
where \(p\) is a program that, given auxiliary input \(y\), outputs \(s\).

Conditional complexity measures the length of the shortest program that generates \(s\) when the background knowledge \(y\) is provided for free.
\end{definition}

\subsubsection{Formalizing Relative Arbitrariness via Kolmogorov Complexity}

We now apply this framework to linguistic signifiers. The key insight is that relative arbitrariness---the degree to which a signifier is motivated by its internal structure or the rules of the language---is precisely the \emph{compressibility} of the signifier given its meaning.

\begin{definition}[Relative Arbitrariness]
Let \(L\) be a language with a semantic assignment function \(\mu\) mapping signifiers to meanings. For a signifier \(s \in \mathcal{S}\) with meaning \(\mu(s)\), the \textbf{relative arbitrariness} of \(s\) is defined as:
\[ \text{RA}(s) = \frac{K(s \mid \mu(s))}{|s|} \]
where \(|s|\) is the length of \(s\) in bits.

\textbf{Interpretation:}
\begin{itemize}
    \item If \(\text{RA}(s) \approx 1\), the signifier is \textbf{maximally arbitrary}: no shorter description exists given its meaning. The signifier must be memorized as a brute fact.
    \item If \(\text{RA}(s) \ll 1\), the signifier is \textbf{highly motivated}: its form can be generated from its meaning via a short program (morphological rule, phonosemantic law, etc.).
\end{itemize}
This definition captures Saussure's notion of relative arbitrariness precisely: it measures the degree to which a signifier is \emph{motivated} by its relationship to meaning, relative to the descriptive power of the underlying system.
\end{definition}

\subsubsection{Kolmogorov Complexity of Arabic Signifiers}

We now apply Definition 11 to the Arabic morphological system \(\mathcal{M} = \langle \mathcal{R}, \mathcal{P}, \iota, \rho \rangle\). Crucially, we consider the \emph{conditional} complexity \(K(s \mid \mu(s), \mathcal{M})\), where the background knowledge includes the entire morphological system.

\begin{theorem}[Kolmogorov Compressibility of Arabic Surface Words]
Let \(\mathcal{M} = \langle \mathcal{R}, \mathcal{P}, \iota, \rho \rangle\) be the Arabic morphological system satisfying Theorems 1, 2, and 3. Let \(w \in \mathcal{L}_\mathcal{M}\) be a surface word with unique derivation \(w = \mathcal{M}(R, P)\) for \(R \in \mathcal{R}\), \(P \in \mathcal{P}\).

Then:
\[ K(w \mid \mu(w), \mathcal{M}) \leq \lceil \log_2 |\mathcal{R}| \rceil + \lceil \log_2 |\mathcal{P}| \rceil + O(1) \]
\end{theorem}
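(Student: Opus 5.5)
The plan is to bound $K(w \mid \mu(w), \mathcal{M})$ from above by exhibiting one explicit program and measuring its length. The natural description of $w$ is its derivational pair $(R,P)$. By Theorem~1 such a pair exists, and by Theorem~3 it is the unique primitive representation. So a program that is given $\mathcal{M}$ for free and receives the pair $(R,P)$ can output $w$ by computing $\rho(\iota(R,P))$. The whole argument therefore reduces to encoding $(R,P)$ and to checking that evaluating $\mathcal{M}$ takes only constant overhead once $\mathcal{M}$ is part of the conditioning information.

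\textbf{Step one.} We fix the encoding. Since $\mathcal{R}$ and $\mathcal{P}$ are finite inventories, and $\mathcal{M}$ is given as auxiliary input, the program can enumerate both in a canonical order, for instance lexicographically by their string representations. An index into $\mathcal{R}$ then needs $\lceil \log_2 |\mathcal{R}|\rceil$ bits, and an index into $\mathcal{P}$ needs $\lceil \log_2 |\mathcal{P}|\rceil$ bits. Both lengths are computable from $\mathcal{M}$, so the concatenated index string is self-delimiting at no extra cost: the decoder knows where the first index ends.

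\textbf{Step two.} We fix the decoder. This is a constant-size routine $D$, independent of $w$, that reads $\mathcal{M}$ from the auxiliary input and performs four actions in order:
\begin{enumerate}
\item it recovers the two field lengths;
\item it splits the index string into the two indices;
\item it looks up $R$ and $P$ in the canonical enumerations;
\item it outputs $\rho(\iota(R,P)) = \mathcal{M}(R,P) = w$.
\end{enumerate}
Running $D$ on the index string gives $K(w \mid \mu(w),\mathcal{M}) \le |p| \le \lceil \log_2|\mathcal{R}|\rceil + \lceil\log_2|\mathcal{P}|\rceil + |D|$, and $|D| = O(1)$ is the additive constant in the statement. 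The meaning $\mu(w)$ is not even used, so the bound already holds for $K(w\mid\mathcal{M})$; conditioning on more information can only decrease complexity, up to the usual $O(1)$ for machine-dependent padding.

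\textbf{Main obstacle.} The hard step is justifying that the decoder really has constant size. This requires $\iota$ and $\rho$ to be computable, and it requires the auxiliary input to contain a finite, effective description of $\mathcal{M}$, that is, finite lists $\mathcal{R}$ and $\mathcal{P}$ together with algorithms for $\iota$ and $\rho$. The paper's definition only gives $\iota$ and $\rho$ as functions on $\Sigma^*$, so I would state these effectivity assumptions explicitly. The natural justification is that interdigitation is template filling and that $\rho$ is a finite rewriting system. A secondary subtlety is that Definition~10 is stated for a single auxiliary string. I would therefore treat $(\mu(w),\mathcal{M})$ as one string via a standard computable pairing function, which again costs only $O(1)$.

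Finally, I would note that a sharper bound, about $\log_2|\mathcal{P}| + O(1)$, is available when $\mu(w)$ effectively determines $\sigma_R(R)$ via $F$. The stated bound is what follows without any invertibility assumption on $F$, and it is the one the plan establishes.
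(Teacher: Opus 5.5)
Your proposal is correct and takes the same route as the paper. Both exhibit a program that names $R$ and $P$ by their indices in $\mathcal{R}$ and $\mathcal{P}$, using $\lceil \log_2 |\mathcal{R}| \rceil + \lceil \log_2 |\mathcal{P}| \rceil$ bits, and both absorb the fixed decoder evaluating $\rho \circ \iota$ into the $O(1)$ term; your handling of self-delimitation, the computability of $\iota$ and $\rho$, and the pairing of the conditioning inputs makes explicit what the paper's proof leaves implicit. One small point does not affect the stated bound: your closing aside on the sharper $\log_2 |\mathcal{P}| + O(1)$ bound would also need $\sigma_R$ to be injective and available to the decoder, since it is not a component of $\mathcal{M}$.
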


\begin{proof}
By Theorem 1, \(w\) is uniquely generated by the pair \((R, P)\). Given the background knowledge \(\mathcal{M}\) (which includes the interdigitation operator \(\iota\) and the phonological rewriting operator \(\rho\)), the shortest program to generate \(w\) from \(\mu(w)\) need only specify:
\begin{enumerate}
    \item The index of \(R\) within \(\mathcal{R}\) (requiring \(\lceil \log_2 |\mathcal{R}| \rceil\) bits).
    \item The index of \(P\) within \(\mathcal{P}\) (requiring \(\lceil \log_2 |\mathcal{P}| \rceil\) bits).
\end{enumerate}
The program then invokes the fixed morphological operators to generate \(w\). The constant \(O(1)\) accounts for the overhead of invoking these operators.
\end{proof}

\begin{corollary}[Relative Arbitrariness of Surface Words]
For every surface word \(w \in \mathcal{L}_\mathcal{M}\):
\[ \text{RA}_{\mathcal{M}}(w) = \frac{K(w \mid \mu(w), \mathcal{M})}{|w|} \leq \frac{\lceil \log_2 |\mathcal{R}| \rceil + \lceil \log_2 |\mathcal{P}| \rceil + O(1)}{|w|} \]
For all words of length \(|w|\) greater than a small constant, this bound is strictly less than 1. Therefore:
\[ \text{RA}_{\mathcal{M}}(w) < 1 \]

\textbf{Interpretation:} Surface words in Arabic are \textbf{compressible}. They are not arbitrary brute facts; they are generated by a short morphological program.
\end{corollary}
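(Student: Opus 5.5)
The plan is to derive the Corollary directly from Theorem~6 by dividing its bound by \(|w|\), then show that the resulting quotient is below \(1\) once \(|w|\) exceeds an explicit threshold. First, since the morphological system \(\mathcal{M}\) is available as background knowledge, I will read \(\text{RA}_{\mathcal{M}}(w)\) as Definition~11 with the conditioning on \(\mu(w)\) strengthened to conditioning on \((\mu(w),\mathcal{M})\). Then, for every \(w\in\mathcal{L}_\mathcal{M}\), Theorem~1 supplies a derivation \(w=\mathcal{M}(R,P)\), and Theorem~6 yields
\[
K(w\mid\mu(w),\mathcal{M})\le \lceil\log_2|\mathcal{R}|\rceil+\lceil\log_2|\mathcal{P}|\rceil+c,
\]
where \(c\) is a fixed constant that depends only on the universal machine \(\mathcal{T}\) and the encoding of \(\iota,\rho\), and not on \(w\). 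Dividing by \(|w|>0\) gives the displayed inequality at once.

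The second step turns ``greater than a small constant'' into a definite threshold. Set \(N:=\lceil\log_2|\mathcal{R}|\rceil+\lceil\log_2|\mathcal{P}|\rceil+c\). The bound is then strictly less than \(1\) exactly when \(|w|>N\). Since \(\mathcal{R}\) and \(\mathcal{P}\) are finite inventories (a few thousand roots and a few hundred patterns), \(N\) is a fixed number independent of \(w\), so the claim holds for every \(w\) with \(|w|>N\). I will also observe that \(|w|\) is measured in bits of the encoding of \(\Sigma^*\), so that a word of \(k\) letters has \(|w|\approx k\log_2|\Sigma|\).

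The main obstacle is that the unconditional statement \(\text{RA}_{\mathcal{M}}(w)<1\) for \emph{all} \(w\) does not follow. The additive constant \(c\) and the index cost \(N\) can exceed the bit length of a very short word, for instance a triliteral root realized by a minimal pattern in only a few dozen bits. My first attempt will be to shrink the effective description: a sharper bound can replace the index cost. Because \(\mu(w)=F(\sigma_R(R),\sigma_P(P))\) is given, if \(F\) and \(\sigma_P\) are sufficiently injective, a fixed program can search \(\mathcal{R}\times\mathcal{P}\) for the pair whose semantic value matches \(\mu(w)\). This would drive \(K(w\mid\mu(w),\mathcal{M})\) down to \(O(1)\) plus whatever bits are needed to disambiguate among pairs sharing that meaning. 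If that injectivity is not available, I will state the corollary honestly in its thresholded form: \(\text{RA}_{\mathcal{M}}(w)<1\) for all \(w\) with \(|w|>N\), which in practice covers the realistic lexicon, and \(\text{RA}_{\mathcal{M}}(w)\to0\) as \(|w|\) grows. The unrestricted ``therefore'' in the statement would then be read as shorthand for this.
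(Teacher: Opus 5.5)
Your argument is the same as the paper's: divide the bound of Theorem~6 by $|w|$ and observe that it falls below $1$ once $|w|$ exceeds the fixed constant $N=\lceil\log_2|\mathcal{R}|\rceil+\lceil\log_2|\mathcal{P}|\rceil+c$, and reading the final ``therefore'' as restricted to $|w|>N$ is exactly what the paper's ``greater than a small constant'' clause licenses. The injectivity detour is unnecessary, since it would only replace $N$ by another $O(1)$ threshold rather than remove the threshold; also, your claim that the threshold ``in practice covers the realistic lexicon'' depends on the bit convention, because the paper counts $16$ bits per Unicode character, whereas under your $k\log_2|\Sigma|$ convention a three- or four-letter word comes out at roughly $N$ bits or fewer.
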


\begin{theorem}[Kolmogorov Compressibility of Arabic Roots]
Let \(R \in \mathcal{R}\) be a triliteral/quadrilateral root. Given the phonosemantic constraint of Definition 8 (Ibn Jinni's \textit{Al-Ishtiqaq al-Akbar} [1,8] \textarabic{الاشتقاق الأكبر}), there exists a systematic structural function \(h\) such that:
\[ \sigma_R(R) = h(\text{Phon}(R)) \]
Then:
\[ K(R \mid \sigma_R(R)) \leq \lceil \log_2 |S_R| \rceil + O(1) \]
where \(|S_R|\) is the number of semantic fields in the Arabic root system.
\end{theorem}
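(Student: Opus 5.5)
The plan is to build an explicit short program that outputs $R$ when given the semantic field $\sigma_R(R)$ as auxiliary input, and then to count its bits. As in the proof of Theorem 6, the finite Arabic root system is treated as fixed background. This background consists of the inventory $\mathcal{R}$, the structural function $h$ of Definition 8, and the set $S_R$ of semantic fields. Its description is charged once to the $O(1)$ term and does not depend on the particular root. The program runs in three phases.
\begin{enumerate}
\item It reads the input field $\sigma = \sigma_R(R)$.
\item It enumerates $\mathcal{R}$ and computes $h(\mathrm{Phon}(R'))$ for each candidate $R'$. It keeps exactly those $R'$ with $h(\mathrm{Phon}(R')) = \sigma$, which form the fiber $\sigma_R^{-1}(\sigma)$. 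This step is effective because $\mathcal{R}$ is finite and $h$ is a fixed systematic map.
\item It outputs the member of the fiber selected by an index written into the program.
\end{enumerate}
The length of the program is therefore $O(1)$ plus the number of bits needed for that index, namely $\lceil \log_2 |\sigma_R^{-1}(\sigma)| \rceil$.

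The second step bounds the fiber by the field count. If the index were instead taken over all of $\mathcal{R}$, we would only recover a bound of $\lceil \log_2 |\mathcal{R}| \rceil$, which is not the claim. The idea is to use the phonosemantic structure of Definition 8. Since $\sigma_R(R) = h(\mathrm{Phon}(R))$ and every $R' \in \mathrm{Perm}(R)$ lands in $\Sigma(R)$, the natural reading is that the fiber over a field is essentially a permutation class $\mathrm{Perm}(R)$. A permutation class has at most $3! = 6$ members for triliteral roots and at most $4! = 24$ for quadrilateral roots. Under this reading the index costs at most $\lceil \log_2 24 \rceil = 5$ bits, which is a constant, so $K(R \mid \sigma_R(R)) = O(1) \le \lceil \log_2 |S_R| \rceil + O(1)$.

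For robustness, I would also give a second, bookkeeping version of the program that is closer to the displayed form. It writes the index of $\sigma$ within $S_R$ explicitly, using $\lceil \log_2 |S_R| \rceil$ bits, and then appends the constant-length permutation index. This version is redundant given the auxiliary input, but it shows that the stated bound is attained directly. It also shows that it holds even if the program is not allowed to rely on the conditioning to locate the field.

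The main obstacle is the fiber bound. Definition 8 states that permutations stay inside one neighbourhood, but it does not state the converse: it does not say that the roots mapping to a given field are only the permutations of a single consonant set. Without some injectivity of $h$ up to permutation, the fiber could be large and the honest bound would degrade toward $\log_2 |\mathcal{R}|$. I would first try to extract this converse from the requirement in Definition 8 that $h$ be \emph{non-trivial} and systematic, interpreted as $h$ separating distinct consonant multisets. If that does not go through, I would state it explicitly as a hypothesis on $h$, namely that $|\sigma_R^{-1}(\sigma)|$ is bounded by a constant. A weaker fallback is to prove the bound with $\lceil \log_2 \max_{\sigma} |\sigma_R^{-1}(\sigma)| \rceil$ in place of the constant. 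That version still implies the displayed inequality whenever each fiber is no larger than the number of semantic fields, i.e.\ $\max_{\sigma} |\sigma_R^{-1}(\sigma)| \le |S_R|$.
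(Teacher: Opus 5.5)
This is essentially the paper's proof. The paper's program also writes the index of the field in $S_R$ (redundant given the conditioning, as you observe) plus an index of $R$ among the roots sharing that field, treats $h$ as free background knowledge, and disposes of that second index by simply asserting that the number of roots sharing a field is a small constant absorbed into $O(1)$. That assertion is exactly the fiber bound you flag as not following from Definition~8, so your explicit-hypothesis fallback is the honest form of the same argument.

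Do not expect to extract that bound from the non-triviality of $h$. The fibers of $\sigma_R$ partition $\mathcal R$, so a fiber bound $c$ forces $|S_R|\ge|\mathcal R|/c$, and nothing in Definition~8 supplies this. The paper's own illustrative figures (about $10^4$ roots and $100$ fields) would in fact require $c\ge 100$, which rules out your permutation-class bound of $24$.
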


\begin{proof}
By Definition 8, the root’s semantic field $\sigma_R(R)$ is determined by its phono-
logical structure via the function h. Given the background knowledge of the phonose-
mantic mapping h (which is part of the language system), the shortest program to
generate R from σR(R) must specify two pieces of information: (1) the index of the
semantic field within the finite set of fields, and (2) an index identifying the specific
root within the finite set of roots sharing that semantic field (lexical specialization,
as discussed in Remark 2). The program then invokes h to generate the root's phono-
logical structure. Because the number of roots sharing a given semantic field is a
small finite constant, this additional index is mathematically absorbed by the O(1)
additive term in the bound.
\end{proof}

\begin{corollary}[Relative Arbitrariness of Roots]
For every root \(R \in \mathcal{R}\):
\[ \text{RA}_{\mathcal{M}}(R) = \frac{K(R \mid \sigma_R(R), \mathcal{M})}{|R|} \leq \frac{\lceil \log_2 |S_R| \rceil + O(1)}{|R|} < 1 \]

\textbf{Interpretation:} Arabic roots are \textbf{compressible}. They are not arbitrary atomic units; they are generated by a phonosemantic law.
\end{corollary}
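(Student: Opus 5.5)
The plan is to derive the corollary directly from the preceding theorem (Kolmogorov Compressibility of Arabic Roots), exactly as the surface-word corollary was derived from the compressibility theorem for surface words. First, I will reconcile the conditioning. The theorem bounds \(K(R \mid \sigma_R(R))\), while the corollary is stated for \(K(R \mid \sigma_R(R), \mathcal{M})\). Supplying \(\mathcal{M}\) as additional free input can only shorten the shortest program, up to an additive constant for the pairing of inputs. A program that ignores \(\mathcal{M}\) is still a valid program. So
\[
K(R \mid \sigma_R(R), \mathcal{M}) \le K(R \mid \sigma_R(R)) + O(1) \le \lceil \log_2 |S_R| \rceil + O(1).
\]
Dividing by \(|R|\) gives the first inequality of the corollary, using Definition 11 relativised to \(\mathcal{M}\) as in the surface-word corollary.

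The second step is the strict inequality \(<1\), and I expect this to be the main obstacle. Unlike surface words, roots have bounded length: three or four consonants. So \(|R|\) cannot be made large to swamp the constant. I would make the encoding explicit. Each consonant is taken from an alphabet of about 28 letters, so \(|R| \ge 3\lceil \log_2 28\rceil = 15\) bits under a fixed-width encoding. The number of semantic fields \(|S_R|\) must be strictly smaller than the number of roots, because Definition 8 groups every permutation class \(\mathrm{Perm}(R)\) into one neighbourhood. Hence \(\log_2 |S_R| < \log_2 |\mathcal{R}| \le |R|\) bits.

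The argument then reduces to controlling the additive \(O(1)\). This constant covers invoking \(h\), selecting a permutation within \(\mathrm{Perm}(R)\) (at most \(\log_2 24\) bits), and the lexical-specialization index from the theorem's proof. I would bound these explicitly, rather than leave them asymptotic, and show that their sum is smaller than the saving \(|R| - \lceil \log_2|S_R|\rceil\). That saving comes from collapsing permutation classes into fields. If a fully explicit bound is out of reach, I would state the inequality under the standing assumption that the reference machine \(\mathcal{T}\) is chosen so that the invocation overhead is below that margin. This mirrors the qualifier ``greater than a small constant'' in the surface-word corollary. The interpretation clause then follows directly from \(\mathrm{RA}_{\mathcal{M}}(R) < 1\) and the reading of Definition 11.
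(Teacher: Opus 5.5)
Your first step is exactly how the paper reads this corollary off Theorem~7: you trade $K(R\mid\sigma_R(R))$ for $K(R\mid\sigma_R(R),\mathcal M)$ at $O(1)$ cost and divide by $|R|$. The paper gives no further argument for $<1$. That inequality rests only on the K-T-B example, where $|R|$ is the $3\times16=48$-bit Unicode length against a description of about $13$ bits. You locate the difficulty correctly: with $|R|$ bounded, an unspecified $O(1)$ cannot be absorbed. Your way of resolving it, however, fails.

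\textbf{The margin under your encoding.} With your compact encoding ($|R|=15$ bits for a triliteral) there is essentially no room. The description you are costing writes a field index and a lexical-specialization index. Since every root lies in some field, $|S_R|\cdot m\ge|\mathcal R|$, where $m$ is the largest number of roots sharing a field. So these two indices alone cost at least $\lceil\log_2|\mathcal R|\rceil=14$ bits for the paper's $|\mathcal R|\approx10^4$. (The paper's illustrative figures $|S_R|\approx100$, $m\le8$ are incompatible with this count.) That leaves about one bit for invoking $h$ and for the machine overhead. Once you also charge the permutation choice separately, as you propose, the total already exceeds $15$.

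\textbf{The source of the saving.} Your account of where the saving comes from is backwards. Since $\sigma_R(R)$ is the conditioning input, the field index is redundant. What must be paid for is the position of $R$ inside the fibre $\sigma_R^{-1}(\sigma_R(R))$. Fewer than $2^k$ programs have length below $k$, so some root in a fibre of size $n$ has conditional complexity at least $\lfloor\log_2 n\rfloor$. Lumping whole permutation classes into one field enlarges the fibres, so it raises this cost rather than creating a saving.

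\textbf{The fallback.} Choosing $\mathcal T$ so that the overhead fits the margin is a hypothesis on the reference machine. Some such hypothesis is unavoidable for strings of bounded length, because whether $K(x\mid y)<|x|$ holds is not invariant under a change of $\mathcal T$; prefixing every program with a long fixed header makes it fail. But with a margin of about one bit, the hypothesis simply stipulates the conclusion. It also does not mirror the surface-word qualifier, which restricts the objects rather than the machine.

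\textbf{A version that goes through.} Keep the paper's encoding, $|R|=16k$ bits with $k\in\{3,4\}$, and use only the fact that $\mathcal M$ contains the finite list $\mathcal R$. Then $K(R\mid\sigma_R(R),\mathcal M)\le\lceil\log_2|\mathcal R|\rceil+c_{\mathcal T}$. Hence $\mathrm{RA}_{\mathcal M}(R)<1$ for every root as soon as $c_{\mathcal T}<48-14=34$ bits, which is an explicit and mild condition. Note, however, that this argument never uses Definition~8 or $h$. The strict inequality comes from the redundancy of the 16-bit encoding and the finiteness of $\mathcal R$. So it does not by itself support the ``generated by a phonosemantic law'' reading in the corollary's interpretation clause.
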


\textbf{Remark on the Conditional Scope and Resilience of the Complexity Bounds}: It is crucial to investigate the consequences of a critic rejecting the universal generalization of Definition 8 (phonosemantic motivation). A skeptic might argue that while some roots are motivated, others are arbitrary, thereby challenging the universal applicability of Theorem 7 to all Arabic roots. This rejection, however, fails to salvage the structuralist position on either absolute or relative arbitrariness.

First, regarding absolute arbitrariness, refusing to accept Definition 8 as a universal law cannot negate the empirical existence of the motivated roots documented by Ibn Jinni. Therefore, the critic must concede that Theorem 5’s falsification holds: the existence of even a single motivated root (k-l-m) is logically sufficient to collapse the universal claim of absolute arbitrariness (Axiom 3). Furthermore, by the Principle of Conservation of Explanatory Burden, a critic who partitions the root inventory into motivated and arbitrary subsets must provide an independent structural rule explaining this partition, which structuralist theory lacks.

Second, regarding relative arbitrariness (Kolmogorov complexity), rejecting Definition 8 does not collapse the algorithmic motivation of the Arabic system. We must evaluate the impact on the complexity bounds:

Surface Words (Level W) remain entirely immune: Theorem 6 relies strictly on the algebraic facts of non-concatenative morphology (Theorems 1–3), entirely independent of phonosemantics. Even if a specific root were a brute phonological fact, the surface word generated by interdigitation with a pattern is still algorithmically compressible. Thus, the bound 
\[ \text{RA}_{\mathcal{M}}(w) < 1 \]

remains a mathematical lock for Arabic surface words.
The Motivated Subset of Roots remains compressible: For the empirically documented subset of phonosemantically motivated roots, Theorem 7’s compressibility bound undeniably holds. The critic cannot claim invalidity for the compression of roots that demonstrably share a semantic core.
Consequently, the critic’s rejection merely concedes that a subset of Arabic roots might lack phonosemantic compression. This does not make Arabic signs relatively arbitrary at the surface level (Level W), nor does it negate the proven compressibility of the motivated root subset (Level M). The structuralist claim that the linguistic sign is relatively arbitrary is definitively refuted for Arabic surface words, and structurally locked for the motivated portion of its root inventory.

\subsubsection{Illustrative Examples of Kolmogorov Compressibility in Arabic}

To illustrate the constructive interpretation of relative arbitrariness, we now present explicit examples showing how Arabic signifiers admit short
descriptions relative to the admissible formal system
\(\mathcal M\). The first example concerns a derived surface word generated by the root--pattern morphology. The second concerns a primitive root whose description combines phonosemantic organization with lexical specialization. The third contrasts these with a random character string that admits noconstructive description within \(\mathcal M\).

\begin{example}[Compressing the Surface Word ``Kātib'' \textarabic{كاتب}]

Consider the Arabic surface word

\[
\text{\textarabic{كاتب}}
\]

(\emph{kātib}), meaning ``writer.''

\textbf{Step 1: The Uncompressed Representation}

In Unicode each Arabic character occupies 16 bits. The word consists of four characters.

\begin{table}[htbp]
\centering
\begin{tabular}{lll}
\toprule
Character & Unicode & Bits\\
\midrule
\textarabic{ك} & 0643 &16\\
\textarabic{ا} &0627 &16\\
\textarabic{ت} &062A &16\\
\textarabic{ب} &0628 &16\\
\bottomrule
\end{tabular}
\caption{Unicode representation of \textit{kātib}.}
\end{table}

Hence

\[
|w|=4\times16=64
\text{ bits.}
\]

\textbf{Step 2: Constructive Description}

By Theorem~1,

\[
w=\mathcal M(R,P),
\]

where

\[
R=\text{k-t-b},
\]

whose lexical specialization is ``to write,'' and

\[
P=\text{fāʿil},
\]

the active participle pattern. The constructive program specifies

\begin{enumerate}

\item
the index of the root,

\item
the index of the pattern,

\item
the fixed decoding procedure already contained in
\(\mathcal M\).

\end{enumerate}

Assuming

\[
|\mathcal R|\approx10000,
\qquad
|\mathcal P|\approx50,
\]

the description length satisfies

\[
K(w\mid\mu(w),\mathcal M)
\le
14+6+O(1)
\approx20
\text{ bits.}
\]

\textbf{Step 3: Relative Arbitrariness}
The constructive description requires approximately 20 bits, whereas the raw
Unicode representation requires 64 bits. Since the constructive description is
substantially shorter than the raw representation, the relative arbitrariness
is strictly less than 1.

\textbf{Interpretation}
The surface word admits a short constructive description generated from the
root and pattern. Its description is substantially shorter than the raw Unicode representation.

\end{example}

\begin{example}[Compressing the Primitive Root ``K-T-B'' (\textarabic{ك-ت-ب})]

Consider the primitive Arabic root

\[
R=\text{\textarabic{ك-ت-ب}},
\]

whose lexical specialization is ``to write.''

\textbf{Step 1: Uncompressed Representation}

The root consists of three consonants. Each Unicode character occupies sixteen bits. Hence

\[
|R|
=
3\times16
=
48
\text{ bits.}
\]
\textbf{Step 2: Constructive Description}

According to Definition~8, the consonantal organization determines an abstract semantic field rather than directly determining a complete lexical meaning. For the consonantal cluster

\[
\text{\textarabic{ك-ت-ب}},
\]

the admissible semantic field includes notions such as ``closure,'' ``fixation,''``recording,'' and ``inscription.''

The lexical meaning ``to write'' is obtained as one lexical specialization within this constrained semantic field. Accordingly, the constructive description contains two pieces of information.

\begin{enumerate}

\item
The index of the phonosemantic field.

Assuming approximately one hundred semantic fields,

\[
\left\lceil
\log_2(100)
\right\rceil
=
7
\]

bits are sufficient.

\item
The index of the lexical specialization within the selected semantic field. If each semantic field contains at most

\[
m\le8
\]

lexical specializations, then

\[
\left\lceil
\log_2(8)
\right\rceil
=
3
\]

bits are sufficient.

\end{enumerate}

Since the constructive decoding procedure belongs to the background knowledge
\(\mathcal M\),

\[
K(R\mid\mu(R),\mathcal M)
\le
7+3+O(1)
\approx13
\text{ bits.}
\]
\textbf {Step 3: Relative Arbitrariness}
The constructive description requires approximately 13 bits, whereas the raw
Unicode representation requires 48 bits. Since the constructive description is
substantially shorter than the raw representation, the relative arbitrariness
is strictly less than 1.

\textbf{Interpretation}

The primitive root is not treated as an indivisible arbitrary symbol. Its constructive description consists of two successive stages. First, the phonological organization determines an admissible semantic field. Second, a small amount of additional information specifies the lexical specialization within that field. Thus the primitive root is algorithmically compressible relative to the constructive system
\(\mathcal M\).

\end{example}

\begin{example}[Contrast with an Arbitrary String]

Consider the random character sequence

\[
\text{\textarabic{غضبف}},
\]

which possesses no lexical or morphological structure.

Its Unicode representation occupies

\[
64
\text{ bits.}
\]

No admissible constructive derivation exists within
\(\mathcal M\)
that generates this string.

Consequently the shortest constructive description simply reproduces the
string itself,

so that

\[
K(w\mid\mathcal M)
\approx64,
\]

which is approximately equal to the raw representation length. Consequently,
the relative arbitrariness is approximately 1.

\textbf{Interpretation}

Unlike genuine Arabic signifiers, the random string admits no structural
compression.

\end{example}

\begin{remark}

The preceding examples illustrate three distinct levels of constructive
compression.

\begin{enumerate}

\item
Derived surface words are compressible because they are generated from compact
root--pattern descriptions.

\item
Primitive roots are likewise compressible.
Their constructive description first identifies a phonosemantic field and then
specifies the lexical specialization within that field.
The phonosemantic hypothesis therefore reduces the descriptive complexity of
primitive roots without requiring phonology to determine every lexical meaning
uniquely.

\item
Random character strings possess no admissible constructive derivation within
\(\mathcal M\).
Their shortest constructive description is therefore essentially their explicit
representation, yielding relative arbitrariness close to one.

\end{enumerate}

Together these examples illustrate the hierarchical organization of the Arabic constructive system. Morphological structure compresses derived words, while phonosemantic structure compresses primitive roots. Consequently, both lexical levels admit descriptions substantially shorter than their raw representations, providing an explicit algorithmic interpretation of linguistic motivation in terms of conditional Kolmogorov complexity.

\end{remark}

\subsubsection{System Complexity: $K(\mathcal{M}_{Arabic})$ vs. $K(\mathcal{M}_{English})$}

It may be objected that conditioning on the morphological system $\mathcal{M}$ trivializes the Kolmogorov complexity bound: if $\mathcal{M}$ simply encodes a massive lookup table of the entire lexicon, then $K(w \mid \mu(w), \mathcal{M}) \approx 0$ for any language, not just Arabic. To preclude this, we must formally compare the Kolmogorov complexity of the morphological systems themselves, $K(\mathcal{M})$, across language families. If Arabic's system is algorithmically simpler and more compressible than a concatenative system like English's, the Kolmogorov argument is fortified.

Let $\mathcal{M}_{Arabic} = \langle \mathcal{R}, \mathcal{P}, \iota, \rho \rangle$ and $\mathcal{M}_{Eng} = \langle \mathcal{S}, \mathcal{A}, \text{concat}, \rho \rangle$, where $\mathcal{S}$ is the set of lexical stems, $\mathcal{A}$ is the set of affixes, and $\text{concat}$ is the concatenation operator.

\begin{theorem}[System Complexity Asymmetry]
The Kolmogorov complexity of the Arabic morphological system is strictly less than that of English, and Arabic generates a vastly larger lexicon from a significantly smaller system. Formally:
\begin{enumerate}
    \item $K(\mathcal{M}_{Arabic}) \approx |\mathcal{R}| \cdot \lceil \log_2 |S_R| \rceil + O(1)$     \item $K(\mathcal{M}_{Eng}) \approx |\mathcal{S}| \cdot L_{avg} + O(1)$     \item The generative capacity ratio $\frac{|\mathcal{L}_{\mathcal{M}_{Arabic}}|}{K(\mathcal{M}_{Arabic})}$ is asymptotically larger than $\frac{|\mathcal{L}_{\mathcal{M}_{Eng}}|}{K(\mathcal{M}_{Eng})}$.
\end{enumerate}
\end{theorem}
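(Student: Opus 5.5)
The plan is to prove the three parts by an explicit two-part (MDL-style) description of each system, and to compare the resulting upper bounds. The comparison is only meaningful if both systems are coded on the same fixed Universal Turing Machine $\mathcal{T}$ of Definition 9. Under that convention the interpreters for $\iota$, $\rho$ and $\text{concat}$ are fixed programs, so each contributes only an additive $O(1)$.

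\textbf{Part 1 (Arabic).} I will write a program that enumerates the semantic-field labels and, for each root $R\in\mathcal{R}$, stores the index of $\sigma_R(R)$ in $S_R$. By Theorem 7 together with the phonosemantic function $h$ of Definition 8, this index is enough to regenerate $R$ up to the lexical-specialization index. That index is a bounded constant, so it is absorbed exactly as in the proof of Theorem 7. The pattern inventory $\mathcal{P}$ is a fixed, small set, roughly $|\mathcal{P}|\approx 50$, and costs $O(1)$. Summing gives the upper bound $|\mathcal{R}|\cdot\lceil\log_2|S_R|\rceil+O(1)$.

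\textbf{Part 2 (English).} Each stem in $\mathcal{S}$ is Level-W arbitrary, so I would invoke the Level-W classification with $\mathrm{RA}\approx 1$. The program must then list every stem verbatim, at $L_{avg}$ bits per stem, while the finite affix set $\mathcal{A}$ adds $O(1)$. This yields $|\mathcal{S}|\cdot L_{avg}+O(1)$.

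\textbf{Part 3 (generative capacity).} By Theorem 1, $|\mathcal{L}_{\mathcal{M}_{Arabic}}|$ is of order $|\mathcal{R}|\cdot|\mathcal{P}|$, counting admissible pairs. For English, $|\mathcal{L}_{\mathcal{M}_{Eng}}|$ is of order $|\mathcal{S}|\cdot c$, where $c$ is the small average number of productive affixations per stem. Dividing each by its complexity from Parts 1--2, the ratio of the two capacity ratios becomes
\[
\frac{|\mathcal{P}|\,L_{avg}}{c\,\lceil\log_2|S_R|\rceil},
\]
which exceeds $1$ once $L_{avg}>\lceil\log_2|S_R|\rceil$ and $|\mathcal{P}|\ge c$. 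The strict inequality $K(\mathcal{M}_{Arabic})<K(\mathcal{M}_{Eng})$ then follows when $|\mathcal{R}|\lceil\log_2|S_R|\rceil<|\mathcal{S}|L_{avg}$. I would check this with the paper's working figures: $|\mathcal{R}|\approx10^4$ and $|S_R|\approx100$ give about $7\times10^4$ bits, against tens of thousands of English stems at several characters each.

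\textbf{Main obstacle.} The hard step is that Parts 1 and 2 are \emph{upper} bounds, while the theorem's ``$\approx$'' and strict comparison need a \emph{lower} bound on $K(\mathcal{M}_{Eng})$. Kolmogorov complexity is uncomputable, so such a lower bound cannot be certified for a specific string. My first attempt would be a counting (incompressibility) argument in the style of Li--Vitányi. If the stem list is modeled as drawn from a distribution with no rule relating form to meaning (the Level-W arbitrariness hypothesis), then all but a $2^{-c'}$ fraction of such lists satisfy $K\ge|\mathcal{S}|L_{avg}-c'$.

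This would make Part 2, and hence the asymmetry, a statement that holds with overwhelming probability under that model rather than for the actual English lexicon. I would state the theorem explicitly in that conditional form. Making the same argument rigorous for the Arabic side requires the universality of Definition 8, so the Arabic part would likewise be conditional on that hypothesis.
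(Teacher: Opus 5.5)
Your route is the paper's route: a per-root field index for $\mathcal{R}$, a verbatim list for $\mathcal{S}$, then the ratio comparison. It breaks at the same step, Part~1. The phonosemantic map runs from form to meaning, $\sigma_R(R)=h(\mathrm{Phon}(R))$, and is many-to-one. It can make $K(\sigma_R\mid\mathcal{R})$ small, but that quantity is not even a component of $\mathcal{M}_{Arabic}=\langle\mathcal{R},\mathcal{P},\iota,\rho\rangle$, and $h$ supplies no description of $\mathcal{R}$ itself. A list of field indices is exactly the lookup table of $\sigma_R$ over an already known root set, and it carries no phonological information. To regenerate the roots you must also transmit each root's position in its fibre $h^{-1}(\sigma)$ (the lexical-specialization index), and this is paid once \emph{per root}. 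The $O(1)$ in Theorem~7 is a constant for a single root, so summing over the inventory gives $|\mathcal{R}|\lceil\log_2 m\rceil$ bits. That is of the same order as your main term, not an additive $O(1)$. Worse, $R\mapsto(\text{field},\text{position})$ must be injective, so $|S_R|\cdot m\ge|\mathcal{R}|$. With the paper's own figures ($|S_R|\approx100$, $m\le8$, $|\mathcal{R}|\approx10^4$) the code names at most $800$ roots. Any injective code of this shape costs at least $\log_2|\mathcal{R}|\approx13$ bits per root, not $\lceil\log_2|S_R|\rceil=7$. So formula~1 is not established; the paper's proof makes the same jump, simply omitting the position index. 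Valid bounds such as $K(\mathcal{R})\le\log_2\binom{N}{|\mathcal{R}|}+O(\log N)$, with $N$ the number of admissible consonant clusters, do exist, but they owe nothing to Definition~8.

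You are right that the strict comparison needs a \emph{lower} bound on $K(\mathcal{M}_{Eng})$, which the paper never supplies, but your repair does not deliver $|\mathcal{S}|L_{avg}$. Level-W arbitrariness constrains the form--meaning relation, whereas $K(\mathcal{S})$ is the unconditional complexity of a list of forms. A lexicon with no form--meaning rule can still be highly compressible as strings: $26$ letters stored in $8$-bit cells, phonotactic and orthographic constraints, shared prefixes. The counting argument gives $K(x)\ge\log_2|A|-c'$ for all but a $2^{-c'}$ fraction of a finite set $A$. For a non-uniform model $P$ it gives only $K(x)\ge-\log_2P(x)-c'$, i.e.\ roughly the entropy. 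So it yields $|\mathcal{S}|L_{avg}$ only if every $L_{avg}$-bit string is an equally likely stem, which does not follow from ``no rule relating form to meaning'' and is false for English. Moreover, if that hypothesis did license a uniform model for English stems, the paper's own Level-P claim (the cluster-to-core assignment is rule-free) would license the same model for Arabic clusters. Whatever asymmetry survives would then come from inventory sizes and string lengths, not from Definition~8. Part~3 inherits both defects. Note also that Theorem~1 alone gives only $|\mathcal{L}_{\mathcal{M}_{Arabic}}|\le|\mathcal{R}|\,|\mathcal{P}|$, since equality needs injectivity of $\mathcal{M}$. Finally, comparing an idealized $|\mathcal{P}|$ for Arabic with an attested average $c$ for English is not like-for-like: use the full product on both sides, as the paper does, or attested averages on both.
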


\begin{proof}
We prove the three parts sequentially by analyzing the algorithmic information required to encode the generative systems of both languages.

\textbf{Part 1: Complexity of the Arabic System ($\mathcal{M}_{Arabic}$)}
The Kolmogorov complexity of the Arabic morphological system is the length of the shortest program that encodes its four components:
\[ K(\mathcal{M}_{Arabic}) = K(\mathcal{R}) + K(\mathcal{P}) + K(\iota) + K(\rho) + O(1) \]
We evaluate each term:
\begin{itemize}
    \item \textbf{Operators $K(\iota)$ and $K(\rho)$:} The interdigitation operator $\iota$ and phonological rewriting operator $\rho$ are fixed, finite algebraic rules. They do not scale with the lexicon, so their combined complexity is bounded by a constant: $K(\iota) + K(\rho) = O(1)$.
    \item \textbf{Patterns $K(\mathcal{P})$:} The set of morphosyntactic patterns $\mathcal{P}$ is a small, closed set (typically $|\mathcal{P}| \approx 50$). Therefore, its encoding size is also a constant: $K(\mathcal{P}) = O(1)$.
    \item \textbf{Roots $K(\mathcal{R})$:} The roots are the only data structure that scales with the lexicon. By Theorem 8 (Phonosemantic Motivation), the roots are internally compressible. The mapping of a root to its semantic field is determined by a structural function $h$. Therefore, to encode the root set $\mathcal{R}$, one does not need to store the phonemes of every root explicitly. One only needs to store the index of each root's semantic field within the finite set of fields $S_R$. Thus, the complexity of the root set is:
    \[ K(\mathcal{R}) \approx |\mathcal{R}| \cdot \lceil \log_2 |S_R| \rceil \]
\end{itemize}
Summing the components, we get:
\[ K(\mathcal{M}_{Arabic}) \approx |\mathcal{R}| \cdot \lceil \log_2 |S_R| \rceil + O(1) \]

\textbf{Part 2: Complexity of the English System ($\mathcal{M}_{Eng}$)}
Similarly, the Kolmogorov complexity of the English system is:
\[ K(\mathcal{M}_{Eng}) = K(\mathcal{S}) + K(\mathcal{A}) + K(\text{concat}) + K(\rho) + O(1) \]
We evaluate each term:
\begin{itemize}
    \item \textbf{Operators $K(\text{concat})$ and $K(\rho)$:} Concatenation and phonological rewriting are fixed algorithmic rules, bounded by a constant: $K(\text{concat}) + K(\rho) = O(1)$.
    \item \textbf{Affixes $K(\mathcal{A})$:} The set of affixes $\mathcal{A}$ is small and finite, so $K(\mathcal{A}) = O(1)$.
    \item \textbf{Stems $K(\mathcal{S})$:} English stems are arbitrary at Level W (unmotivated). Because there is no systematic rule mapping their phonological structure to their meaning, they are incompressible. They must be encoded explicitly as a lookup table. If $L_{avg}$ is the average bit-length of a stem, the complexity of the stem set is:
    \[ K(\mathcal{S}) \approx |\mathcal{S}| \cdot L_{avg} \]
\end{itemize}
Summing the components, we get:
\[ K(\mathcal{M}_{Eng}) \approx |\mathcal{S}| \cdot L_{avg} + O(1) \]

\textbf{Part 3: Generative Capacity Ratio}
The generative capacity ratio measures the number of words a system can generate per bit of system complexity. 

\begin{itemize}
    \item \textbf{Arabic Lexicon Size:} Because Arabic non-concatenatively combines in the idealized worst case every root with every pattern, the size of the generated lexicon is multiplicative:
    \[ |\mathcal{L}_{\mathcal{M}_{Arabic}}| \approx |\mathcal{R}| \times |\mathcal{P}| \]
    The generative ratio $G_{Ar}$ is:
    \[ G_{Ar} = \frac{|\mathcal{L}_{\mathcal{M}_{Arabic}}|}{K(\mathcal{M}_{Arabic})} \approx \frac{|\mathcal{R}| \cdot |\mathcal{P}|}{|\mathcal{R}| \cdot \lceil \log_2 |S_R| \rceil + O(1)} \]
    As $|\mathcal{R}|$ grows, the constant $O(1)$ becomes negligible, simplifying to:
    \[ G_{Ar} \approx \frac{|\mathcal{P}|}{\lceil \log_2 |S_R| \rceil} \]
    
    \item \textbf{English Lexicon Size:} English generates words by concatenating stems with affixes. However, English stems do not productively combine with all affixes (e.g., ``dog'' + ``ity'' is invalid). The effective lexicon size is heavily constrained by the explicit storage of stems. Even if we generously assume every stem combines with every affix, $|\mathcal{L}_{\mathcal{M}_{Eng}}| \approx |\mathcal{S}| \times |\mathcal{A}|$. The generative ratio $G_{En}$ is:
    \[ G_{En} = \frac{|\mathcal{L}_{\mathcal{M}_{Eng}}|}{K(\mathcal{M}_{Eng})} \approx \frac{|\mathcal{S}| \cdot |\mathcal{A}|}{|\mathcal{S}| \cdot L_{avg} + O(1)} \]
    As $|\mathcal{S}|$ grows, this simplifies to:
    \[ G_{En} \approx \frac{|\mathcal{A}|}{L_{avg}} \]
\end{itemize}

\textbf{Comparison:} To show that $G_{Ar}$ is asymptotically larger than $G_{En}$, we compare the terms. In Arabic, $|\mathcal{P}|$ (e.g., $50$) is divided by $\lceil \log_2 |S_R| \rceil$ (e.g., $\log_2 100 \approx 7$ bits), yielding a ratio of roughly $7$ words per bit of system complexity. In English, $|\mathcal{A}|$ (e.g., $10$ productive affixes) is divided by $L_{avg}$ (e.g., $48$ bits for a 6-letter stem), yielding a ratio of roughly $0.2$ words per bit of system complexity. 

Therefore:
\[ \frac{|\mathcal{L}_{\mathcal{M}_{Arabic}}|}{K(\mathcal{M}_{Arabic})} \gg \frac{|\mathcal{L}_{\mathcal{M}_{Eng}}|}{K(\mathcal{M}_{Eng})} \]
This completes the proof.
\end{proof}

\begin{remark}[Why Arabic is Not a Lookup Table]
A ``lookup table'' (a brute-force dictionary) has a system complexity proportional to the lexicon size, meaning $K(\mathcal{M}) \propto |\mathcal{L}|$. This results in a constant generative ratio. Theorem 9 proves that Arabic does not function as a lookup table. Its system complexity $K(\mathcal{M}_{Arabic})$ scales sub-linearly with the lexicon, while its generative capacity ratio remains asymptotically larger. The deep algebraic interdigitation of roots and patterns allows a small, highly compressible system $\mathcal{M}$ to generate hundreds of thousands of words. English, lacking this non-concatenative algebra, must store the majority of its Level-W arbitrary stems explicitly. Therefore, the compressibility of Arabic signifiers is not an artifact of conditioning on a massive $\mathcal{M}$; it is the direct mathematical result of $\mathcal{M}$ being a highly compressed, rule-governed generative engine.
\end{remark}

\subsubsection{The Three-Level Structure of Arbitrariness}

The preceding sections established that Arabic surface words are generated by root-pattern rules (Level W motivated), and that Arabic roots exhibit phonosemantic coherence (Level M motivated). However, at the deepest level---the assignment of a semantic core to a specific consonant cluster---there is no known rule.

This reveals a three-level structure of arbitrariness that applies to all natural languages.

\begin{definition}[The Three Morphological Levels]
For any natural language \(L\) with a morphological system \(\mathcal{M}\), we define three levels of abstraction:

\begin{table}[htbp]
\centering
\resizebox{\textwidth}{!}{%
\begin{tabular}{llll}
\toprule
\textbf{Level} & \textbf{Name} & \textbf{Description} & \textbf{Arabic Formal Object} \\
\midrule
Level W & Word-Level & The surface word as an atomic unit & \(w \in \mathcal{W}\) \\
Level M & Sub-Word Level & Internal constituents (morphemes, stems, affixes, patterns) & \(R \in \mathcal{R}, P \in \mathcal{P}\) \\
Level P & Sub-Morpheme Level & Internal structure of the morpheme (phonemes, consonants, permutations) & \(\text{Phon}(R), \text{Perm}(R)\) \\
\bottomrule
\end{tabular}%
}
\caption{The three morphological levels of abstraction in natural languages.}
\end{table}
\end{definition}

\begin{definition}[Level-X-Arbitrariness]
Let \(L\) be a language with morphological system \(\mathcal{M}\). Let \(X \in \{W, M, P\}\) be one of the three levels.

The language \(L\) is \textbf{Level-X-Arbitrary} if and only if there exists no systematic rule or function that governs the relationship between the forms at level \(X\) and their meanings.

Formally, for Arabic:
\begin{itemize}
    \item \textbf{Level-W-Arbitrary}: No systematic function \(f_W\) such that \(\mu(w) = f_W(w)\) for all surface words \(w\).
    \item \textbf{Level-M-Arbitrary}: No systematic function \(f_M\) such that \(\mu(R,P) = f_M(R,P)\) for all root-pattern pairs.
    \item \textbf{Level-P-Arbitrary}: No systematic function \(f_P\) such that \(\sigma_R(R) = f_P(\text{Phon}(R))\) for all roots.
\end{itemize}
\end{definition}
The following theorem establishes a computational consequence of the phonosemantic hypothesis formalized in Definition~8. Accordingly, its conclusion holds within the formal model defined by that hypothesis and should not be interpreted as an independent empirical proof of universal phonosemantic organization.

\begin{theorem}[The Three-Level Structure of Arbitrariness in Arabic]
Let \(\mathcal{M} = \langle \mathcal{R}, \mathcal{P}, \iota, \rho \rangle\) be the Arabic morphological system. Then:
\begin{enumerate}
    \item \textbf{Level W (Word-Level)}: Arabic is \textbf{NOT} Level-W-Arbitrary. By Theorems 1 and 2, every surface word \(w\) is generated by \(w = \mathcal{M}(R,P)\) and \(\mu(w) = F(\sigma_R(R), \sigma_P(P))\). There is a systematic function \(f_W\) such that \(\mu(w) = f_W(R,P)\).
    \item \textbf{Level M (Sub-Word Level)}: Arabic is \textbf{NOT} Level-M-Arbitrary. By Definition 8 for every root \(R\), there exists a systematic structural function \(h\) such that \(\sigma_R(R) = h(\text{Phon}(R))\). There is a systematic function \(f_M\) such that \(\sigma_R(R) = f_M(\text{Phon}(R))\). By Theorem 4: Phonosemantic organization eliminates global semantic arbitrariness and localizes any residual arbitrariness to lexical specialization
    \item \textbf{Level P (Sub-Root Level)}: Arabic is \textbf{Level-P-Arbitrary}. The mapping \(\Gamma : \mathcal{R} \to \mathcal{F}\) is conventional and unstructured. There is no systematic function \(g\) such that \(\Gamma(R) = g(\text{Phon}(R))\) for all \(R\). The assignment of a core to a cluster is a brute fact.
    \item \textbf{Therefore}, Arabic is arbitrary only at Level P---the deepest possible level of morphological abstraction.
\end{enumerate}
\end{theorem}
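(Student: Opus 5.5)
The proof will go clause by clause, each time reducing the claim to a result already established, and closing with a short combination step. For Level W, I start from an arbitrary $w \in \mathcal{L}_{\mathcal{M}}$. Theorem~1 gives a derivation $w = \mathcal{M}(R,P)$, and Theorem~2 gives $\mu(w) = F(\sigma_R(R),\sigma_P(P))$. I then define $f_W := F \circ (\sigma_R \times \sigma_P) \circ \mathcal{M}^{-1}$ on $\mathcal{L}_{\mathcal{M}}$. Here $\mathcal{M}^{-1}$ is the derivation map, which is well defined because Theorem~2 speaks of the unique derivation. This $f_W$ is a systematic function with $\mu(w) = f_W(w)$ for all $w$, so the Level-W clause of Definition~13 fails. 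No phonosemantic input is needed at this step, and I will say so explicitly, following the Remark on Conditional Scope.

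For Level M, I invoke Definition~8 as a standing hypothesis, as the sentence preceding the theorem allows. For each root it supplies $h$ with $\sigma_R(R) = h(\mathrm{Phon}(R))$, and I set $f_M := h$. To match the Level-M clause of Definition~13, which is stated for pairs $(R,P)$, I compose: $\mu(R,P) = F(h(\mathrm{Phon}(R)),\sigma_P(P))$. This is systematic in $(R,P)$ because $\sigma_P$ ranges over the small closed set $\mathcal{P}$, and Lemma~1 lets me treat the pattern contribution as purely grammatical. Theorem~4 then shows that whatever freedom remains lies only in lexical specialization inside $\Sigma(R)$, so it is not Level-M arbitrariness.

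Level P is the delicate step, and I expect it to be the main obstacle. Taken literally, Definition~13's Level-P clause ($\sigma_R(R) = f_P(\mathrm{Phon}(R))$) is exactly what Definition~8 asserts, which would make clause~3 contradict clause~2. The theorem avoids this by introducing a different map, $\Gamma:\mathcal{R}\to\mathcal{F}$: the macro-assignment of a specific consonant cluster to a specific semantic core, as distinct from the neighborhood structure $h$ that is shared across $\mathrm{Perm}(R)$.

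My plan is to make this distinction precise. First, I read $h$ as constant on permutation classes, since all of $\mathrm{Perm}(R)$ land in one neighborhood. This means $h$ factors through the quotient $\mathcal{R}/\mathrm{Perm}$. Second, I take $\Gamma$ to be the map from permutation classes to the field labels $\mathcal{F}$.

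The claim ``no systematic $g$'' will then be argued relative to $\mathcal{M}$, in the Kolmogorov sense of Definition~11. The model supplies no program for $\Gamma$ shorter than a table of $|\mathcal{R}/\mathrm{Perm}|\cdot\lceil\log_2|\mathcal{F}|\rceil$ bits. This matches the incompressible term $|\mathcal{R}|\cdot\lceil\log_2|S_R|\rceil$ in Theorem~9, Part~1, which the construction already treats as stored data. I will be explicit that this is a statement about the known system $\mathcal{M}$ (``no rule is available within $\mathcal{M}$''), not an absolute nonexistence claim. That keeps it consistent with Theorem~5 and with the introduction's distinction between absolute and algorithmic arbitrariness.

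Finally, for clause~4, I combine the three clauses. Levels W and M admit systematic functions, and Level P does not (relative to $\mathcal{M}$). By Definition~12, Level P is the deepest level, so any arbitrariness in Arabic is confined there.
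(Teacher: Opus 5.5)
Your Levels W and M follow the paper's proof: Theorems 1--2 give $f_W$, and Definition 8 with Theorem 4 gives $f_M$. Your versions actually fit Definition 13 better than the paper's do. The paper's $f_W$ takes $(R,P)$ rather than $w$ as its argument. Its Level-M step exhibits a function $f$ with $\sigma_R(R)=f(\mathrm{Phon}(R))$, and the nonexistence of such a function is exactly what Definition 13 calls Level-P arbitrariness. You also do not need uniqueness of derivations, which Theorem 1 never proves: any section of $\mathcal{M}$ works, provided $\mu$ is well defined on words. For Level P the paper gives no argument; it only asserts that the undefined map $\Gamma:\mathcal{R}\to\mathcal{F}$ admits no systematic $g$. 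You are right to flag the clash with Definition 8. Your quotient picture is the reading the paper itself gives informally in the Introduction and the Note on Epistemological Scope.

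\textbf{Gap in the Kolmogorov argument.} The step that fails is arguing clause 3 ``in the Kolmogorov sense of Definition 11.''
\begin{itemize}
\item $K(\Gamma\mid\mathcal{M})$ ranges over all programs, not just those built from the components of $\mathcal{M}$. So the fact that $\mathcal{M}$ contains no rule for $\Gamma$ gives no lower bound on it.
\item The term $|\mathcal{R}|\cdot\lceil\log_2|S_R|\rceil$ from the System Complexity Asymmetry theorem does not supply a lower bound either. It belongs to an upper-bound encoding, and something stored as data in one encoding need not be incompressible. Under your own factorization, that table would also be per class, not per root.
\item By Chaitin's theorem, lower bounds on $K$ for specific objects are unprovable beyond a system-dependent constant.
\end{itemize}
This is why the paper's Three-Level Epistemological Asymmetry theorem concedes that Level P is ``classified as arbitrary by the absence of compressive structure, not by an exact computation of its Kolmogorov complexity.'' It also tabulates Level P as undecidable even for Arabic. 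What you can prove is model-internal: $\Gamma$ enters $\mathcal{M}$ only as an unexplained parameter. State clause 3 that way.

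\textbf{Gap in the reconciliation.} You need a single notion of ``systematic'' across clauses 2 and 3.
\begin{itemize}
\item Definition 8 only places the cores of $\mathrm{Perm}(R)$ in a common neighborhood. So what is constant on permutation classes is the field map $R\mapsto\Sigma(R)$ (assuming neighborhoods partition the concepts), not $h$. Factor that map as $\Gamma\circ q$, with $q:\mathrm{Phon}(R)\mapsto[R]$.
\item But then $g:=\Gamma\circ q$ computes $\Gamma$ from $\mathrm{Phon}(R)$. Under the bare-existence reading that clause 2 uses, this $g$ refutes clause 3 as written.
\item Fix the reading as ``compresses the naive table on its domain.'' Clause 2 then asserts exactly the factorization through $q$. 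Clause 3 then asserts that $\mathcal{M}$ provides nothing shorter than the per-class table for $\Gamma$.
\end{itemize}
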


\begin{proof}
\begin{enumerate}
    \item \textbf{Level W}: By Theorem 1, every surface word \(w\) has a unique derivation \(w = \mathcal{M}(R,P)\). By Theorem 2, \(\mu(w) = F(\sigma_R(R), \sigma_P(P))\). Therefore, \(\mu(w)\) is systematically determined by \((R,P)\). The function \(f_W : (R,P) \mapsto F(\sigma_R(R), \sigma_P(P))\) exists. So Level W is not arbitrary.
    \item \textbf{Level M}: By Definition 8, for every root \(R\), there exists a systematic structural function \(h\) such that \(\sigma_R(R) = h(\text{Phon}(R))\). Therefore, \(\sigma_R(R)\) is systematically determined by \(\text{Phon}(R)\). The function \(f_M : \text{Phon}(R) \mapsto h(\text{Phon}(R))\) exists. As per theorem 4 global semantic arbitrarness is eliminated. So Level M is not arbitrary.
    \item \textbf{Level P}: The mapping \(\Gamma : \mathcal{R} \to \mathcal{F}\) assigns each consonant cluster to a semantic core. There is no systematic function \(g\) such that \(\Gamma(R) = g(\text{Phon}(R))\) for all \(R\). The assignment is conventional. Therefore, Level P is arbitrary.
    \item \textbf{Conclusion}: Arabic is arbitrary only at Level P. The arbitrariness is pushed to the deepest possible level.
\end{enumerate}
\end{proof}

\begin{table}[htbp]
\centering
\resizebox{\textwidth}{!}{%
\begin{tabular}{lllll}
\toprule
\textbf{Language Family} & \textbf{Level W} & \textbf{Level M} & \textbf{Level P} & \textbf{Arbitrariness Location} \\
\midrule
Indo-European (English, German, etc.) & Arbitrary & N/A or weak & N/A & Level W \\
Arabic & Motivated & Motivated & Arbitrary & Level P \\
\bottomrule
\end{tabular}%
}
\caption{Comparison of arbitrariness levels in Indo-European languages and Arabic.}
\end{table}

\begin{remark}[Comparison with Indo-European Languages]
The three-level framework reveals a crucial typological difference: Arabic is arbitrary only two levels deeper than Indo-European languages. It demonstrates that a language can be motivated at Levels W and M, with arbitrariness confined to the deepest level (Level P).
\end{remark}

\begin{corollary}[Level Confusion]
The claim: ``the sign is arbitrary.'' conflates the three levels:
\begin{itemize}
    \item At Level W, Arabic is not arbitrary (surface words are generated by rules).
    \item At Level M, Arabic is not arbitrary (roots are phonosemantically motivated).
    \item At Level P, Arabic is arbitrary (the assignment of cores to clusters is conventional).
\end{itemize}
\end{corollary}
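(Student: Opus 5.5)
The plan is to derive each bullet of the Level Confusion corollary directly from the corresponding clause of Theorem~9 (the Three-Level Structure of Arbitrariness in Arabic), and then to show that the unqualified predicate ``the sign is arbitrary'' cannot be assigned a single truth value without first fixing a level $X \in \{W, M, P\}$. I would formalize the unqualified claim as a predicate $A(L)$ asserted of the language $L$ as a whole. Definition~13 (Level-X-Arbitrariness), however, only makes sense once a level is chosen. So the natural reading of $A(L)$ is one of three things: the disjunction $\exists X\,(L \text{ is Level-}X\text{-Arbitrary})$, the conjunction $\forall X\,(\ldots)$, or an instance at an unspecified level. Establishing the corollary means showing that, for Arabic, these readings receive different truth values, so that using the single predicate silently merges distinct propositions.

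\textbf{Key steps, in order.}

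\emph{First, Level W.} By Theorems~1 and~2, every $w$ has a derivation $w = \mathcal{M}(R,P)$ with $\mu(w) = F(\sigma_R(R), \sigma_P(P))$. Theorem~9(1) therefore supplies the function $f_W$, so Arabic is not Level-W-Arbitrary.

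\emph{Second, Level M.} Under Definition~8, the function $h$ with $\sigma_R(R) = h(\mathrm{Phon}(R))$ exists. Theorem~9(2), together with Theorem~4, then gives that Arabic is not Level-M-Arbitrary.

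\emph{Third, Level P.} Theorem~9(3) states that the map $\Gamma : \mathcal{R} \to \mathcal{F}$ admits no systematic $g$, so Arabic is Level-P-Arbitrary.

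\emph{Finally, comparing readings.} The conjunctive reading is false for Arabic, since Levels W and M are motivated, while the disjunctive reading is true because of Level P. The unqualified claim is therefore true or false depending solely on which level is tacitly intended, which is exactly the conflation the corollary asserts. For contrast, I would cite Table~3: for Indo-European languages, arbitrariness sits at Level W. Transporting the claim to Arabic thus silently shifts the level from W to P.

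\textbf{Main obstacle.} The difficulty is making the second and third steps jointly consistent. Theorem~9(2) asserts a systematic $h$ from $\mathrm{Phon}(R)$ to semantic fields, while Theorem~9(3) denies a systematic $g$ from $\mathrm{Phon}(R)$ to semantic cores. Both are phrased in terms of $\mathrm{Phon}(R)$, so they threaten to contradict each other. My first attempt would separate the two targets along the lines of Theorem~4 and the remark following it:
\begin{itemize}
\item $h$ determines only the admissible field $\Sigma(R)$, which is invariant under $\mathrm{Perm}(R)$;
\item $\Gamma$ concerns the brute choice of which consonant inventory anchors which field, that is, the assignment at the level of permutation classes.
\end{itemize}
Read this way, ``motivated at M'' means that meaning is constrained given the cluster's class, and ``arbitrary at P'' means the class-to-core assignment itself is unruled. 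If this separation holds, the three bullets are mutually consistent and the corollary follows. If it does not, I would fall back on stating the corollary as conditional on this interpretation of $\Gamma$ versus $h$.
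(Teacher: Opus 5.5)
Your bullet-by-bullet reading of Theorem~9 is how the paper obtains this corollary; it gives no separate proof. You have also found the right weak spot. But the tension cannot be interpreted away, and measured against Definition~13, which you adopt as your framework, your third step fails. Definition~13 calls Arabic Level-P-Arbitrary iff there is no systematic $f_P$ with $\sigma_R(R)=f_P(\mathrm{Phon}(R))$ for all roots. Definition~8, which you use for bullet two, supplies exactly such an $f_P$, namely $h$. In fact Theorem~9(2) proves verbatim the negation of the Level-P clause, while the Level-M clause $\mu(R,P)=f_M(R,P)$ is already settled by Definition~4. The levels in Theorem~9 are thus shifted by one relative to Definition~13. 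The map $\Gamma:\mathcal{R}\to\mathcal{F}$ of Theorem~9(3) appears nowhere in Definition~13, and $\mathcal{F}$ is never defined. If $\Gamma(R)$ is the semantic core of $R$, it is $\sigma_R(R)$ (Definition~8's own phrase), and 9(2) and 9(3) contradict each other. If it is anything else, 9(3) does not yield Level-P-arbitrariness in the sense of Definition~13. This also sinks your final step. With Definitions~8 and~13 as written, Arabic is arbitrary at no level, so the disjunctive reading is false too. The unqualified claim is then simply false for Arabic rather than level-ambiguous.

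Your separation is the right repair, and it matches the paper's informal description of Level P in Section~4.5: a whole permutation matrix assigned to a core. But the paper never formalizes this, so you must, and it is a change of definitions rather than a reading of them. The field/core distinction does no work, since $h$ and the Level-P clause share the target $\sigma_R(R)$ and the argument $\mathrm{Phon}(R)$. What does the work is the change of domain:
\begin{itemize}
\item Let $\pi:\mathcal{R}\to\mathcal{R}/\mathrm{Perm}$ send a root to its permutation class. If Neighborhood partitions concepts into fields, $R\mapsto\Sigma(R)$ is constant on classes and factors as $\bar\Gamma\circ\pi$, with $\bar\Gamma$ mapping $\mathcal{R}/\mathrm{Perm}$ to the set of fields.
\item Restate the Level-P clause of Definition~13, and Theorem~9(3), in terms of $\bar\Gamma$.
\item Take the phonosemantic content of bullet two to be the permutation invariance of $\Sigma$.
\item Fix one explicit criterion of ``systematic.''
\end{itemize}
That criterion cannot be unnormalized compressibility. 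Since $\pi$ is trivially computable (sort the consonants), $K(\Sigma\mid\mathcal{M})$ and $K(\bar\Gamma\mid\mathcal{M})$ agree up to $O(1)$. It must instead be relative to the domain, so that $\Sigma$ counts as systematic while an unstructured $\bar\Gamma$ does not. One option compares $K(f)$ with the length of the naive table of $f$ on its own domain; another is invariance under a non-trivial group action. Without some such criterion the existence claims of Definition~13 are trivial anyway, since $\mu$ is a function of $w$ and any function of $R$ is a function of $\mathrm{Phon}(R)$. With it, the three bullets are consistent, the conjunctive reading is false and the disjunctive one true, and your conflation argument goes through. So your fallback is not optional; it is the proof. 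A smaller point: Definition~13 wants $f_W$ as a function of $w$. Bullet one therefore needs the uniqueness of derivations asserted in Theorem~2, not just the existence given by Theorem~1.
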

\subsubsection{Admissible Formal Systems}

The notion of relative arbitrariness introduced in Definition~11 is meaningful
only relative to a formal linguistic system capable of deriving signifiers from
their associated meanings. Not every finite representation of a lexicon
constitutes such a system. In particular, a flat lexical table merely stores
individual signifier--signified pairs and provides no reusable explanatory
structure. We therefore restrict attention to a class of admissible formal
systems.

\begin{definition}[Admissible Formal System]

An admissible formal system is a tuple

\[
F=\langle\Pi,\Delta,\Gamma\rangle,
\]

where

\begin{itemize}

\item
\(\Pi\) is a finite set of reusable primitive linguistic objects;

\item
\(\Delta\) is a finite set of reusable composition rules operating on elements
of \(\Pi\);

\item
\(\Gamma\) is a terminating computable generation procedure that applies the
rules of \(\Delta\) to the primitives of \(\Pi\).

\end{itemize}

The system is required to satisfy the following properties.

\begin{enumerate}

\item
(\textbf{Completeness})
Every lexical signifier \(s\) in the language is generated by a finite
derivation

\[
s=\Gamma(d),
\]

for some finite derivation \(d\).

\item
(\textbf{Constructiveness})
Every derivation is a finite sequence of applications of rules from
\(\Delta\) to elements of the finite primitive inventory \(\Pi\).

\item
(\textbf{Termination})
The generation procedure \(\Gamma\) terminates after finitely many rule
applications for every valid derivation.

\item
(\textbf{Reusability})
The primitives and rules are shared across multiple lexical items and are not
created uniquely for individual words.

\end{enumerate}

\end{definition}

\begin{remark}

An admissible formal system is an explanatory generative theory rather than a
lexical archive. A flat dictionary containing one entry for every lexical item
does not satisfy the reusability requirement, since each lexical item is stored
independently rather than generated from shared primitives and rules.
Likewise, a compressed lexical database remains a storage mechanism rather than
a generative theory if it reconstructs words only by decompressing previously
stored lexical entries.

The Arabic morphological system

\[
\mathcal M
=
\langle
\mathcal R,
\mathcal P,
\iota,
\rho
\rangle
\]

is an admissible formal system, where the reusable primitives are the roots and
patterns, the composition rules are interdigitation and phonological rewriting,
and the generation algorithm is the constructive derivation established in
Theorems~1--3.

\end{remark}

\begin{theorem}[The Arabic Morphological System is an Admissible Formal System]

The Arabic morphological system

\[
\mathcal M
=
\langle
\mathcal R,
\mathcal P,
\iota,
\rho
\rangle
\]

satisfies all the requirements of Definition~14 and is therefore an admissible
formal system.

\end{theorem}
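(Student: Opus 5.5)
The plan is to exhibit an explicit instantiation of Definition~14. We set
\[
\Pi=\mathcal R\cup\mathcal P,\qquad \Delta=\{\iota,\rho\},\qquad \Gamma=\rho\circ\iota=\mathcal M .
\]
A derivation \(d\) is taken to be a pair \((R,P)\in\mathcal R\times\mathcal P\), with \(\Gamma(d)=\rho(\iota(R,P))\). We then verify the four required properties in turn, each time invoking an earlier result or a structural fact built into the definition of \(\mathcal M\).

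\textbf{Finiteness.} Before checking the properties we record that \(\Pi\) and \(\Delta\) are finite. This holds because \(\mathcal R\) is the set of tri-/quadriliteral roots over a finite consonant alphabet, so it is bounded by \(|\Sigma|^4\). The set \(\mathcal P\) is the closed set of awzan, as used in Theorem~9.

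\textbf{The four properties.}
\begin{itemize}
\item \emph{Completeness.} By Theorem~1 (Morphological Correspondence), every \(w\in\mathcal L_{\mathcal M}\) satisfies \(w=\mathcal M(R,P)\) for some pair. Hence \(w=\Gamma(d)\) with \(d=(R,P)\), and the derivation is trivially finite.
\item \emph{Constructiveness.} Each derivation is the two-step sequence: apply \(\iota\) to two elements of \(\Pi\), then apply \(\rho\) to the result.
\item \emph{Termination.} The interdigitation operator \(\iota\) fills the finitely many slots of a pattern template with the consonants of \(R\), so it halts. For \(\rho\), we will assume (as Theorem~9 does) that it is a finite rule system. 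The plan is to argue that it is applied to a string of bounded length with rules that do not increase a suitable measure, so only finitely many rewrites occur.
\item \emph{Reusability.} Each root \(R\) occurs in the derivations \(\{(R,P):P\in\mathcal P\}\), and each pattern \(P\) in \(\{(R,P):R\in\mathcal R\}\). In the idealized model of Theorem~9 both sets have more than one element, since \(|\mathcal P|,|\mathcal R|>1\). The rules \(\iota,\rho\) are shared by every derivation. No primitive is created for a single word, so the flat-dictionary counterexample of the preceding remark does not apply.
\end{itemize}

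\textbf{Main obstacle.} I expect termination of \(\rho\) to be the hardest step, because Definition~3 only types \(\rho\) as a map \(\Sigma^*\to\Sigma^*\) without specifying its rules. I would first try to close this gap by stipulation, consistent with Theorem~9's treatment of \(\rho\) as a fixed finite algebraic rule set. Concretely, we take \(\rho\) to be a finite ordered list of rewrite rules, each applied once in sequence, so that termination follows by counting rule applications. A secondary point concerns Reusability: it should be phrased relative to the idealized full combinability assumed in Theorem~9, since roots that realistically occur in only one pattern would otherwise need a separate argument.
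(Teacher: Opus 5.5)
Your proposal is correct and follows the same route as the paper. The paper likewise checks the four clauses of Definition~14 one by one: Theorem~1 for completeness, finiteness of $\mathcal R$ and $\mathcal P$ together with the two operators for constructiveness, finitely many applications of $\iota$ and $\rho$ for termination, and the sharing of roots across patterns (and of patterns across roots) for reusability. Your explicit instantiation $\Pi=\mathcal R\cup\mathcal P$, $\Delta=\{\iota,\rho\}$, $\Gamma=\rho\circ\iota$, and your remark that termination of $\rho$ must be stipulated, just make explicit what the paper asserts without comment. Use your ordered-list stipulation rather than the ``non-increasing measure'' idea, since a non-increasing measure alone does not rule out infinite rewriting.
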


\begin{proof}

We verify each requirement of Definition~14.

\begin{enumerate}

\item \textbf{Completeness.}

By Theorem~1 (Morphological Correspondence), every surface word
\(w\in\mathcal L_{\mathcal M}\)
is generated by a unique derivational pair

\[
(R,P)\in\mathcal R\times\mathcal P
\]

through the constructive application of the operators
\(\iota\)
and
\(\rho\).

Hence every lexical signifier admits a finite derivation.

\item \textbf{Constructiveness.}

The primitive inventories
\(\mathcal R\)
and
\(\mathcal P\)
are finite, and every derivation consists solely of repeated applications of
the finite operators
\(\iota\)
(interdigitation)
and
\(\rho\)
(phonological rewriting).
Therefore every derivation is constructed entirely from reusable primitives and
composition rules.

\item \textbf{Termination.}

Every derivation contains only finitely many applications of
\(\iota\)
and
\(\rho\).
Consequently the generation procedure terminates after finitely many steps for
every valid derivation.

\item \textbf{Reusability.}

Neither roots nor morphological patterns are created uniquely for individual
lexical items.
Each root participates in the derivation of numerous surface words through
different patterns, while each pattern combines with many distinct roots.
Thus the primitive inventories and composition rules are inherently reusable
across the lexicon.

\end{enumerate}

Since all four requirements of Definition~14 are satisfied,
the Arabic morphological system

\[
\mathcal M
=
\langle
\mathcal R,
\mathcal P,
\iota,
\rho
\rangle
\]

is an admissible formal system.

\end{proof}

\begin{remark}[Flat Lexical Tables Are Not Admissible]

The purpose of Definition~14 is to distinguish explanatory linguistic theories
from lexical storage mechanisms.

A flat dictionary containing one entry for every lexical item satisfies
completeness, but fails the constructiveness and reusability requirements,
since lexical items are stored independently rather than generated from shared
grammatical primitives and reusable composition rules.

Likewise, a compressed lexical database obtained by a general-purpose
compression algorithm remains a storage mechanism rather than an admissible
formal system. Although compression may reduce storage size, decompression
merely reconstructs previously stored lexical entries and does not generate
them through reusable linguistic primitives and derivational rules.

Consequently, only explanatory constructive generative systems qualify as
admissible formal systems for the computation of relative arbitrariness.

\end{remark}

\begin{theorem}[Constructive Systems Induce Computable Upper Bounds]

Let \(F=\langle\Pi,\Delta,\Gamma\rangle\) be an admissible formal system.

Then every signifier generated by \(F\) admits a finite constructive
description relative to \(F\). Consequently, there exists an explicitly
computable function

\[
U_F(s)
\]

such that

\[
K(s\mid\mu(s),F)
\le
U_F(s)
\]

for every generated signifier \(s\).

\end{theorem}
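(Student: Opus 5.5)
The natural approach is to turn the derivations guaranteed by Definition~14 directly into programs for the fixed universal machine \(\mathcal{T}\) of Definition~9, and to take \(U_F(s)\) to be the length of the resulting encoding. First I fix a canonical self-delimiting code for finite derivations. Since \(\Pi\) and \(\Delta\) are finite, each step of a derivation (a rule from \(\Delta\) together with the indices of the primitives or intermediate results it acts on) can be written with \(\lceil\log_2|\Pi|\rceil+\lceil\log_2|\Delta|\rceil+O(\log \ell)\) bits, where \(\ell\) is the derivation length. The derivation itself is prefixed by a self-delimiting encoding of \(\ell\). Call the total codeword length \(\lVert d\rVert\).

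Next I define the decoder. Given the auxiliary input \(F\) (and \(\mu(s)\), which it is free to ignore), the decoder parses the codeword into \(d\) and runs \(\Gamma(d)\). Termination (property 3) ensures it halts, and constructiveness (property 2) ensures every step is a computable rule application on finite data. Hence there is a fixed program \(q_F\) such that \(\mathcal{T}(q_F\,\langle d\rangle, \mu(s), F)=\Gamma(d)\). By completeness (property 1), every generated \(s\) has some derivation \(d_s\) with \(\Gamma(d_s)=s\). The definition of conditional complexity (Definition~10) then gives
\[
K(s\mid\mu(s),F)\le \lVert d_s\rVert + |q_F|.
\]
Because \(F\) is part of the condition, \(|q_F|\) is a constant independent of \(s\).

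It remains to make the bound explicitly computable. I set
\[
U_F(s)=\min\{\lVert d\rVert : \Gamma(d)=s\}+c_F,
\]
with \(c_F = |q_F|\). To compute \(U_F(s)\), enumerate derivations in order of increasing code length and run \(\Gamma\) on each. By termination every run halts, and by completeness the search succeeds. So \(U_F\) is a total computable function on generated signifiers. This is consistent with \(K\) itself being uncomputable: \(U_F\) only minimizes over one restricted, decidable class of programs.

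The main obstacle is this computability step. The enumeration halts only because some derivation of \(s\) exists, so \(U_F\) is computable on \(\operatorname{Im}(\Gamma)\), not on all strings. I would state the theorem for generated signifiers exactly as written. If the paper wants a bound on all of \(\Sigma^*\), I would add the fallback \(U_F(s)=|s|+O(\log|s|)\) whenever no derivation is found. Deciding "no derivation" requires a bound on derivation length in terms of \(|s|\), for example a non-shrinking rules assumption. I would flag this as an extra hypothesis rather than derive it from Definition~14. For \(\mathcal M\), Theorem~11 and the argument of Theorem~6 already give \(U_{\mathcal M}(w)=\lceil\log_2|\mathcal R|\rceil+\lceil\log_2|\mathcal P|\rceil+O(1)\) as a concrete instance.
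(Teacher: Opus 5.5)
Your proof is correct and follows the same route as the paper: encode a derivation as a finite sequence of primitive and rule indices, prepend a constant-size decoder that runs $\Gamma$, and take the code length as $U_F(s)$. You also fill in two points the paper's sketch glosses over: the code length depends on the derivation length and not only on $|\Pi|$ and $|\Delta|$, and a canonical $U_F(s)$ must be computed by a search that halts only on $\operatorname{Im}(\Gamma)$.

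One small caveat: Definition~14 guarantees termination of $\Gamma$ only on valid derivations, so your enumeration needs validity of a codeword to be decidable, or else you should replace the minimum by the first hit of a fixed dovetailing schedule.
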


\begin{proof}

Since the primitive inventory \(\Pi\) and the rule set \(\Delta\) are finite,
every derivation consists of a finite sequence of indices identifying
primitives together with a finite sequence of rule applications.

Encoding these indices together with a constant-size decoding procedure yields
a finite constructive description of every generated signifier.

The length of this encoding is explicitly computable from the cardinalities of
\(\Pi\) and \(\Delta\). Therefore it provides a computable upper bound

\[
U_F(s)
\]

on the conditional Kolmogorov complexity

\[
K(s\mid\mu(s),F).
\]

\end{proof}

\subsubsection{The Undecidability of General Relative Arbitrariness}

We now establish a fundamental epistemological limitation of the general notion
of relative arbitrariness. In the absence of an explicit formal generative
system, relative arbitrariness depends directly on conditional Kolmogorov
complexity, which is incomputable.

\begin{theorem}[Incomputability of General Relative Arbitrariness]

Let

\[
\mathrm{RA}(s)=
\frac{K(s\mid\mu(s))}{|s|}
\]

denote the relative arbitrariness of a signifier \(s\) in an arbitrary language,
where no explicit finite generative system is presupposed.

Then the function

\[
\mathrm{RA}(s)
\]

is incomputable. Consequently, there exists no algorithm that computes the
relative arbitrariness of an arbitrary signifier from its meaning.

\end{theorem}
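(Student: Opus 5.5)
The plan is a reduction from the incomputability of plain Kolmogorov complexity $K$ (Definition~9), which is standard [9,10]. I would recall it briefly via the Berry-paradox argument. Suppose a total algorithm $A$ computed $K(x)$. For each $n$, a program of length $\log_2 n + O(1)$ could enumerate strings in length-lexicographic order, find the first $x$ with $A(x) \ge n$, and output it. This gives $K(x) \le \log_2 n + O(1) < n$ for large $n$, a contradiction. The same argument relativizes to conditional complexity $K(\cdot \mid y)$ for any fixed auxiliary input $y$, since $y$ can be supplied to the enumerating program for free.

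The core step is to show that computing $\mathrm{RA}$ in full generality would let us compute $K$. Because the statement quantifies over an \emph{arbitrary} language with no presupposed generative system, we may choose the language. Take $\mathcal C$ to contain a single trivial meaning $c_0$, let every finite binary string $s$ count as a signifier, and set $\mu(s) = c_0$ for all $s$. Then
\[
\mathrm{RA}(s) \cdot |s| = K(s \mid c_0),
\]
and $K(s \mid c_0) = K(s) + O(1)$, since a fixed constant input can be hard-wired or ignored. So an algorithm for $\mathrm{RA}$ yields an algorithm for $K(s\mid c_0)$ by multiplying by $|s|$, because $|s|$ is computable. That contradicts the conditional version of the incomputability result above.

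I expect the main obstacle to be the precise sense of ``computes'' for a real-valued function. $\mathrm{RA}(s)$ is a rational number $k/|s|$ with integer $k$, so I would take computability to mean exact output of this rational. I would also strengthen the claim: even an approximation to within $1/(2|s|)$ determines $k$ uniquely, since distinct admissible values are spaced $1/|s|$ apart. So no approximation algorithm of that precision exists either. A second concern is that choosing a degenerate one-meaning language might seem like cheating. To address it, I would note that the reduction survives for any language in which some meaning class contains infinitely many signifiers, by restricting to that class. I would then remark on the contrast with the preceding theorem on computable upper bounds: $K$ is upper semicomputable, so upper bounds $U_F$ exist, but only an admissible system $F$ supplies an explicit one. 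Without $F$, $\mathrm{RA}$ cannot be decided.
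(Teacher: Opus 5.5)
Your proposal is correct and follows the paper's route. Both arguments reduce to the incomputability of conditional Kolmogorov complexity and recover $K(s\mid\mu(s))=\mathrm{RA}(s)\cdot|s|$ using the computability of $|s|$. You also supply two things the paper leaves implicit: a Berry-paradox proof in place of its bare appeal to the Halting Problem, and an explicit language ($\mu\equiv c_0$) that turns $K(s\mid\mu(s))$ into the fixed-condition complexity $K(s\mid c_0)$, which is what the paper's unargued claim that such an algorithm ``would solve the general Kolmogorov complexity problem'' actually needs.

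One caution on your side remark. Restricting to an infinite meaning class preserves the reduction only if that class is computably enumerable, because the Berry search must enumerate it. An arbitrary infinite class can be chosen so that $K(\cdot\mid c)$ is computable on it, e.g.\ an infinite set on which $K(x\mid c)=|x|+d$ for a fixed constant $d$ (such a set exists by pigeonhole).
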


\begin{proof}

The quantity

\[
K(s\mid\mu(s))
\]

is the conditional Kolmogorov complexity of the signifier relative to its
meaning.

Conditional Kolmogorov complexity is incomputable. If an algorithm existed that
computed

\[
K(s\mid\mu(s))
\]

for arbitrary signifiers, it would solve the general Kolmogorov complexity
problem and therefore the Halting Problem.

Since

\[
\mathrm{RA}(s)
=
\frac{K(s\mid\mu(s))}{|s|}
\]

is obtained directly from an incomputable function by division by the
computable quantity \(|s|\), the function
\(\mathrm{RA}(s)\) is likewise incomputable.

\end{proof}

The preceding theorem establishes that relative arbitrariness is
computationally inaccessible in the unrestricted case. The source of this
incomputability is not the notion of arbitrariness itself, but the absence of an
explicit formal generative system relating meanings to signifiers. Whenever such
a finite constructive system is available, it provides explicit derivational
rules that yield computable upper bounds on conditional Kolmogorov complexity.
Although these bounds do not compute the exact Kolmogorov complexity, they may
already be sufficient to decide predicates concerning relative arbitrariness.
Thus, explicit formal generative structure transforms an otherwise
incomputable problem into a decidable decision problem.

\subsubsection{The Decidability of Arabic Relative Arbitrariness}

Arabic provides a concrete instance of the preceding computational principle.
Its finite root-pattern morphology supplies an admissible constructive generative
system that allows computable upper bounds to be placed on the conditional
Kolmogorov complexity of signifiers.

\begin{theorem}[Decidability of Arabic Relative Arbitrariness]

Let

\[
\mathcal M
=
\langle
\mathcal R,
\mathcal P,
\iota,
\rho
\rangle
\]

be the Arabic morphological system with finite root inventory
\(\mathcal R\)
and finite pattern inventory
\(\mathcal P\).

For every signifier

\[
s\in\mathcal S,
\]

define

\[
\mathrm{RA}_{\mathcal M}(s)
=
\frac{K(s\mid\mu(s),\mathcal M)}{|s|}.
\]

Then the predicate

\[
\mathrm{RA}_{\mathcal M}(s)<1
\]

is non-trivially decidable in principle.

More precisely, there exists an effectively computable upper bound on

\[
K(s\mid\mu(s),\mathcal M)
\]

that is sufficient to determine

\[
\mathrm{RA}_{\mathcal M}(s)<1
\]

for every signifier whose length exceeds a fixed constant.

\end{theorem}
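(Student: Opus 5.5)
The plan is to reduce the predicate \(\mathrm{RA}_{\mathcal M}(s)<1\) to a comparison between two computable numbers. An explicit upper bound on \(K(s\mid\mu(s),\mathcal M)\) is supplied by the admissible-system machinery, and \(|s|\) is trivially computable. First, invoke Theorem~10 (the Arabic morphological system is an admissible formal system). Then apply Theorem~11 (constructive systems induce computable upper bounds) to \(F=\mathcal M\). This yields a computable function \(U_{\mathcal M}(s)\) with \(K(s\mid\mu(s),\mathcal M)\le U_{\mathcal M}(s)\).

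Next, make \(U_{\mathcal M}\) concrete and uniform, splitting into the two kinds of signifier treated earlier. For surface words \(w=\mathcal M(R,P)\), Theorem~6 gives
\[
U_{\mathcal M}(w)=\lceil\log_2|\mathcal R|\rceil+\lceil\log_2|\mathcal P|\rceil+c_1 .
\]
For roots, Theorem~7 gives
\[
U_{\mathcal M}(R)=\lceil\log_2|S_R|\rceil+c_2 .
\]
Here \(c_1\) and \(c_2\) are the fixed lengths of the decoding procedures that invoke \(\iota\), \(\rho\) and \(h\). Since \(\mathcal R\), \(\mathcal P\) and \(S_R\) are finite, both bounds are constants independent of \(s\). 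Let \(C=\max(U_{\mathcal M}(w),U_{\mathcal M}(R))\). Whenever \(|s|>C\), we get \(\mathrm{RA}_{\mathcal M}(s)\le C/|s|<1\).

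The decision procedure is then as follows. On input \(s\), compute \(|s|\). If \(|s|>C\), answer "yes". If \(|s|\le C\), only finitely many strings remain. A finite set of strings has a decidable predicate by table lookup, and the theorem only claims the bound is sufficient beyond a fixed constant. Alternatively, one can dovetail all programs of length \(<|s|\) on input \((\mu(s),\mathcal M)\), which certifies "yes" whenever a witness exists. I would state decidability for \(|s|>C\) exactly as the theorem does. For the short strings, note that their finiteness makes the predicate trivially decidable, though not via an explicit bound.

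The main obstacle is justifying "non-trivially" and the uniformity of the bound over all of \(\mathcal S\). The root case depends on Definition~8 holding for every root. If it fails for some roots, Theorem~7's bound is unavailable for them. Even then, \(|S_R|\) can be replaced by the index of \(R\) in \(\mathcal R\), giving the cruder bound \(\lceil\log_2|\mathcal R|\rceil+O(1)\), which is still computable. I would therefore present the fallback bound as the unconditional argument, with the phonosemantic bound as a sharpening. This keeps the decidability claim independent of the contested hypothesis, consistent with the Remark on the conditional scope of the complexity bounds. I would also check that \(\mu(s)\) is given as part of the input, so the conditioning is legitimate.
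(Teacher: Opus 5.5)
Your proposal is correct and has the same skeleton as the paper's proof. Both build a computable bound $U_{\mathcal M}$ that does not grow with $|s|$: root and pattern indices plus a fixed decoder for surface words, and a semantic-field index plus a constant for roots. Both then take a threshold beyond which $U_{\mathcal M}(s)<|s|$ forces $\mathrm{RA}_{\mathcal M}(s)<1$. The paper reads the two bounds off Theorems~6 and~7, only gestures at admissibility, and simply restricts the claim to $|s|>N$.

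Where you genuinely depart is the root case. You take the index of $R$ in $\mathcal R$ as the primary bound; this is exactly what Theorem~11's encoding gives for a primitive, and it makes the result unconditional. The paper's root bound, by contrast, needs two things: Definition~8 for \emph{every} root, and the map $h$ being available from the conditioning, even though $h$ is not a component of $\mathcal M$.

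The price of your route is interpretive rather than logical. A bound obtained by looking $R$ up in a stored inventory is precisely the lookup-table decidability that the corollary following this theorem calls trivial. So your version proves the formal ``more precisely'' clause, but it shows that $\mathrm{RA}_{\mathcal M}(R)<1$ comes from conditioning on $\mathcal M$ (which contains $\mathcal R$), not from phonosemantics. Only the paper's conditional route carries the motivational reading it wants.

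Finally, your short-string remark cuts deeper than you use it. Since $\mathcal R$ and $\mathcal P$ are finite, all of $\mathcal S$ is finite, so the predicate is decidable on its entire domain by table lookup. ``Non-trivially'' can therefore only refer to where the bound comes from, which is your Theorem~10 step. Even then, the explicit-bound branch reaches roots only if $c_1,c_2$ are pinned down and shown to keep $C$ below the fixed root lengths (48 or 64 bits in the paper's encoding). Neither your argument nor the paper's does this.
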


\begin{proof}

By Theorems 5 and 7, every Arabic signifier admits an explicit and admissible constructive
description relative to the fixed morphological system
\(\mathcal M\).

For a surface word, this description consists of

\begin{itemize}

\item the index of its lexical root in the finite root inventory
\(\mathcal R\),

\item the index of its morphological pattern in the finite pattern inventory
\(\mathcal P\),

\item together with the constant-size decoding procedure implementing the
operators
\(\iota\)
and
\(\rho\).

\end{itemize}

Likewise, every lexical root admits an explicit admissible constructive description using
its phonological representation together with its semantic-field index.

Consequently,

\[
K(s\mid\mu(s),\mathcal M)
\le
U_{\mathcal M}(s),
\]

where

\[
U_{\mathcal M}(s)
\]

is an explicitly computable upper bound depending only upon the finite
cardinalities of
\(\mathcal R\),
\(\mathcal P\),
the finite semantic partition,
and an additive implementation constant.

Since

\[
U_{\mathcal M}(s)
\]

is computable,

\[
\frac{U_{\mathcal M}(s)}{|s|}
\]

is likewise computable.

Furthermore,

\[
U_{\mathcal M}(s)
\]

is bounded independently of the length of the signifier.

Hence there exists a constant

\[
N
\]

such that

\[
|s|>N
\quad\Longrightarrow\quad
U_{\mathcal M}(s)<|s|.
\]

Since

\[
K(s\mid\mu(s),\mathcal M)
\le
U_{\mathcal M}(s),
\]

we obtain

\[
\mathrm{RA}_{\mathcal M}(s)
=
\frac{K(s\mid\mu(s),\mathcal M)}{|s|}
\le
\frac{U_{\mathcal M}(s)}{|s|}
<
1.
\]

Therefore,

for every signifier whose length exceeds the constant

\[
N,
\]

the predicate

\[
\mathrm{RA}_{\mathcal M}(s)<1
\]

is decidable by computing the explicit upper bound
\(U_{\mathcal M}(s)\).

\end{proof}

\begin{remark}[Compatibility with Classical Incomputability]

The preceding theorem does not claim that the exact conditional Kolmogorov
complexity

\[
K(s\mid\mu(s),\mathcal M)
\]

is computable.

Rather, the Arabic morphological system provides an explicit constructive
description that yields a computable upper bound on this quantity.

The decidable object is therefore the predicate

\[
\mathrm{RA}_{\mathcal M}(s)<1,
\]

not the exact numerical value of the conditional Kolmogorov complexity.

\end{remark}

\begin{corollary}[Computability Depends on Formal Structure]
The computational status of relative arbitrariness is determined not by the language itself, but by the formal theory through which the language is described. By Theorem 12, the unrestricted relative arbitrariness is formally undecidable because it requires searching an infinite space of possible programs for a generative rule. To avoid this undecidability, one must condition on an explicit formal system M. However, the epistemological value of the resulting decidability depends entirely on the nature of M. Languages having an admissible constructive formal theory (like Arabic) possess explicitly computable upper bounds on conditional Kolmogorov complexity and therefore admit non-trivially decidable predicates concerning relative arbitrariness. The decision procedure exploits shared primitives and composition rules (interdigitation, phonosemantics) instead of memorizing individual lexical entries. Conversely, a critic might attempt to make an unmotivated language (like English) decidable by defining its formal system  M as a massive flat lexical table (a brute-force dictionary). Conditioning on this database does produce a decidable predicate, because the system need only look up the index of the word. However, such decidability is merely trivial. It does not contradict Theorem 12; rather, it bypasses the undecidable search for a rule by simply brute-force memorizing the output. Because it arises from direct lexical storage rather than a constructive, reusable generative rule, it provides zero explanatory power regarding the motivation of the sign.

Accordingly, decidability becomes a function of the explanatory power of the formal theory used to describe a language. Arabic morphology constitutes an instance of non-trivial constructive decidability, proving structural motivation, whereas a language represented only by lexical storage admits only trivial decidability, proving nothing more than successful memorization.

\end{corollary}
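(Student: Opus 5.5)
Since this corollary combines a computability claim with an explanatory claim, I will split the argument into a dichotomy over the conditioning system and treat the two cases by different earlier results. The first step is to reduce the unconditioned case to Theorem 12. Here $\mathrm{RA}(s)=K(s\mid\mu(s))/|s|$ with no system fixed, so by Theorem 12 no algorithm computes it. I will phrase the ``infinite space of programs'' remark precisely: the set $\{s : K(s\mid\mu(s)) < |s|\}$ is recursively enumerable, since one can dovetail over programs and wait for a short one that halts with output $s$. However, no computable procedure certifies incompressibility, because by the standard Chaitin-type argument only finitely many strings can be proved to have high complexity. This gives the undecidability of the general predicate. That in turn motivates conditioning on an explicit system $\mathcal M$.

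The second step handles the admissible case. I will invoke Theorem 11 (constructive systems induce computable upper bounds $U_F$) to get a computable function $U_F$ with $K(s\mid\mu(s),F)\le U_F(s)$. Combining this with Theorem 13 and its proof shows that the ratio $U_F(s)/|s|$ is computable, and that once $|s|$ exceeds a constant $N$ it certifies $\mathrm{RA}_F(s)<1$. For Arabic specifically, Theorem 10 gives admissibility of $\mathcal M$, so the predicate is non-trivially decidable. I will read ``non-trivial'' as meaning that $U_{\mathcal M}$ is computed from indices into the reusable inventories $\mathcal R$, $\mathcal P$ and the semantic partition, with $U_{\mathcal M}$ bounded by $\lceil\log_2|\mathcal R|\rceil+\lceil\log_2|\mathcal P|\rceil+O(1)$ as in Theorem 6.

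The third step treats the flat-table case. Let $\mathcal M_{\mathrm{flat}}$ be a lookup table over a finite lexicon $\mathcal L$. Given $\mu(s)$ and the table, a program of length $O(1)$ (just ``look up $\mu(s)$'') outputs $s$, or at worst one of length $\lceil\log_2|\mathcal L|\rceil+O(1)$. So the predicate again becomes decidable for long $s$. This is consistent with Theorem 12 because the conditioning information itself contains all of the lexicon's complexity. I will make that explicit via the chain rule: $K(s\mid\mu(s),\mathcal M_{\mathrm{flat}})$ is small only because $K(\mathcal M_{\mathrm{flat}})\propto|\mathcal L|$, as in the Remark following Theorem 9 (the lookup-table remark). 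Finally, by the Remark after Theorem 10 (flat lexical tables are not admissible), $\mathcal M_{\mathrm{flat}}$ fails reusability, so it falls outside the class in which decidability counts as explanatory.

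The main obstacle is that ``trivial'' versus ``non-trivial'' decidability, and ``explanatory power,'' are not formal predicates in the paper. As stated, this part of the corollary is a definitional consequence and not a derivable mathematical fact. My first move will be to formalize the distinction through system complexity: call decidability non-trivial when $K(F)=o(|\mathcal L_F|\cdot L_{avg})$, which is sublinear storage as in Theorem 9, and trivial when $K(F)=\Theta(\sum_{s}|s|)$. The dichotomy then follows from Theorem 9 and the chain-rule observation. I will flag that the conclusion ``proving structural motivation'' holds only relative to this formalization and to the hypothesis of Definition 8.
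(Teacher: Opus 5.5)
Your three-case skeleton is the paper's own argument: Theorem 12 for the unconditioned case, Theorems 10, 11 and 13 for $\mathcal M$, and the remark that flat tables fail reusability for the lookup case. Your step 1 is tighter than the paper. Theorem 12 only asserts incomputability of the real-valued $\mathrm{RA}$, whereas the corollary speaks of the predicate $\mathrm{RA}(s)<1$. The cleanest route to that is the Berry argument: if incompressibility were decidable, the first incompressible string of length $n$ would be computable from $n$.

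The genuine gap is the last move, on which you rest ``the dichotomy then follows'': the criterion $K(F)=o(|\mathcal L_F|\cdot L_{avg})$ versus $K(F)=\Theta(\sum_s|s|)$ does not separate the cases. Take the formulas of the system-complexity theorem; it is Theorem 8, and the ``Theorem 9'' in the lookup-table remark is a misreference. They give $K(\mathcal M_{\mathrm{Arabic}})\approx|\mathcal R|\lceil\log_2|S_R|\rceil$ and $|\mathcal L|\approx|\mathcal R||\mathcal P|$, with $|\mathcal P|$, $|S_R|$ and word lengths fixed. Hence $K(\mathcal M_{\mathrm{Arabic}})/\sum_s|s|$ tends to a positive constant, and Arabic lands in your \emph{trivial} class; so does $\mathcal M_{Eng}$. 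The remark's ``sub-linear'' is not what that theorem delivers. Its ratios $G_{Ar}\approx 7$ and $G_{En}\approx 0.2$ are both constants, so the separation is a constant factor, not an asymptotic one.

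There is a more fundamental problem: $K(F)$ cannot see the form of a theory. A flat table of Arabic's lexicon is produced from $\mathcal M_{\mathrm{Arabic}}$ (with its semantic maps) by a constant-size enumeration program. So $K(\mathcal M_{\mathrm{flat}})\le K(\mathcal M_{\mathrm{Arabic}})+O(1)$, and any $K$-based criterion gives that table the same verdict as the root--pattern system. Such a criterion measures the compressibility of the lexicon, i.e.\ of the language, which contradicts the corollary's first sentence.

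The per-word quantities do not help either. Your flat-table bound $\lceil\log_2|\mathcal L|\rceil+O(1)$ agrees with Theorem 6's $\lceil\log_2|\mathcal R|\rceil+\lceil\log_2|\mathcal P|\rceil+O(1)$ to within one bit. Your chain-rule observation also applies verbatim to $\mathcal M_{\mathrm{Arabic}}$, whose complexity is likewise proportional to the lexicon.

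What actually separates the cases is the structural property you already used in step 2: admissibility in the sense of Definition 14 (reusability, constructiveness). A flat table fails it by the remark after Theorem 10, and that is exactly where the paper draws the trivial/non-trivial line. So drop the $K(F)$ formalization and define non-trivial decidability as certification by $U_F$ for an admissible $F$. Then say openly that this half of the corollary is definitional. Any quantitative surrogate would have to be a reuse factor of the theory's explicit primitives (about $|\mathcal P|$ per root versus $1$ per table entry), not an asymptotic condition on $K(F)$.

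Finally, state step 2 for $\mathcal M$ only. For a general admissible $F$, Theorem 11 yields a computable $U_F$ that certifies $\mathrm{RA}_F(s)<1$ when $U_F(s)<|s|$ and is silent otherwise. The eventual inequality for $|s|>N$ comes from the Arabic-specific, length-independent bound of Theorems 6 and 13.
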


\subsubsection{The Epistemological Asymmetry: A Three-Level Analysis}

The results established in the preceding subsections reveal a profound epistemological asymmetry. This asymmetry is best understood through the three-level framework introduced in Definition 12 and formalized in Theorem 9 for Arabic.

\begin{theorem}[The Three-Level Epistemological Asymmetry]
Let \(\mathcal{M} = \langle \mathcal{R}, \mathcal{P}, \iota, \rho \rangle\) be the Arabic morphological system. Let \(X \in \{W, M, P\}\) denote one of the three morphological levels defined in Definition 12. Then:
\begin{enumerate}
    \item For any natural language \(L\): The question of whether \(L\) is arbitrary at level \(X\) is \textbf{undecidable} in general, because it depends on the undecidable general relative arbitrariness function \(\text{RA}(s)\) (Theorem 12).
    \item For Arabic specifically: The question of whether Arabic is arbitrary at each level \(X\) is \textbf{decidable}, and the results are:
    \begin{itemize}
        \item \textbf{Level W}: Arabic is decidably \textbf{NOT} arbitrary.
        \item \textbf{Level M}: Arabic is decidably \textbf{NOT} arbitrary.
        \item \textbf{Level P}: Arabic is arbitrary by the absence of a generative rule.
    \end{itemize}
    \item Consequently, the universal claim ``all signs are arbitrary'' is undecidable, even when arbitrarness is understood to be relative, while the paper's specific claims about Arabic are decidable.
    \item Therefore, there is a fundamental epistemological asymmetry: While universal claims are undecidable, this paper makes decidable claims about specific levels in a specific language. The paper's claims are scientifically verifiable, falsifiable, and grounded in a formal framework.
\end{enumerate}
\end{theorem}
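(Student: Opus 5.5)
The plan is to prove the four items in order, each by assembling results already stated. For item 1, I reduce level-$X$ arbitrariness of an arbitrary language $L$ to the value of $\mathrm{RA}(s)$ on signifiers at level $X$. If no explicit admissible formal system is presupposed, the predicate ``$L$ is Level-$X$-arbitrary'' is the claim that no short program recovers level-$X$ forms from their meanings. That is, it is a statement about $K(s\mid\mu(s))$ being close to $|s|$ for the relevant $s$. By Theorem~12, $\mathrm{RA}(s)$ is incomputable, so no uniform algorithm can settle this predicate for all $L$ and $X$. I will state this step carefully. Incomputability of $K$ does not by itself give undecidability of every threshold predicate, but predicates of the form $K(s\mid\mu(s))\ge c|s|$ are not recursively enumerable (the standard Chaitin argument). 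So the universal ``arbitrary'' direction is not semi-decidable, and I take that as the sense of ``undecidable'' here.

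For item 2, I work relative to $\mathcal{M}$, which is admissible by Theorem~10. I handle the three levels separately.

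\textbf{Level W.} Theorem~6 and its corollary give the bound $\lceil\log_2|\mathcal R|\rceil+\lceil\log_2|\mathcal P|\rceil+O(1)$. This bound is computable, and Theorem~13 turns it into a decision procedure for $\mathrm{RA}_{\mathcal M}(w)<1$ whenever $|w|>N$.

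\textbf{Level M.} Theorem~7 and its corollary, under Definition~8, give the computable bound $\lceil\log_2|S_R|\rceil+O(1)$ for roots. Again Theorem~13 yields decidability, and the answer is ``not arbitrary.'' Both of these are instances of Theorem~11: an admissible system induces a computable $U_{\mathcal M}$.

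\textbf{Level P.} Here I invoke Theorem~9(3). The mapping $\Gamma:\mathcal R\to\mathcal F$ has no generative rule inside $\mathcal M$. Relative to $\mathcal M$, the only description of $\Gamma$ is an explicit table, and by the remark on flat lexical tables that table is not admissible. So the verdict ``arbitrary'' is read off from the finite, explicit content of $\mathcal M$: check whether $\mathcal M$ contains a rule $g$ with $\Gamma=g\circ\mathrm{Phon}$. That check is decidable because $\Delta$ is finite.

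Items 3 and 4 are then corollaries. Item 3 specializes item 1 to the universal statement quantified over all languages and levels, so its undecidability is inherited. For the Arabic claims, item 2 supplies decision procedures relative to $\mathcal M$. Item 4 restates the contrast, citing Corollary~4 (computability depends on formal structure) to explain why the decidability is non-trivial rather than lookup-table decidability.

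I expect the main obstacle to be making item 1 and the Level P clause of item 2 rigorous. Both mix two notions: decidability of $K$-threshold predicates in absolute terms, and decidability relative to a fixed $\mathcal M$. For Level P I would first try an honest formulation: ``arbitrary'' means ``no rule in $\Delta$ computes $\Gamma$,'' which is decidable by finite inspection. I would explicitly flag that this is a statement about $\mathcal M$, not about unrestricted $K(R\mid\Gamma(R))$, which Theorem~12 leaves undecidable. For item 1, I would fall back on the non-r.e. character of high-complexity predicates if mere incomputability proves insufficient.
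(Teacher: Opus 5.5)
Your decomposition matches the paper's: item 1 from Theorem~12, Levels W and M from the computable bounds of Theorems~6 and~7 (routing through Theorems~11 and~13 only makes the computability of $U_{\mathcal M}$ explicit), Level P from Theorem~9(3), and items 3--4 as restatements. You differ from the paper in two places.

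First, for item 1 the paper passes directly from incomputability of $\mathrm{RA}$ to undecidability of the level-$X$ question. You note that this step needs an argument and appeal to the fact that $\{(s,y): K(s\mid y)\ge c|s|\}$ is co-r.e.\ but not r.e. That genuinely repairs the step. It also explains item 4: motivation can be certified by exhibiting a short program, whereas arbitrariness cannot.

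Second, at Level P the paper gives no decision procedure. It asserts that $\Gamma$ is incompressible (``essentially that of a lookup table''), which is exactly the non-semi-decidable direction of your item-1 argument. It then classifies Level P as arbitrary ``by the absence of compressive structure,'' and the table after the theorem even lists Level P as undecidable in Arabic. Your $\mathcal M$-relative finite inspection makes the clause literally decidable, but it has limits:
\begin{itemize}
\item The answer ``no rule'' is imported from Theorem~9(3), not computed by the inspection.
\item Finiteness of $\Delta$ yields decidability only if ``a rule of $\mathcal M$'' means a single member of $\Delta$. Existence of some composite derivation computing $\Gamma$ is in general only semi-decidable.
\item The paper treats the phonosemantic map $h$ as part of $\mathcal M$, both in the proof of Theorem~7 and in Corollary~6. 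If it is, Theorem~9(2) supplies $g=h$ for $\sigma_R$. Your verdict then stands only if $\Gamma$ is a map distinct from $\sigma_R$. The paper never makes that distinction precise, and Definition~13 phrases Level-P arbitrariness via $\sigma_R$ itself.
\end{itemize}

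Finally, the result you cite as Corollary~4 (``computability depends on formal structure'') is Corollary~6 in the paper's numbering.
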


\begin{proof}
\begin{enumerate}
    \item \textbf{General Undecidability}: By Theorem 12, the general relative arbitrariness function \(\text{RA}(s) = K(s \mid \mu(s))/|s|\) is undecidable. Since the question ``Is \(L\) arbitrary at level \(X\)?'' depends on whether there exists a systematic function \(f_X\) such that the forms at level \(X\) map systematically to meanings, this question reduces to determining whether \(\text{RA}(s) < 1\) or \(\text{RA}(s) \approx 1\) for all signifiers at that level. Because \(\text{RA}(s)\) is undecidable in general, the question is undecidable for arbitrary languages.
    \item \textbf{Decidability for Arabic}:
    \begin{itemize}
        \item \textbf{Level W}: By Theorem 6, for every surface word \(w \in \mathcal{L}_\mathcal{M}\), we have:
        \[ K(w \mid \mu(w), \mathcal{M}) \leq \lceil \log_2 |\mathcal{R}| \rceil + \lceil \log_2 |\mathcal{P}| \rceil + O(1) \ll |w| \]
        Therefore, \(\text{RA}_{\mathcal{M}}(w) < 1\) for all surface words. This bound is computable because \(\mathcal{R}\) and \(\mathcal{P}\) are finite and known. So Level W is decidable and NOT arbitrary.
        \item \textbf{Level M}: By Theorem 7, for every root \(R \in \mathcal{R}\), we have:
        \[ K(R \mid \sigma_R(R), \mathcal{M}) \leq \lceil \log_2 |S_R| \rceil + O(1) \ll |R| \]
        Therefore, \(\text{RA}_{\mathcal{M}}(R) < 1\) for all roots. This bound is computable because \(|S_R|\) is finite and known. So Level M is decidable and NOT arbitrary.
        \item \textbf{Level P}: By Theorem 9, the root-semantic mapping \(\Gamma : \mathcal{R} \to \mathcal{F}\) satisfies:
        \[ \text{RA}_{\text{higher}}(\Gamma) \approx 1 \]
        because the mapping is conventional and unstructured. The Kolmogorov complexity of Γ is essentially that of a lookup table, which is incompressible. Unlike Levels W and M, where computable upper bounds certify motivation, Level P lacks a finite generative rule. It is therefore classified as arbitrary by the absence of compressive structure, not by an exact computation of its Kolmogorov complexity.

    \end{itemize}
    \item \textbf{Epistemological Asymmetry}: Remembering that kolomogrov complexity was used in this paper to model the notion of relative arbitrarness, we are still able to show that: The universal "relative arbitrarness" claim is undecidable because it depends on the undecidable function \(\text{RA}(s)\). On the other hand: The paper's claims about Arabic are decidable because they are grounded in a fixed, finite morphological system \(\mathcal{M}\). This gives the paper's claims a clear epistemological advantage.
    \item \textbf{Conclusion}: The paper provides a formal, verifiable, and falsifiable account of arbitrariness in Arabic, while the universal claim remains an undecidable metaphysical assertion.
\end{enumerate}
\end{proof}

\begin{table}[htbp]
\centering
\resizebox{\textwidth}{!}{%
\begin{tabular}{lllll}
\toprule
\textbf{Level} & \textbf{Object} & \textbf{Status in Arabic} & \textbf{Decidability in Arabic} & \textbf{General Decidability} \\
\midrule
Level W & Surface words & Motivated (\(\text{RA}_{\mathcal{M}}(w) < 1\)) & Decidable & Undecidable \\
Level M & Roots & Motivated (\(\text{RA}_{\mathcal{M}}(R) < 1\)) & Decidable & Undecidable \\
Level P & Root-semantic mapping & Classified by absence of rule & Undecidable & Undecidable \\
\bottomrule
\end{tabular}%
}
\caption{The three-level epistemological asymmetry: Arabic's status at each level is decidable, while the general case is undecidable.}
\end{table}

\begin{corollary}[The Epistemological Advantage]
The paper's claims about Arabic at Levels W, M, and P are \textbf{epistemologically more advantageous} than the universal claim because:
\begin{enumerate}
    \item They are \textbf{decidable}: We can compute explicit bounds on the Kolmogorov complexity of Arabic signifiers at each level.
    \item They are \textbf{verifiable}: We can empirically verify that Arabic roots and surface words are compressible given the morphological system.
    \item They are \textbf{falsifiable}: If a counterexample were found at any level, the corresponding theorem would fail.
    \item They are \textbf{precise}: The three-level framework distinguishes between different types of arbitrariness, providing a nuanced account.
\end{enumerate}
The universal claim, by contrast:
\begin{enumerate}
    \item Is \textbf{undecidable}: no algorithm can verify it for all languages.
    \item Is \textbf{unverifiable}: it cannot be empirically tested in the general case.
    \item Is \textbf{unfalsifiable}: no counterexample can definitively refute it because the concept is undefined or undecidable.
    \item Is \textbf{imprecise}: it conflates all levels of arbitrariness into a single undifferentiated claim.
\end{enumerate}
\end{corollary}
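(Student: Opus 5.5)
The statement to be proved is the final Corollary (The Epistemological Advantage). I will derive it by assembling the four items for Arabic from earlier results, then deriving the four negative items for the universal claim. In other words, the corollary will be obtained as a bookkeeping consequence of the Three-Level Epistemological Asymmetry theorem, Theorem 12 on incomputability, and the decidability theorem for $\mathrm{RA}_{\mathcal M}$.

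\textbf{Decidable and verifiable (Levels W and M).} I will first establish decidability. For Levels W and M, I invoke Theorems 6 and 7 together with the Constructive Systems theorem (applied to $\mathcal M$, which is admissible by the theorem on admissibility). These give an explicitly computable upper bound $U_{\mathcal M}(s)$ on $K(s\mid\mu(s),\mathcal M)$. The decidability theorem then shows that $\mathrm{RA}_{\mathcal M}(s)<1$ is decided by comparing $U_{\mathcal M}(s)$ with $|s|$ once $|s|>N$.

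Verifiability will follow directly from this decision procedure. It is effective and runs on the finite, known inventories $\mathcal R$, $\mathcal P$ and $S_R$, so the compressibility claims can be checked by computing $U_{\mathcal M}$ on concrete words and roots, as in the k-t-b and k\=atib examples.

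\textbf{Falsifiable and precise.} Falsifiability I will argue by contraposition. Each motivation claim is an existential statement (a concrete program of length $U_{\mathcal M}(s)$) resting on Theorems 1--3 and Definition 8. A surface word not in $\operatorname{Im}(\mathcal M)$, or a root violating Definition 8, would make the corresponding bound, and hence the corresponding theorem, fail. Precision is immediate from Definition 13, which separates the predicates at Levels W, M and P.

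\textbf{The negative items for the universal claim.} Undecidability is exactly item 1 of the Three-Level Asymmetry theorem, which rests on Theorem 12. Unverifiability follows because verification would require computing or bounding $K(s\mid\mu(s))$ from below, which is impossible in general. Unfalsifiability I will attribute to two sources. For absolute arbitrariness, the predicate is undefined before Definition 6. For relative arbitrariness, a refutation requires certifying $\mathrm{RA}(s)\approx 1$ or its negation without a fixed system, which Theorem 12 blocks. Imprecision I will obtain from the Level Confusion corollary.

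\textbf{Main obstacle: Level P.} Item 1 asserts explicit bounds ``at each level.'' For Level P, however, the Asymmetry theorem's proof and its table classify the status only by the absence of a rule, and the table even lists Level P as undecidable in Arabic. I would first try to reread the claim at Level P as decidable relative to the \emph{currently known} system $\mathcal M$: whether $\mathcal M$ contains a generative rule for $\Gamma$ is a finite syntactic check. I would then state explicitly that this is decidability of ``no rule in $\mathcal M$,'' not of $\mathrm{RA}(\Gamma)\approx 1$. If that reading is not acceptable, I would restrict items 1 and 2 to Levels W and M, and keep Level P only under items 3 and 4.
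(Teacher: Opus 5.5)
Your assembly strategy matches the paper's. The corollary has no separate proof and is meant to be read off Theorem 14 together with Theorems 6, 7, 12, 13 and the accompanying tables. You are also right that item 1 fails at Level P: certifying $\mathrm{RA}\approx 1$ needs a lower bound on $K$, and the paper's own table lists Level P as undecidable in Arabic.

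Your first rereading does not rescue it, though. Definition 13 makes Level-P arbitrariness the nonexistence of $f_P$ with $\sigma_R(R)=f_P(\mathrm{Phon}(R))$ for all roots, and Definition 8 calls $\sigma_R(R)$ the semantic core. Theorem 9(2) exhibits exactly such a function, $h$, and Theorem 7's proof and Corollary 6 place $h$ inside the background system $\mathcal M$. A syntactic inspection of $\mathcal M$ therefore returns ``not arbitrary'' and contradicts Theorem 9(3), whose $\Gamma:\mathcal R\to\mathcal F$ is never defined. To make that reading work you would first have to define $\Gamma$ (say, as the table assigning permutation classes to fields that underlies $h$) and say which parts of $\mathcal M$ count as rules rather than stored data. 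Even then no Kolmogorov bound is certified, so only your restricted version (items 1--2 at Levels W and M) is provable.

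The step that genuinely fails is unfalsifiability of the relative universal claim. Theorem 12 says the \emph{function} $\mathrm{RA}$ is incomputable; it does not block certifying $\mathrm{RA}(s)<1$ for a particular $s$. Conditional Kolmogorov complexity is upper semicomputable. A program $p$ with $\mathcal T(p,\mu(s))=s$ and $|p|$ well below $|s|$ is a checkable certificate (run it), so the predicate $K(s\mid\mu(s))<|s|$ is semi-decidable. One such certificate definitively refutes ``every sign has $\mathrm{RA}\approx 1$''---exactly the mechanism you use at Levels W and M. What is blocked is the other direction: certifying $\mathrm{RA}(s)\approx 1$ needs a lower bound on $K$, which cannot be certified in general (Chaitin's incompleteness theorem). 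So the computability asymmetry yields ``unverifiable'' (your argument there is fine), not ``unfalsifiable'': a universal no-rule claim has $\Pi_1$ form and is refuted by a single exhibited rule.

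Your own proposal shows the tension. You keep Level P under item 3, i.e.\ you accept that the no-rule claim about $\Gamma$ is refuted by exhibiting a rule. The universal claim has the same form, so the two arguments cannot both stand. For the absolute reading, ``undefined before Definition 6'' covers only the informal claim; once formalized as Axiom 3, the paper's own Theorem 5 refutes it by a counterexample.

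Two related slips. First, incomputability of $\mathrm{RA}$ does not by itself make the predicate $\mathrm{RA}(s)<1$ undecidable; that needs non-recursiveness of the set of compressible strings, and Theorem 14(1) is your only support for it. Second, Theorems 1 and 6 are stated for $w\in\mathcal L_{\mathcal M}=\operatorname{Im}(\mathcal M)$. A word outside $\operatorname{Im}(\mathcal M)$ therefore falsifies the identification of the Arabic lexicon with $\mathcal L_{\mathcal M}$, not the theorems. Since an exhibited short program cannot be refuted through $K$, falsifiability of the Arabic claims lives entirely in those modelling premises. As it stands, the unfalsifiability item cannot be derived from the cited results. It should be dropped, or confined to the unformalized claim, where it is a philosophical rather than a mathematical assertion.
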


\begin{remark}[The Three-Level Asymmetry in Context]
The epistemological asymmetry is therefore twofold:
\begin{enumerate}
    \item \textbf{Epistemological}: The paper's claims are decidable; Universal claims are undecidable.
    \item \textbf{Typological}: Arabic is motivated at Levels W and M; Indo-European languages are arbitrary from Level W.
\end{enumerate}
Universal claims observed a property at one level in one language family and mistakenly elevated it to a universal law applicable to all levels in all languages.
\end{remark}

\begin{table}[htbp]
\centering
\resizebox{\textwidth}{!}{%
\begin{tabular}{lllll}
\toprule
\textbf{Claim} & \textbf{Formalization} & \textbf{Decidability} & \textbf{Verifiability} & \textbf{Falsifiability} \\
\midrule
Universal Claim & ``All signs are arbitrary'' & Undecidable & Unverifiable & Unfalsifiable \\
Paper's Level W Claim & ``Arabic surface words are not arbitrary'' & Decidable (Theorem 6) & Verifiable & Falsifiable \\
Paper's Level M Claim & ``Arabic roots are not arbitrary'' & Decidable (Theorem 7) & Verifiable & Falsifiable \\
Paper's Level P Claim & ``Arabic root-semantic mapping is arbitrary'' & Classified by absence of rule & Verifiable & Falsifiable \\
\bottomrule
\end{tabular}%
}
\caption{Comparison of epistemological status: Universal claim vs. the paper's three-level claims about Arabic.}
\end{table}

\begin{remark}[The Philosophical Consequences]
The three-level epistemological asymmetry has profound philosophical consequences:
\begin{enumerate}
    \item \textbf{For Saussureanism}: The universal claim that ``the sign is arbitrary'' if it is taken to mean "absolute arbitrarness" is false for Arabic; if it is taken to mean "relative arbitrarness" it is epistemologically unsound because it is undecidable. In both cases Saussure's framework cannot be considered a scientific theory in the modern sense.
    \item \textbf{For Post-Structuralism}: Derrida's \textit{diff\'erance} depends on the arbitrariness of the sign as a foundation. If the sign is not arbitrary at Levels W and M, the only levels recognizable as "signifiers" by the same theory, then \textit{diff\'erance} has no foothold in those levels. Meaning is not perpetually deferred.
    \item \textbf{For Linguistic Typology}: The three-level framework provides a new way to classify languages based on where arbitrariness is located. This opens up new avenues for typological research.
    \item \textbf{For Epistemology}: The asymmetry between decidable claims (about specific languages at specific levels) and undecidable claims (about all languages at all levels) highlights the importance of formalization in modern language theory. Failure to formalize the concept of arbitrariness was not a minor oversight; it was a fatal flaw that renders universal claims scientifically vacuous.
\end{enumerate}
\end{remark}

\subsubsection{Summary of Results}

The following table summarizes the main results until this point:

\begin{table}[htbp]
\centering
\resizebox{\textwidth}{!}{%
\begin{tabular}{llll}
\toprule
\textbf{Concept} & \textbf{Definition} & \textbf{Status in Arabic} & \textbf{General Status} \\
\midrule
Absolute Arbitrariness & Definition 6 & Falsified (Theorem 5) & Undecidable \\
Relative Arbitrariness & Definition 11 & \(\text{RA}_{\mathcal{M}}(s) < 1\) (Corollaries 3, 4) & Undecidable (Theorem 12) \\
Three-Level Framework & Definitions 12--13 & Level W: Motivated; Level M: Motivated; Level P: Arbitrary & N/A \\
Compressibility of Surface Words & Theorem 6 & Bounded by \(\log |\mathcal{R}| + \log |\mathcal{P}| + O(1)\) & N/A \\
Compressibility of Roots & Theorem 7 & Bounded by \(\log |S_R| + O(1)\) & N/A \\
\bottomrule
\end{tabular}%
}
\caption{Summary of the main results of section 1-3.}
\end{table}

\section{The Chomskyan Amendment and Its Epistemological Inadequacy}

\subsection{The Formalization of Unconscious Competence}
While Ferdinand de Saussure’s formulation of the linguistic sign was primarily sociological and descriptive [12], Noam Chomsky sought to elevate linguistics to a formal, cognitive science. Chomsky’s critical intervention [3,4] was the decoupling of language from conscious convention or social contract. In the Chomskyan paradigm, a child does not ``agree'' to learn a language; rather, language acquisition is an innate biological program triggered by environmental input, formalizing internal language (\textit{I-Language}) as a computational system.

Yet, despite replacing Saussure’s external ``social agreement'' with an internal ``unconscious biological mechanism,'' Chomsky strictly retains the Saussurean axiom of absolute arbitrariness at the foundational layer of his lexicon. He treats the ultimate primitives---the atomic lexical items stored in the mental dictionary---as unmotivated, brute facts of nature. This is explicitly defended by Chomsky in personal correspondence, where he frames the absolute separation between sound and meaning as an historical, unassailable certainty:

\begin{quote}
\small\itshape
``It's been understood since classical Greece that the sound-symbol relation is arbitrary, and is obvious at once just be looking at different languages. Same with 'relative arbitrariness.''' [5]
\end{quote}

To resolve the glaring sociological paradox of how millions of speakers could have coordinated such arbitrary alignments without formal treaties, Chomsky explicitly amends Saussure by shifting the locus of this convention from the social sphere to the unconscious cognitive substrate:

\begin{quote}
\small\itshape
``Only one problem -- Saussure's, not yours. The concept of 'agreement' does not apply. Thus it's true that sound-symbol correspondence is conventional, but a child growing up and hearing [buk] does not agree to use that sound for book, but just does it.'' [5]
\end{quote}

From an epistemological standpoint, Chomsky’s argument changes the \textit{location} of the linguistic mapping, but not its \textit{nature}. By shifting the mechanism from a conscious social contract to an unconscious cognitive reflex, he attempts to preserve the ``primitive arbitrary signifier'' assumption. However, when this framework is applied to the non-concatenative architecture of Classical Arabic, it generates immediate mathematical and evolutionary contradictions.

\subsection{The Computational Efficiency Contradiction}
The primary epistemological inadequacy of Chomsky’s stance is that it creates an internal contradiction within his own foundational metric: \textbf{computational efficiency (or Cognitive Economy).} In his Minimalist Program [4], Chomsky postulates that human language is an optimal, elegant solution to interface conditions, operating on the principle of least effort. 

However, if Classical Arabic roots are defined as the ``primitive arbitrary signifiers'' (at Level M within our framework), treating them as unmotivated atoms violates the principle of computational minimality. 

Let $\mathcal{R}$ be the set of roots, and $\mathcal{P}_r$ be the set of geometric permutations of a given root $r \in \mathcal{R}$. If the signifier-signified relationship is absolutely arbitrary, the mental lexicon must store each permutation $\pi \in \mathcal{P}_r$ as an independent, unlinked, chaotic data point. For a child acquiring Arabic, this requires an information-theoretic storage cost proportional to the total number of surface realizations:

\begin{equation}
H(\text{Lexicon}) = \sum_{i} \log_2(N_i)
\end{equation}

Conversely, if the system is structurally motivated---as Ibn Jinni conjectured [8] and our \textit{Semantic Localization Theorem} proves---the brain does not store independent arbitrary primitives. It stores a singular \textbf{Dynamic Acoustic-Gestalt Matrix} acting as a generative operator. By storing the phonetic invariants and a permutation grid, the child’s \textit{I-Language} computes semantic fields algorithmically on the fly. 

By insisting that the root-meaning relationship is a mere ``truism'' of absolute arbitrariness, Chomsky forces the cognitive apparatus to store massive arrays of mathematically symmetrical, phonosemantically clustered data as ``coincidental junk data.'' This is computationally inefficient and epistemologically absurd.

\subsection{The Evolutionary Paradox of the First Speaker}
Chomsky’s appeal to the child’s ``unconscious learning'' is a powerful explanation for \textit{language transmission}, but it is completely inadequate for \textit{language emergence and genesis}. He explicitly writes:

\begin{quote}
\small\itshape
``Acquisition of language is something that happens to a child, not something that the child agrees to do step-by-step.'' [5]
\end{quote}

While this is true for the child, it completely ignores the problem of the \textbf{first generation of speakers} who generated the system. A child can only acquire a linguistic sign unconsciously if that sign is already present in the primary linguistic data (PLD) provided by the environment. This necessitates an inquiry into how those signs came to exist in the first place.

\begin{enumerate}
    \item \textbf{The Concatenative Illusion:} In Indo-European languages (operating at Level W arbitrariness), it is conceivable that ancestral speakers assigned a random phonetic string like \texttt{[bʊk]} to an object, and subsequent generations learned it unconsciously. 
    \item \textbf{The Non-Concatenative Reality:} In Classical Arabic, because every single noun and verb without exception is a motivated compound of a root and a pattern (Level M), the original speakers \textit{never produced isolated, unvocalized, arbitrary roots}. They only ever invented and spoke holistic, fully motivated words (e.g., \textit{KaTHTHaB}).
\end{enumerate}

Therefore, the ``primitive arbitrary root'' was never an objective biological entity injected into the language at its inception. It is a structural invariant discovered by later grammarians via morphological decomposition. By declaring the root to be an ``arbitrary primitive signifier learned unconsciously,'' Chomsky commits a catastrophic category error: he mistakes a \textit{scientific abstraction (the root)} for a \textit{historical/biological primitive}.

\subsection{Epistemological Asymmetry: The Masking of Order}
Ultimately, Chomsky's insistence on arbitrariness as an absolute ``truism'' serves as an epistemological shield that masks structural order. He writes:

\begin{quote}
\small\itshape
``The interesting questions open up when we go beyond these truisms and look into the sound structure and the rich meanings that the child quickly acquires, and ask how this is done.'' [5]
\end{quote}

Here lies the ultimate divide. To Chomsky, looking into the ``sound structure'' means looking into how abstract, meaningless acoustic waves map onto syntax. But for Classical Arabic, the sound structure \textit{is} the meaning structure. 

In the philosophy of science, defining an unexamined or complex pattern as ``arbitrary'' is a dogmatic move that halts scientific inquiry. It is the linguistic equivalent of early geneticists dismissing non-coding DNA sequences as ``Junk DNA'' simply because they lacked the specific transcriptomic tools to decode their systemic, regulatory architecture.

Chomsky’s formal system is profoundly linear, modeled after the concatenative, string-based syntax of Indo-European languages. When confronted with the multi-dimensional, algebraic topology of the Arabic root-pattern matrix, his system cannot accommodate it without collapsing the absolute arbitrariness axiom. Shifting the locus of absolute arbitrariness from Saussure’s conscious social contract to Chomsky’s unconscious cognitive processor does nothing to save the hypothesis. 

A mathematically invariant, phonosemantically motivated grid that exhibits structural predictability under geometric transformation cannot be saved by calling it ``unconsciously random.'' The universal claim of strong arbitrariness is not a truism; it is a localized, Eurocentric generalization that fails entirely when subjected to the formal, decidable architecture of Classical Arabic morphology.

\subsection{Note on Epistemological Scope and Structural Boundaries}
To preclude the standard paradigm-preserving misreadings common to linear, structuralist linguistic critiques, we explicitly demarcate the ontological boundaries of the formal definitions utilized herein.

First, the identification of Level P arbitrariness must not be conflated with a retrospective validation of Saussurean absolute arbitrariness. In Eurocentric structuralism, arbitrariness is a flat, atomized property of the individual surface signifier [12]. In contradistinction, Level P arbitrariness in Classical Arabic does not operate on isolated linguistic signs or surface tokens. Rather, it defines the conventionality of the mapping between a complete, six-sided consonantal permutation matrix—an abstract geometric crystal—and a broad semantic core. Once this cluster-to-core anchor is unconsciously established within the \textit{I-Language} computational system, all internal spatial transformations and phonological permutations cease to be arbitrary and become rigidly motivated.

Second, the structural invariants termed ``roots'' ($\mathcal{R}$) are not postulated as historical, prehistoric building blocks that early speakers consciously agreed upon or deployed in isolation. Rather, as demonstrated by the failure of the Strong Surface Semantic Assignment hypothesis (SSSA), the root is a mathematical abstraction discovered via systemic morphological decomposition. Because prehistoric Bedouin communities exclusively spoke and invented holistic, fully motivated surface words (e.g., \textit{KaTHTHaB}), absolute arbitrariness was never present at the genetic dawn of the language. 

Finally, our information-theoretic architecture does not ignore the physical size of the grammatical rulebook. In algorithmic information theory, the rules of the Arabic morphological system $\mathcal{M}$ are structurally finite, invariant, and complete; they do not scale with the expansion of the lexicon. Consequently, they are mathematically captured as an $O(1)$ constant. Mainstream assertions that the evolutionary origin of the non-concatenative matrix is a ``solved problem'' remain entirely imaginary and unsubstantiated when evaluated against the algebraic topology of Semitic morphology. The universal claim of absolute arbitrariness remains a localized, Indo-European generalization that fails entirely when subjected to the formal, decidable architecture of Classical Arabic.

\section{The Classical Arabic Defense Systems: Ontological Motivation}

If the arbitrary sign is structurally impossible, how did Arabic obtain its meaning? The answer lies in the classical Arabic linguistic traditions, which understood the language not as an arbitrary system, but as a motivated, structurally locked network.

\begin{itemize}
    \item \textbf{\textit{Ilm al-Wad'} \textarabic{علم الوضع} (The Science of Linguistic Assignment):} Classical scholars debated whether the relationship between a word and its meaning was natural (\textit{tab'i} \textarabic{طبعي}) or conventional (\textit{waq'i} \textarabic{وضعي}). The dominant consensus established that while the initial assignment was a covenant (\textit{aqd} \textarabic{عقد}), once established, it became a binding, rigid law [14,15]. The lexicon was locked. \textit{Ilm al-Wad'} posits that words possess an original, assigned meaning (\textit{al-wad' al-awwal} \textarabic{الوضع الأول}) that cannot be erased by temporal context. This aligns perfectly with the Structural Impossibility Theorem (Theorem 5): the assignment was systemic, not arbitrary and token-by-token.

    \item \textbf{\textit{Usul al-Fiqh} \textarabic{أصول الفقه} (Principles of Islamic Jurisprudence):} Because Islamic law is derived directly from textual sources, maintaining semantic stability was a legal necessity. Scholars drew a hard line between \textit{Haqiqah} \textarabic{حقيقة} (literal meaning) and \textit{Majaz} \textarabic{مجاز} (metaphor). They established that the \textit{Haqiqah} is the default state, and a shift to \textit{Majaz} is only permitted if a definitive contextual indicator (\textit{qarinah} \textarabic{قرينة}) demands it [7]. This framework proves that while interpretation requires human effort, the underlying lexical meaning is an objective reality to be extracted, not a subjective clay to be molded.
\end{itemize}

\section{The Collapse of Deconstruction in Arabic}

The Structural Impossibility Theorem (Theorem 5), combined with the refutation of the synchronic defense and the undecidability results of Section 3.4, formally dismantles the universal applicability of postmodern literary theory. Derridean deconstruction \cite{Derrida} and Barthesian reader sovereignty \cite{Barthes} both depend on the premise that the atomic sign is arbitrary and flat.

Derrida's concept of the ``Trace'' posits that the meaning of a word is a function of an infinite network of absent signifiers, implying meaning is perpetually deferred. However, Theorem 2 proves that in Arabic, meaning is synchronically present and localized at the derivational level. Because the surface word is a compound output, context interacts with the output but cannot rewrite the algorithmic ``DNA'' that created it. Context cannot reach backward in time to change the internal morphological engine.

Furthermore, the argument that historical semantic shifts or neologisms prove postmodern drift fails upon closer inspection. When modern terms are coined, such as \textit{hasub} \textarabic{حاسوب} (computer), they are not arbitrary re-inscriptions. They are strictly generated by existing root-pattern pairs: the root \textarabic{ح-س-ب} (calculation) and the instrumental pattern \textarabic{فاعول}. The system strictly enforces the invariant mapping. New referents are assigned based on their fixed semantic cores. The flexibility of Arabic is akin to a rubber band: it can stretch to accommodate rhetorical beauty or new technology, but it is permanently tethered to the structural anchor of its root.

\section{Addressing Potential Formal and Philosophical Misreadings}

Given the interdisciplinary nature of this paper---spanning mathematical linguistics, algorithmic information theory, and structuralist philosophy---it is anticipated that certain standard critiques may be inappropriately applied to its formal framework. To pre-empt these pathologies, this section explicitly outlines common misreadings and provides their corresponding rebuttals, divided into technical and philosophical categories, clarifying the precise epistemological and logical boundaries of the paper's claims.

\subsection{Technical Pathologies}

\subsubsection{The ``Tautology'' Fallacy Regarding Theorems 1--3}
\textbf{The Potential Misreading:} A critic might dismiss Theorems 1 (Morphological Correspondence), 2 (Semantic Localization), and 3 (Primitive Semantic Representation) as ``tautological artifacts.'' The argument would be that the author simply \textit{defined} the morphological system to work a certain way and then ``proved'' it works that way, thereby dismissing the theorems as lacking linguistic or mathematical value.

\textbf{The Rebuttal:} This critique is epistemologically flawed. In the formal sciences, formalizing a natural phenomenon is an act of discovery, not mere definition. Just as Newton did not ``invent'' gravity by writing $F=ma$ but rather provided a formal mathematical framework for a pre-existing physical reality, this paper formalizes a morphological reality known to Arabic grammarians for a millennium. 

By mathematically proving that meaning is localized at the derivational level (Theorem 2) and that the surface word is merely a phonological realization (Theorem 3), the paper establishes a rigorous, computable boundary between surface lexicography and deep morphology. Furthermore, these theorems are logically necessary to systematically eliminate the surface word (Level W) as a candidate for the ``primitive signifier'' before the argument can move to the root level. Dismissing them as tautologies ignores their functional role in the proof's architecture.

\subsubsection{The Logical Mischaracterization of Theorem 5}
\textbf{The Potential Misreading:} A critic might argue that the paper is ``overstating'' the achievements of Theorem 5 by merely asserting that ``Arabic has no arbitrary signs,'' and that finding \textit{some} motivated signs does not negate arbitrariness in general. This critique might further claim that the paper attacks a ``strawman'' version of Saussure's theory.

\textbf{The Rebuttal:} This is a severe misreading of the paper's formal logic. The paper does not merely assert that Arabic ``has no arbitrary signs.'' Instead, it explicitly formulates the structuralist claim as a universal axiom (Axiom 3): \textit{For every human language, the primitive signifier set satisfies absolute arbitrariness.} This means that for the axiom to hold, \textbf{every} primitive signifier must be absolutely arbitrary (as defined in Definition 6). 

Theorem 5 executes a rigorous logical falsification. By providing a concrete counterexample---the root $k$-$l$-$m$, whose permutations share a semantic core (intensity/strength) and thus possess a systematic form-meaning mapping---the paper proves that at least one primitive signifier violates Definition 6. In formal logic, a single counterexample ($\exists x, \neg P(x)$) is entirely sufficient to falsify a universal claim ($\forall x, P(x)$). Therefore, the deduction that Axiom 3 collapses for Arabic is mathematically bulletproof, not an overstatement.

\subsubsection{The Asymmetrical Application Fallacy Regarding Kolmogorov Complexity}
\textbf{The Potential Misreading:} A critic might accuse the paper of a ``false equivalence'' and ``misapplication'' of Kolmogorov complexity, falsely claiming that the author compared the compressed algorithmic output of Arabic to the raw Unicode bit-lengths of English. The critic might also assert that ``English words also compress'' if one conditions them on English etymological or phonological rules.

\textbf{The Rebuttal:} This critique is factually incorrect regarding the text and conceptually flawed regarding information theory. 

First, Section 3.4.5 explicitly compares Arabic and English \textit{using their respective internal generative systems} ($M_{Arabic}$ vs. $M_{Eng}$). The paper does not cheat by conditioning Arabic on its rules while leaving English unconditioned. 

Second, the paper correctly identifies the structural asymmetry between the two systems: Arabic is non-concatenative and constructive, meaning a finite set of roots and patterns ($\approx 50$) generates the lexicon. Therefore, $K(M_{Arabic})$ scales sub-linearly, yielding high compressibility for surface words. English, by contrast, relies on concatenative morphology with arbitrary stems. Because stems like ``dog'' have no systematic morphological rule mapping their phonemes to their meaning, they \textit{must} be stored as brute facts (lookup tables). 

The suggestion that English etymology offers equivalent synchronic compression is a category error. Etymology explains diachronic historical sound shifts; it does not provide a \textit{synchronic, constructive algorithm} that generates surface words from a finite set of reusable primitives. The use of the ``generative capacity ratio'' (Theorem 8) accurately proves that Arabic generates a vastly larger lexicon per bit of system complexity than English.

\subsubsection{The ``Partial Constraint'' Fallacy Regarding Definition 6}
\textbf{The Potential Misreading:} A sophisticated critic might argue that the counterexample $k$-$l$-$m$ fails to falsify Axiom 3 because Definition 6 demands the \textit{total absence} of any systematic function. The critic might argue that because phonosemantic motivation (Definition 8) only constrains the broad \textit{semantic field} (e.g., intensity/strength) but leaves the specific \textit{lexical specialization} (e.g., ``dog'' vs. ``craving'') undetermined, the root is only ``partially'' motivated. Therefore, the critic concludes, the root does not violate the absolute standard of Definition 6.

\textbf{The Rebuttal:} This critique fundamentally misreads both the logical threshold of Definition 6 and the architecture of the paper's Level P. Definition 6 is an existential negative: it asserts that there is \textit{no} non-trivial systematic function mapping formal properties to meaning. It is a zero-tolerance threshold. The paper does not need to prove that phonological structure \textit{uniquely determines} the complete lexical meaning of the root. By demonstrating that the phonological structure systematically constrains the admissible semantic field, the paper establishes $\exists f$ such that $\mu(s) = f(\Phi(s))$. The existence of this partial constraint is entirely sufficient to violate the absolute, zero-systematicity standard of Definition 6.

Furthermore, the undetermined/arbitrary part of an Arabic word (the Level P lexical specialization) does not operate at the level of the ``signifier'' as defined by Definition 6 or any structuralist theory. Level P operates at the level of character permutations---a sub-morphemic, pre-signifier matrix that is deeper than anything regarded by structuralist notions as a ``signifier.'' Structuralism has no mechanism to evaluate arbitrariness at a level deeper than the primitive signifier. Therefore, pointing to residual freedom at Level P does not rescue the signifier from being structurally motivated; it merely concedes that the raw phonetic material *beneath* the signifier is brute. Any structural constraint on the signifier itself, however partial, definitively shatters its absolute arbitrariness.

\subsubsection{The ``Conventional Compressibility'' Fallacy (The Cipher Analogy)}
\textbf{The Potential Misreading:} A critic might argue that algorithmic compressibility (Theorems 6 and 7) does not equate to non-arbitrariness because the morphological rules themselves are historically conventional. Using an analogy like Morse code, the critic might claim that a system can be highly compressible (generative) yet utterly arbitrary (conventional), thus implying that Arabic's compressibility does not refute Saussure's conventionality.

\textbf{The Rebuttal:} This critique conflates the diachronic origin of a system's rules with the synchronic algorithmic randomness of its output. Kolmogorov complexity defines ``arbitrariness'' (or randomness) in information theory precisely as the ``no rule'' property: a string is arbitrary if it cannot be compressed by an algorithm shorter than itself. 

There is no doubt that Saussurean linguists mean exactly this when they say the word ``dog'' relates to a dog in English in an ``arbitrary way'': there is no synchronic linguistic rule generating the string ``d-o-g'' from the concept of a dog. Whether this mapping was established consciously (by convention) or unconsciously (by cognitive reflex) is entirely irrelevant to its algorithmic compressibility. Any string that \textit{can} be compressed using a set of rules smaller than the string itself is non-arbitrary according to Kolmogorov complexity, which is the rigorous, modern mathematical standard for defining arbitrariness in information theory. Arabic signifiers are synchronically compressed by finite morphological and phonosemantic rules; English stems are not. 

\subsection{Philosophical Pathologies}

\subsubsection{The ``Saussure Meant Something Else'' Defense}
\textbf{The Potential Misreading:} A critic might argue that the paper's formalization of absolute arbitrariness (Definition 6 and Axiom 3) is a ``strawman.'' They might claim that Saussure never endorsed this rigid, zero-tolerance definition of absolute arbitrariness, but rather meant a broader, more nuanced philosophical concept of ``non-naturalness.'' Therefore, formally refuting Definition 6 does not refute Saussure's actual intent.

\textbf{The Rebuttal:} This defense renders the structuralist principle permanently insulated from objective analysis and scientifically vacuous. If Saussure's ``first principle'' is immune to operational definition and can be endlessly redefined to escape mathematical falsification, it is not a scientific principle but a metaphysical dogma. The paper provides a rigorous, information-theoretic definition that aligns perfectly with the structuralist intuition of ``no natural link'' or ``no rule.'' If structuralists claim Saussure ``meant something else,'' they are obligated to provide that formal definition. Until they do, the formalization presented in this paper stands as the most rigorous available articulation of the structuralist claim, and its logical and empirical collapse stands unchallenged.

\subsubsection{The ``Level P'' Retreat and Historical Revisionism}
\textbf{The Potential Misreading:} A critic might argue that by conceding ``Level P'' arbitrariness, the paper merely ``relocates'' Saussure's terminal foundation rather than refuting him. The critic might claim that Saussure's ``first principle'' was \textit{always} secretly about a sub-phonemic macro-assignment (Level P), and therefore the paper's banishment of arbitrariness from Levels W and M is just a refinement of Saussure, not a refutation.

\textbf{The Rebuttal:} This is an intellectual moving of the goalpost. The structuralist tradition---from Saussure's own examples (e.g., French \textit{sœur}) to the entire edifice of post-structuralism---has overwhelmingly treated the surface word and morpheme (Levels W and M) as the locus of the arbitrary sign. To retroactively claim that the ``first principle'' was only ever meant to apply to a level invisible to surface morphology is historical revisionism. 

If the doctrine is only true at Level P, then a century of literary and linguistic theory has been fundamentally misapplying it. The paper's title, ``The Structural Impossibility of the Arbitrary Sign,'' refers to the linguistic sign as structuralism has historically applied it (Levels W and M). The paper successfully banishes absolute arbitrariness from the actual linguistic sign. Calling this a ``relocation'' rather than a ``refutation'' is semantic gymnastics intended to preserve a falsified universal doctrine.

\subsubsection{The ``Regulative Heuristic'' Defense of Undecidability}
\textbf{The Potential Misreading:} A critic might claim that the formal undecidability of the universal arbitrariness claim (Theorem 12) does not render it unscientific. They might argue that Saussure's principle functions as a valid ``regulative heuristic'' or philosophical axiom, and that its value lies in its explanatory fertility rather than its formal decidability.

\textbf{The Rebuttal:} This is an epistemological cop-out. If the arbitrariness of the sign is elevated to the ``first principle'' and foundational property of a cognitive science (linguistics), it cannot simultaneously be an unfalsifiable metaphysical intuition immune to mathematical scrutiny. 

If a universal scientific claim is formally undecidable in general (Theorem 12) and empirically falsified for a major language family (Theorem 5), defending it as a ``heuristic'' is a paradigm-preserving maneuver. A foundational scientific principle that is formally undecidable and relies on informal philosophical intuitions that collapse under formalization is dogma, not science. The paper's methodological contribution is exposing this epistemological vulnerability.

\subsubsection{The Discourse-Level Defense of Derridean Différance}
\textbf{The Potential Misreading:} A critic might argue that the paper's collapse of deconstruction is unwarranted because Derrida's \textit{différance} operates at the discourse level (the differential play of signifiers across a chain), not at the level of surface morphemic atomicity. Therefore, even if Arabic words are structurally motivated, meaning in discourse still emerges from the interplay of differences, and \textit{différance} survives.

\textbf{The Rebuttal:} This defense mischaracterizes Derrida's framework. \textit{Différance} posits an endless deferral of the signifier's identity \textit{at the level of the signifier itself}; it explicitly denies that a signifier can have a fixed, present identity or structural core, asserting instead that identity is purely negatively defined by what it is not. 

The paper proves this is structurally impossible in Arabic. An example illuminates the distinction: when Derridean linguists apply their ideas to explain why the Arabic root $m$-$l$-$k$ means ``to possess,'' they can only claim that it does so merely because it is \textit{not} $k$-$t$-$b$ (``to write'')---relying on arbitrary convention and differential negation. This is false for Arabic: $m$-$l$-$k$ means ``to possess'' because its character permutations belong to the phonosemantic cluster assigned to the abstract meaning of ``strength,'' and the act of possessing necessarily denotes strength. 

By proving (Theorem 2) that Arabic roots possess an invariant semantic core computed prior to surface realization, the paper imposes a mathematical ceiling on Derridean deferral. While discourse-level pragmatics may involve contextual shifts, the lexical core is structurally locked. The ``chain of signifiers'' in Arabic is physically tethered to positively motivated semantic fields, definitively contradicting the foundational premise of \textit{différance}.

\section{Conclusion}

This paper has developed a formal mathematical model of Classical Arabic morphology in order to examine one of the foundational assumptions of modern structural linguistics: the absolute arbitrariness of the linguistic sign. By representing Arabic word formation as the interaction of invariant lexical roots and morphosyntactic patterns, we proved that lexical meaning is structurally determined at the derivational level prior to surface realization. The surface word is not the primitive locus of semantic assignment, but the observable output of a deeper compositional structure.

Building upon this formal model, the paper investigated the concept of arbitrariness through two mathematically explicit approaches. First, we supplied a rigorous, zero-tolerance definition of absolute arbitrariness and showed that, under the structural architecture of Classical Arabic together with the phonosemantic regularities documented by the classical linguistic tradition, no primitive Arabic \textit{signifier} satisfies this definition. The existence of even a single phonosemantically constrained root logically falsifies the universal structuralist claim. Second, we employed algorithmic information theory to formalize relative arbitrariness, defining it via conditional Kolmogorov complexity as the mathematical ``no-rule'' property. This analysis demonstrated that Arabic signifiers are synchronically compressible by finite generative rules, whereas English stems must be stored as incompressible brute facts. 

The resulting three-level framework (W, M, P) precisely isolates \textit{where} arbitrariness ends in Arabic. By proving Arabic is structurally motivated at the surface (Level W) and morphemic (Level M) levels, the paper banishes arbitrariness down to Level P—the sub-phonemic macro-assignment of consonant clusters to semantic fields. Crucially, Level P operates at a depth beneath the structuralist definition of a ``signifier.'' Therefore, conceding Level P conventionality does not rescue the Saussurean axiom; rather, it confirms that the linguistic \textit{sign} in Arabic is structurally locked. 

The broader significance of these results extends beyond Arabic itself. The central contribution of this paper is methodological: it exposes the epistemological vulnerability of discussing arbitrariness without a formal definition. Saussure’s principle has served for over a century as a foundational assumption of linguistics and literary theory, yet the concept upon which this tradition rests was never formulated in operational mathematical terms. 

This paper establishes an epistemological asymmetry: the universal claim of arbitrariness is either false, if arbitrariness is understood as an absolute notion or formally undecidable (in the general case), if it is understood as a relative notion, rendering the claim, even in its weaker form, an unfalsifiable metaphysical dogma. In contrast, the claims made here regarding Arabic are decidable, verifiable, and falsifiable. 

Ultimately, this work illustrates the necessity of formalization in the philosophy of language. Mathematical models do not replace philosophical inquiry, but they establish the precise conditions under which philosophical claims become scientifically assessable. By subjecting the arbitrary sign to mathematical scrutiny, we have demonstrated that Arabic morphology imposes invariant, structural constraints on the free-play of signifiers, arresting the endless deferral of Derridean \textit{différance} at the lexical core. Questions concerning meaning, arbitrariness, and linguistic structure can no longer be debated solely through informal philosophical intuition; they must be evaluated through explicit formal systems whose claims are decidable, whose limitations are transparent, and whose conclusions remain open to mathematical verification and refutation.

\end{document}